\documentclass[10pt,letterpaper]{article}
\usepackage[T1]{fontenc}
\usepackage{times}
\usepackage[letterpaper,top=1in,bottom=1in,left=1.25in,right=1.25in]{geometry}
\usepackage[authoryear,round]{natbib}
\usepackage{amsmath,amssymb,bm}
\usepackage{amsthm}
\newtheorem{proposition}{Proposition}
\newtheorem{definition}{Definition}
\usepackage{graphicx}
\usepackage{tikz}
\usetikzlibrary{arrows.meta,positioning,fit,backgrounds}
\usepackage{booktabs}
\usepackage{multirow}
\usepackage{tabularx}
\usepackage{xcolor}
\usepackage{colortbl}
\usepackage{makecell}
\usepackage{capt-of}
\usepackage[hidelinks]{hyperref}
\usepackage{url}

\definecolor{ourcol}{HTML}{B03A16}
\definecolor{wincol}{HTML}{1E7A4C}
\newcommand{\best}[1]{\textbf{#1}}
\newcommand{\snd}[1]{\underline{#1}}
\newcommand{\ours}[1]{\textbf{\textcolor{ourcol}{#1}}}
\newcommand{\zero}{\textbf{\textcolor{wincol}{0.0}}}

\title{Expandable Modality Experts for\\
Medical Vision--Language Learning}
\author{DM BAPPY\textsuperscript{1}, Eunsu Kim\textsuperscript{1}, Hanjoo Lee\textsuperscript{1},\\
Sung Hak Lee\textsuperscript{2}, Sangjeong Ahn\textsuperscript{1}\\[5pt]
\small\textsuperscript{1}Korea University\\
\small\textsuperscript{2}The Catholic University of Korea}
\date{}
\hypersetup{pdftitle={Expandable Modality Experts for Medical Vision--Language Learning},pdfauthor={DM BAPPY, Eunsu Kim, Hanjoo Lee, Sung Hak Lee, Sangjeong Ahn}}
\begin{document}
\maketitle
\begin{abstract}
Learning across medical imaging modalities requires making use of images
without paired text and accommodating new domains while retaining existing
representations. We propose a medical vision--language technique built
around an expandable bank of modality experts. Its Multiway Transformer
combines shared visual computation with deterministically routed feed-forward
experts and a separate text pathway. A modality-aware visual tokenizer supplies
targets for masked pretraining on 19.4 million unpaired image rows across seven
modalities; contrastive alignment then uses 3.23 million image--text pairs.
Additional fundus and optical coherence tomography experts learn from paired
data while the original image and text embedding paths remain frozen.
With a ViT-Large backbone, the proposed technique achieves the best or tied-best result in
24 of 25 benchmark settings across nine modalities: 10 of 11 zero-shot
classification benchmarks, all 10 linear probes, and all four retrieval
benchmarks. Mean balanced accuracy is 0.674 for zero-shot classification and
0.856 for linear probing, and mean bidirectional retrieval recall is 20.70\%.
A separate three-seed sequential experiment with matched adaptation data yields
fundus and OCT retrieval gains of 14.95 and 9.31 percentage points, with no
measured loss on preceding fundus or pathology tasks. The preservation argument
requires unchanged existing computation paths and inference conditions, and
feature-identity checks verify retention on evaluated inputs. The proposed technique provides
a modular approach to learning and expanding medical vision--language
representations.

\end{abstract}
\section{Introduction}\label{sec:introduction}
Medical representation learning must account for the visual differences among
radiography, CT, MRI, ultrasound, histopathology, dermatology, endoscopy, fundus
photography, and optical coherence tomography (OCT). A common language interface
can support text-prompted classification and image--text retrieval across these
domains, but its development faces two practical constraints. Many available
images have no paired text, and training on an additional modality can change
the representations used by existing tasks. We study an architecture that
addresses these constraints through unpaired pretraining and expandable
modality-specific capacity.

Image--text supervision alone provides an incomplete basis for this goal.
Paired medical data vary substantially across modalities, and their collection
can require expert annotation or the extraction of figure--caption pairs from
publications. The resulting training corpus need not reflect the distribution
of available clinical images. BiomedCLIP (BMC)~\citep{zhang2023biomedclip},
the BIOMEDICA baseline (BMCA)~\citep{biomedica2024},
ConceptCLIP~\citep{nie2026conceptclip}, and
UniMed-CLIP~\citep{khattak2024unimed} establish useful biomedical
image--text interfaces. Vision-only models such as
MerMED-FM~\citep{zhou2026mermed} support representation learning across
modalities but do not supply a native text interface. MUSK~\citep{xiang2025musk}
combines unpaired pretraining and paired alignment within pathology.
These approaches motivate separating visual learning from language alignment
while extending the training scheme across medical imaging domains.

Expansion creates a second difficulty. Updating shared parameters to learn a
new modality can alter the embeddings used by existing tasks. Rehearsal and
distillation can constrain such changes~\citep{yao2025retcop}, while progressive
architectures preserve earlier computations by freezing existing
paths~\citep{rusu2016progressive}. For a medical vision--language encoder, this
principle must account for the image encoder, text encoder, projection heads,
and inference conditions. Preserving only a subset of these components does
not establish that previous image--text similarities remain unchanged.

We introduce the proposed technique, which combines unpaired visual
pretraining with an expandable expert bank. The model builds on a Multiway
Transformer~\citep{wang2023beit3} with a Domain-Aware Mixture-of-Experts
(DA-MoE) in the visual feed-forward layers. Each image uses a known modality
identifier to select its expert, alongside a shared visual pathway. A separate
text pathway interacts with the image encoder through contrastive alignment.
A modality-aware VQ-KD tokenizer~\citep{peng2022beitv2} supplies targets for
masked-image pretraining. The seven-modality core learns from $19.4$M unpaired
image rows and $3.23$M image--text pairs. Fundus and OCT experts subsequently
learn from paired data with the existing embedding paths frozen; the final
nine-modality assembly uses $3.34$M pairs in total across alignment and expansion.

Our setting concerns learning one encoder across heterogeneous image sources;
it does not require patient-matched images across modalities. At inference,
each image is encoded through its modality route. We evaluate whether the
resulting embeddings support text-prompted classification, linear probing,
and cross-modal retrieval, and whether new experts learn useful representations
while earlier pathways are retained.

The study makes three contributions.
\begin{itemize}
\item \textbf{Learning across modalities from unpaired images and paired text.}
We combine a modality-aware visual tokenizer, masked pretraining, and
contrastive alignment within one routed architecture. This separates the use
of image-only data from the image--text supervision required for alignment.
\item \textbf{Expansion through frozen existing paths.}
We formalize the conditions under which adding modality experts preserves
the original image and text embedding functions. We distinguish a sequential
experiment with matched adaptation data from the final assembly of
independently trained full-data experts.
\item \textbf{Evaluation of transfer, adaptation, and retention.}
We report $25$ benchmark settings across nine modalities, controlled ViT-B
comparisons, and three-seed adaptation results. The proposed technique\ is best or tied-best in
$24$ of the $25$ main settings. Feature-identity checks and retrieval tests
assess retention on evaluated inputs.
\end{itemize}

Section~\ref{sec:method} describes the architecture and objectives.
Sections~\ref{sec:evaluation} and~\ref{sec:benchmarks} present the evaluation
design and main benchmarks. Section~\ref{sec:expansion-results} separates
adaptation from retention, and Sections~\ref{sec:abl} and~\ref{sec:discussion}
examine the supporting evidence and its limits. Qualitative image comparisons
are collected in the appendices.

\section{Modality experts for medical vision--language learning}\label{sec:method}
The proposed technique\ has three ingredients: a Multiway backbone whose feed-forward stack is a
\emph{Domain-Aware Mixture-of-Experts} (\S\ref{sec:damoe}); a two-stage
training pipeline (\S\ref{sec:train}); and a continual-expansion rule that preserves
existing embedding functions under explicit conditions (\S\ref{sec:cl}).
Throughout, $m\!\in\!\mathcal{M}$ denotes an
input's modality. Figure~\ref{fig:pipeline} gives the end-to-end overview.

\begin{figure}[tbp]
\centering
\includegraphics[width=0.98\linewidth]{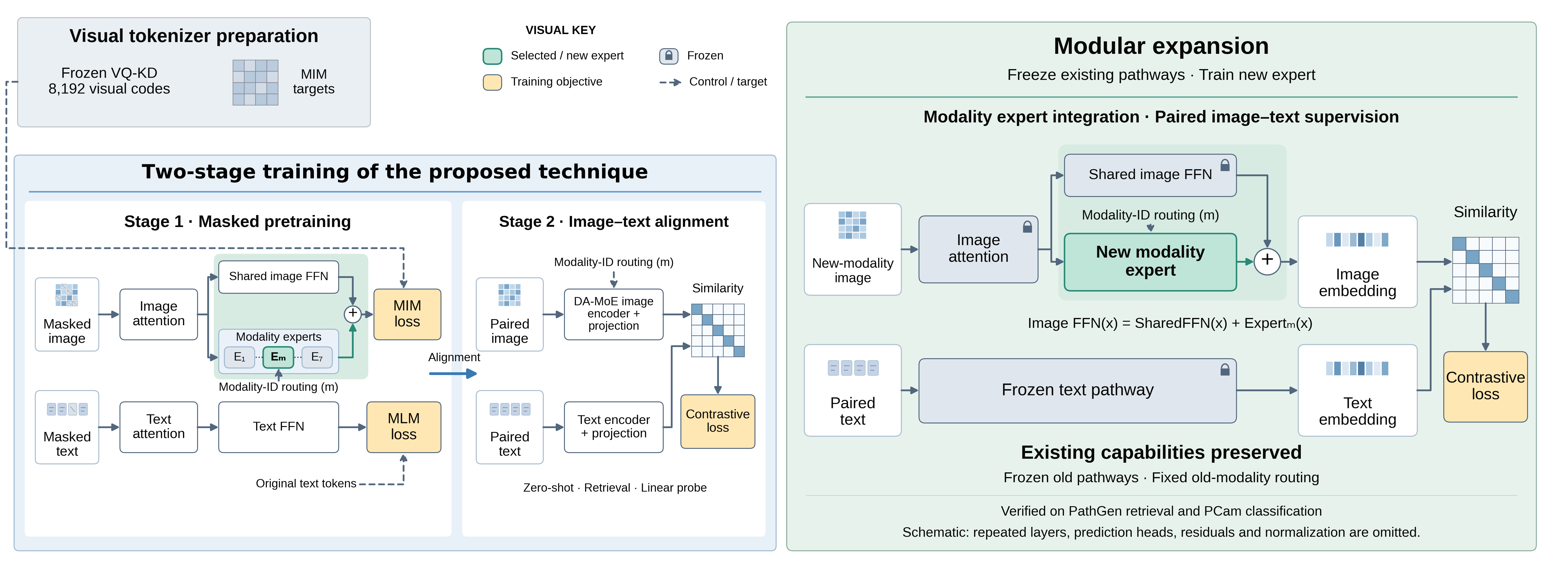}\\[2pt]
{\footnotesize (a) the proposed technique\ pipeline}\\[7pt]
\setlength{\tabcolsep}{0pt}
\begin{tabular}{@{}c@{\hspace{14pt}}c@{}}
\includegraphics[height=2.12in]{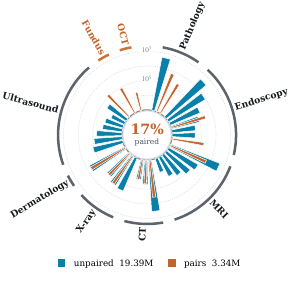} &
\includegraphics[height=2.12in]{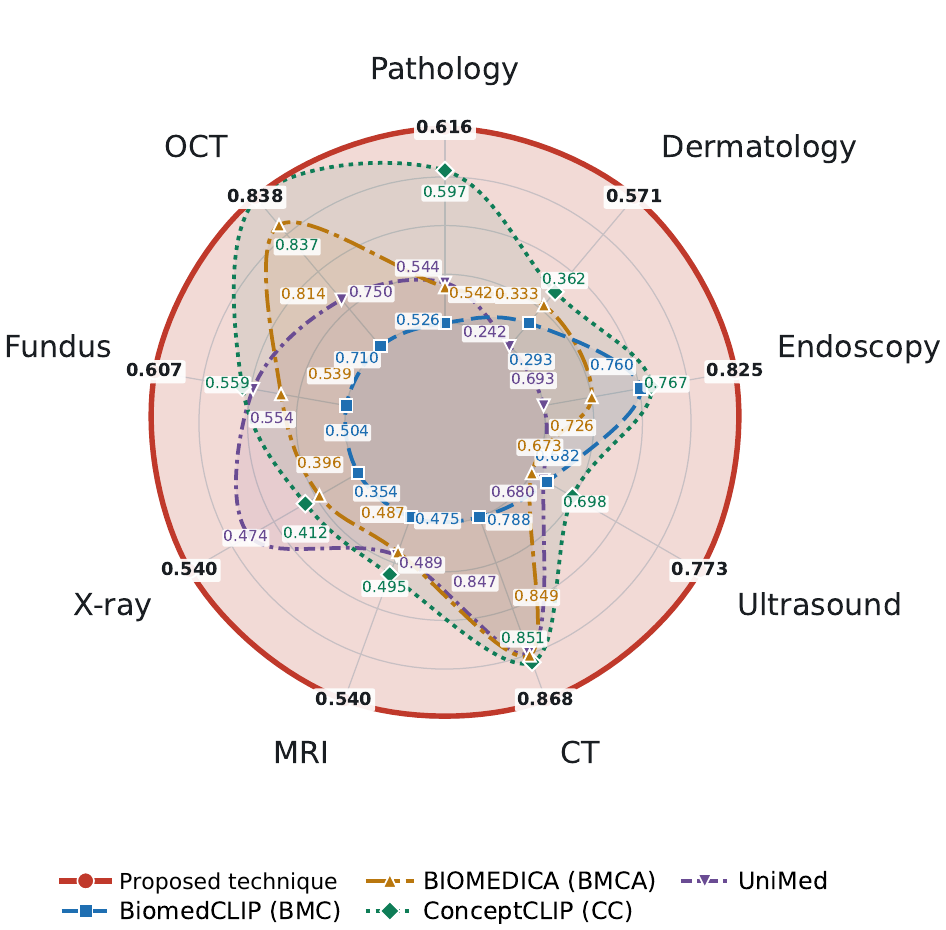} \\[3pt]
{\footnotesize (b) training corpus} & {\footnotesize (c) modality-level results} \\
\end{tabular}
\caption{\textbf{Proposed technique: pipeline, training data, and evaluation breadth.}
\textbf{(a)} A modality-aware tokenizer supplies discrete targets for masked pretraining;
contrastive alignment then connects the separate image and text pathways. New modalities use
paired data to train new routed experts while existing paths remain frozen. \textbf{(b)} Each
spoke is a source dataset grouped by modality. On a shared $\log_{10}$ scale, wide bars show
Stage-1 image rows and narrow bars show Stage-2 pairs; fundus and OCT use paired data only.
The core pair count is approximately $17\%$ of the masked-pretraining image-row count.
\textbf{(c)} Raw mean performance by modality across applicable task families, with retrieval
percentages divided by $100$. Each axis uses its own range; markers are data and curves are
visual guides.}
\label{fig:pipeline}
\label{fig:radar}
\end{figure}

\subsection{Backbone and Domain-Aware Mixture-of-Experts}\label{sec:damoe}
\textbf{Multiway backbone.} The proposed technique\ builds on a Multiway Transformer \citep{wang2023beit3}: an image
and a caption are each patch- or token-embedded and encoded through their own forward pass,
and every layer applies token-type-specific self-attention followed by a token-type-specific
feed-forward branch. The pre-normalized block is
\begin{equation}
\begin{aligned}
\bm{h}^{(\ell)}_{t}
&= \bm{u}^{(\ell-1)}_{t}
+ \operatorname{Attn}^{(\ell)}_{t}\!\left(
\operatorname{LN}^{(\ell)}_{t,1}
(\bm{u}^{(\ell-1)}_{t})\right), \\
\bm{u}^{(\ell)}_{t}
&= \bm{h}^{(\ell)}_{t}
+ \operatorname{FFN}^{(\ell)}_{t}\!\left(
\operatorname{LN}^{(\ell)}_{t,2}
(\bm{h}^{(\ell)}_{t})\right),
\qquad t\in\{\mathrm{vision},\mathrm{language}\}.
\end{aligned}
\end{equation}
Attention and FFN denote the corresponding implementation blocks, including their internal
projections and any implementation-specific operations in addition to the displayed
pre-normalization; dropout and stochastic depth are omitted for clarity. The
subscript $t$ selects an independently parameterized attention and FFN branch, and the two
normalization subscripts distinguish the attention and FFN operations. The two towers share no attention weights and
interact only through the Stage-2 contrastive loss on their pooled embeddings, never through
a joint attended sequence. It is the vision FFN branch that we make \emph{domain-aware}.

\textbf{Domain-Aware MoE.} We replace the single vision FFN at a set of layers
$\mathcal{L}\subseteq\{1,\dots,24\}$ (backbone depth for the reported ViT-Large model)
with a bank of $E$ domain experts
$\{f^{(\ell)}_1,\dots,f^{(\ell)}_E\}$ plus a shared expert $s^{(\ell)}$. A \emph{hard}
router maps each image to the single expert of its modality,
\begin{equation}
\operatorname{FFN}^{(\ell)}_{\mathrm{vision}}(\bm{x};m)
=s^{(\ell)}(\bm{x})+f^{(\ell)}_{r(m)}(\bm{x}),
\qquad \ell\in\mathcal{L},\quad r:\mathcal{M}\!\to\!\{1,\dots,E\},
\label{eq:route}
\end{equation}
so pathology, CT, MRI, ultrasound, X-ray, dermatology, and endoscopy each have an expert while
$s^{(\ell)}$ carries shared visual information. Routing is deterministic and specified by the
image modality ID. Only the selected modality-specific expert contributes alongside the
shared FFN. Different modalities retain separate expert parameters while jointly training
the shared visual pathway. No load-balancing objective is used in the reported configuration.

\textbf{Expert placement.} We index the $24$ Transformer layers as $1,\ldots,24$.
$\mathcal{L}$ controls how much dedicated capacity a modality receives. \emph{Integrate}-mode
routes every fourth layer, $\mathcal{L}=\{4,8,12,16,20,24\}$, giving depth-$6$ experts.
\emph{Expand}-mode routes every layer, $\mathcal{L}=\{1,\ldots,24\}$, giving depth-$24$
experts. Layers outside $\mathcal{L}$ use their ordinary visual FFN. The seven-modality
ViT-L core uses the six integrate-mode routed layers. For the full-data fundus and OCT
extensions, the remaining layers are converted with function-preserving FFN remapping and
zero-output branches, and each new expert spans all $24$ layers. This adds capacity for the
new modalities while preserving the existing computation paths (\S\ref{sec:fundus}).

\subsection{Two-stage training pipeline}\label{sec:train}
\textbf{Modality-aware tokenizer.} A single VQ-KD tokenizer
\citep{peng2022beitv2} $q:\text{image}\!\to\!\{1,\dots,K\}^{N}$ ($K\!=\!8192$) maps each
image to a grid of discrete codes, trained to reconstruct per-modality teacher features
through a quantized bottleneck. External teacher features supervise tokenizer training. The
resulting tokenizer is then frozen and produces discrete targets for the proposed technique's masked-image
objective. Gradients from this objective update the proposed technique\ but are not propagated through the
tokenizer or its teachers.

\textbf{Stage 1: masked unimodal pretraining.} With the tokenizer frozen we pretrain
the backbone with two independent objectives, each on its own forward pass: masked image
modeling against the VQ-KD codes, and masked language modeling on unpaired text. The image
and text streams are unpaired. Let $\mathcal{I}_{M}$ and $\mathcal{T}_{M}$ contain masked
sample--token pairs across the image and text microbatches, respectively, and let
$c_{b,i}$ denote the frozen tokenizer target at image position $i$ in sample $b$. Routing each image through its
modality expert (Eq.~\ref{eq:route}),
\begin{equation}
\mathcal{L}_{\mathrm{MIM}}
=-\frac{1}{|\mathcal{I}_{M}|}
\sum_{(b,i)\in\mathcal{I}_{M}}
\log p_\theta\big(c_{b,i}\mid\widetilde{\bm{x}}^{\mathrm v}_{b},m_b\big),
\end{equation}
\begin{equation}
\mathcal{L}_{\mathrm{MLM}}
=-\frac{1}{|\mathcal{T}_{M}|}
\sum_{(q,j)\in\mathcal{T}_{M}}
\log p_\theta\big(t_{q,j}\mid\widetilde{\bm{x}}^{\mathrm t}_{q}\big),
\end{equation}
\begin{equation}
\boxed{\mathcal{L}_{\mathrm{stage1}}
=\mathcal{L}_{\mathrm{MIM}}+\mathcal{L}_{\mathrm{MLM}}.}
\end{equation}
where $\widetilde{\bm{x}}^{\mathrm{v}}_{b}$ and $\widetilde{\bm{x}}^{\mathrm{t}}_{q}$ are
masked image and text inputs. The indices $b$ and $q$ refer to independent microbatches and
do not indicate paired samples; MLM has no access to image tokens. Stage~1 is the equally
weighted sum of the mean MIM and mean MLM losses.
If no text tokens are masked, the implementation sets $\mathcal{L}_{\mathrm{MLM}}=0$.
Stage~1 trains the shared visual pathway and the original modality-specific experts before
image--text alignment. During subsequent expansion, a new expert learns from paired data
against the frozen text embedding space. 


\textbf{Stage 2: contrastive alignment.} For core Stage~2 training, we attach linear vision/text heads,
pool and $\ell_2$-normalize a global image embedding $\bm{z}^{\mathrm{v}}$ and text embedding
$\bm{z}^{\mathrm{t}}$, and align them over $3.23$M image--text pairs. Let $a$ be the learned
logit-scale parameter and define
\begin{equation}
s_{ik}=\gamma\,(\bm{z}^{\mathrm v}_i)^\top\bm{z}^{\mathrm t}_k,
\qquad \gamma=\min\{\exp(a),100\}.
\end{equation}
The effective temperature is $\tau=1/\gamma$. The symmetric core-training objective is
\begin{equation}
\boxed{\mathcal{L}_{\mathrm{CLIP}} = -\frac{1}{2B}\sum_{i=1}^{B}
\Bigg[
\log\frac{\exp(s_{ii})}{\sum_{k=1}^{B}\exp(s_{ik})}
+\log\frac{\exp(s_{ii})}{\sum_{k=1}^{B}\exp(s_{ki})}
\Bigg].}
\end{equation}
$B$ is the number of pairs participating in one contrastive calculation. With cross-GPU
gathering, seven GPUs and $32$ pairs per GPU give $B=224$. Eighteen accumulation steps give
a $4{,}032$-pair optimizer batch, not a $4{,}032$-pair negative pool. The implementation's
contrastive helper additionally divides the returned loss by the distributed world size;
this affects optimizer scaling but not the positive or negative sets in the equation. The matched-data
expansion experiment instead uses an equal-caption multiple-positive contrastive objective
and is therefore not described exactly by the diagonal-only equation above.


\subsection{Expansion with preserved embedding functions}\label{sec:cl}
A deployed encoder must absorb new modalities without disturbing the ones it serves.
The proposed technique\ does this by \emph{growing the expert bank on a frozen base}.

\begin{definition}[Integration operator]
Given the proposed technique\ with modality experts $\{f_j\}_{j=1}^{E}$ and a new modality $m^\star$,
integration $\mathcal{A}$ (i) appends a new expert $f_{E+1}$ whose parameters are
independent of the existing experts; (ii) freezes all parameters on the existing image
and text embedding paths, including embeddings, attention blocks, normalization layers,
shared FFNs, existing modality experts, and projection heads; and (iii) trains only the
new expert using paired image--text data and a contrastive objective. The reported fundus
and OCT extensions use no additional Stage-1 masked pretraining. The new expert comprises
modality-specific FFNs across the routed layers. Integration assigns
$r(m^\star)=E+1$ while preserving all existing routing assignments and embedding computations.
\end{definition}

\begin{proposition}[Preservation of existing embedding functions]
\label{prop:zero}
Let $g_\theta^{\mathrm v}$ and $g_\theta^{\mathrm t}$ denote the image and text
embedding functions before expansion, and let $\theta'$ denote the parameters
after expansion. If expansion preserves all original parameters, persistent
state, routing, and inference operations, then
\begin{align}
g_{\theta'}^{\mathrm v}(\bm{x},m)
&= g_\theta^{\mathrm v}(\bm{x},m),
&& \forall\, \bm{x},\; m\in\mathcal{M}_{\mathrm{old}}, \\
g_{\theta'}^{\mathrm t}(\bm{t})
&= g_\theta^{\mathrm t}(\bm{t}),
&& \forall\, \bm{t}.
\end{align}
\end{proposition}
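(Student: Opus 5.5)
The plan is to view each embedding function as a composition of parameterized layer maps and argue by induction over depth that every map on an old-modality or text path is unchanged. Write the image embedding as
\[
g_\theta^{\mathrm v}(\bm{x},m)=\operatorname{norm}\circ H^{\mathrm v}\circ\operatorname{pool}\circ B^{(24)}_{\mathrm{vision},m}\circ\cdots\circ B^{(1)}_{\mathrm{vision},m}\circ E^{\mathrm v}(\bm{x}),
\]
where $E^{\mathrm v}$ is the patch embedding, $B^{(\ell)}_{\mathrm{vision},m}$ is the pre-normalized block of the Multiway backbone with its feed-forward branch given by Eq.~\eqref{eq:route} when $\ell\in\mathcal{L}$ and by the ordinary vision FFN otherwise, $H^{\mathrm v}$ is the linear head, and $\operatorname{norm}$ is $\ell_2$ normalization. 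The text embedding $g_\theta^{\mathrm t}$ has the analogous form with language-branch blocks and no modality argument. Under the hypothesis, the embeddings, attention blocks, normalization layers, shared FFNs $s^{(\ell)}$, existing experts $f^{(\ell)}_j$ for $j\le E$, and heads have identical parameters in $\theta$ and $\theta'$. The persistent state (for example normalization statistics and the position-embedding buffers) and the inference operations (dropout and stochastic depth disabled, same precision and pooling) are also unchanged.

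The induction runs as follows. The base case is $E^{\mathrm v}_{\theta'}=E^{\mathrm v}_\theta$, which holds because the two maps share parameters and state. For the inductive step at layer $\ell$, suppose $\bm{u}^{(\ell-1)}$ agrees under $\theta$ and $\theta'$. The attention sublayer and $\operatorname{LN}_{t,1},\operatorname{LN}_{t,2}$ use frozen parameters, so $\bm{h}^{(\ell)}$ agrees. For the FFN, fix $m\in\mathcal{M}_{\mathrm{old}}$. Integration preserves existing routing, so $r'(m)=r(m)\le E$, and the routed output is $s^{(\ell)}(\bm{x})+f^{(\ell)}_{r(m)}(\bm{x})$ with both terms frozen. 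Because routing is hard, the appended expert $f_{E+1}$ is never evaluated on this input, whatever values its parameters take. Hence $\bm{u}^{(\ell)}$ agrees. Pooling, the head, and normalization then finish the image case. The text case is the same induction with no routing at all, since no language-branch parameter is touched.

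The main obstacle is the layers that were converted when the new expert was extended to all 24 layers, i.e.\ the ``function-preserving FFN remapping and zero-output branches'' applied to $\ell\notin\mathcal{L}$. At these layers the parameter tree of $\theta'$ is not literally a superset of $\theta$. The case split I would make is to require that, for old modalities, the remapped FFN at $\ell\notin\mathcal{L}$ is exactly the original vision FFN, with the same weights and the same arithmetic. Any zero-output branch is either not selected for $m\in\mathcal{M}_{\mathrm{old}}$ or contributes an exact $0$, so the sum is bitwise identical. I would state this as part of what ``preserves inference operations'' means.

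The statement is an exact functional identity, so I would also note that it is conditional on deterministic evaluation; nondeterministic kernels are excluded by the inference-operation hypothesis. With that in place, the composition argument gives both displayed equalities for all $\bm{x}$, all $m\in\mathcal{M}_{\mathrm{old}}$, and all $\bm{t}$.
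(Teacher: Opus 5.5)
Your proposal is correct and follows the paper's own computation-path argument, written out as an explicit induction over depth. The ingredients are the same: the embeddings, attention, normalization layers, shared FFNs, old experts and heads are frozen; hard routing with $r(m)\le E$ unchanged keeps $f_{E+1}$ off every old-modality path; the text tower is untouched; and the 6-to-24-layer conversion needs an extra function-preserving-remapping condition. The proposition needs only exact functional identity, and the paper deliberately treats bitwise equality as an empirical check rather than a consequence, so your bitwise claim at converted layers (with its added arithmetic and determinism assumptions) is an optional strengthening.
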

\begin{proof}[Proof sketch]
The embedding of an input with modality $m$ depends only on parameters on its
\emph{computation path}: shared attention and embeddings, the shared experts
$s^{(\ell)}$, and---by the hard router (Eq.~\ref{eq:route}) with $r(m)\!=\!j\le E$
unchanged---exactly one old expert $f_j$ per DA-MoE layer. Every one of these lies in
the frozen base; the only trainable parameters ($f_{E+1}$) do not affect the embedding of
an old-modality input. The text pathway also remains unchanged. With the same persistent
state, routing, and inference operations, both existing embedding functions are therefore
preserved. For the 6-to-24-layer conversion, this argument additionally requires the
stated function-preserving remapping.
\end{proof}

Proposition~\ref{prop:zero} states preservation by construction under the specified
conditions. Methods that alter the effective mapping used by existing modalities, including
shared LoRA adapters, do not automatically satisfy these conditions. The guarantee arises
from preserving existing computation paths, not from routing alone. Unchanged predictions
and metrics additionally require unchanged
preprocessing, scoring rules, candidate sets, and downstream classifiers. Bitwise equality is
an empirical verification on the evaluated inputs, not an unconditional consequence for every
possible output. \S\ref{sec:forget} reports this verification, and
Section~\ref{sec:fundus} tests whether the new experts learn useful representations.

\section{Data and evaluation protocols}\label{sec:evaluation}
\subsection{Training data and model configurations}
The seven-modality core uses $19{,}393{,}075$ unpaired image rows for masked
pretraining and $3{,}229{,}073$ image--text pairs for alignment. Its modalities
are pathology, CT, MRI, ultrasound, X-ray, dermatology, and endoscopy. The fundus
and OCT experts use $88{,}143$ and $19{,}396$ additional image--text pairs,
respectively, without further masked pretraining. This gives $3{,}336{,}612$
pairs across core alignment and both full-data extensions.
Appendix~\ref{app:leak} lists data sources and evaluation separation decisions.

The main model uses a ViT-Large/16 backbone. The evaluated nine-modality
assembly stores $1.280$B parameters, while a single modality's selected image
path contains $355.40$M parameters, excluding the text encoder and unselected
experts. The compared vision--language baselines are BMC (ViT-B/16, $196$M
parameters), BMCA (ViT-L/14, $454$M), ConceptCLIP (SigLIP-SO400M, $543$M),
and UniMed-CLIP (ViT-B/16, $196$M). MerMED-FM (ViT-B, $86$M) is included for
linear probing. The proposed technique\ has the largest stored model in this comparison;
Appendix~\ref{app:params} distinguishes stored parameters from selected image
pathways and gives the implementation settings.

The tokenizer uses six visual teachers: UNI for pathology, MedDINOv3 for CT,
USFM for ultrasound, RAD-DINO for X-ray, GastroNet-5M for endoscopy, and BMC for
MRI and dermatology. Once tokenizer training is complete, the core and expert
training objectives use no additional teacher-feature or teacher-logit
distillation loss. Appendix~\ref{app:tokenizer-ablation} compares multi-teacher
and single-teacher tokenization under a limited, matched training budget.

\subsection{Transfer benchmarks and metrics}
The main evaluation comprises $11$ zero-shot classification benchmarks,
$10$ linear probes, and $4$ cross-modal retrieval benchmarks across nine
modalities. Classification and linear probing use balanced accuracy. Retrieval
uses cosine similarity with a fixed gallery per benchmark and reports the mean
of Recall@$1$, Recall@$10$, and Recall@$50$ across image-to-text and
text-to-image directions. Within each comparison table, models use common
prompts, probes, galleries, and scoring procedures. MerMED-FM is evaluated only
by linear probing because it has no text encoder.

We report means within each task family. They summarize the benchmarks in the
corresponding table and should not be compared across families as a common
metric. MerMED-FM's linear-probe mean covers only its seven available
benchmarks, so we also distinguish performance on that shared subset.

\subsection{Expansion experiments}
We use two designs with different purposes. The \emph{matched-data sequential}
experiment gives all methods $20{,}000$ fundus pairs followed by $16{,}000$ OCT
pairs, with three seeds and common validation-based checkpoint-selection
rules. Baselines adapt with LoRA; the proposed technique\ trains new routed experts. The
\emph{full-data assembly} combines independently trained fundus and OCT experts
from the same seven-modality base. The latter supplies the expanded model used
in the main transfer tables. It is not a matched-training comparison with
sequential LoRA. Section~\ref{sec:expansion-results} reports these experiments
separately.

\section{Transfer across medical imaging modalities}\label{sec:benchmarks}
\subsection{Zero-shot classification}\label{sec:zs}
The proposed technique\ wins $10$ of $11$ zero-shot benchmarks outright, for a mean of
$0.674$ ($+0.080$ over ConceptCLIP, the strongest baseline at $0.594$). The largest margins
occur on dermatology ($0.819$ vs.\ $0.389$), thyroid ultrasound ($0.744$ vs.\ $0.612$), and
endoscopic polyp detection ($0.824$ vs.\ $0.752$). The proposed technique\ also leads on the newly added
APTOS-DR5 fundus benchmark ($0.589$ vs.\ $0.457$). Its narrowest win is on ColorFundus10
($0.460$ vs.\ $0.459$ for UniMed). BMCA performs best on Pneumonia ($0.911$ vs.\
$0.722$ for the proposed technique). All models obtain low scores on the $19$-class PanNuke benchmark.

\begin{table}[tb]
\centering\small
\caption{\textbf{Zero-shot classification}, balanced accuracy. \best{Best},
\snd{second}. The proposed technique\ wins ten of eleven rows outright. BMCA leads on Pneumonia. $^\ddagger$PanNuke is a
$19$-class tissue panel on which every model scores well below saturation.}
\label{tab:zs}
\begin{tabular}{llccccc}
\toprule
Benchmark & Modality & \ours{OURS} & BMC & BMCA & ConceptCLIP &  UniMed-CLIP \\
\midrule
BreaKHis              & pathology   & \best{.665} & .532 & .553 & \snd{.637} & .601 \\
Pneumonia             & X-ray       & .722 & .732 & \best{.911} & \snd{.859} & .786 \\
CASIA-CXR             & X-ray       & \best{.720} & .630 & .590 & \snd{.680} & .670 \\
Thyroid               & ultrasound  & \best{.744} & .604 & .579 & \snd{.612} & .599 \\
EDD-polyp             & endoscopy   & \best{.824} & \snd{.752} & .644 & .714 & .623 \\
HAM10000              & dermatology & \best{.819} & .300 & .365 & \snd{.389} & .178 \\
LC25000               & pathology   & \best{.881} & .664 & .694 & \snd{.879} & .709 \\
PanNuke$^\ddagger$    & pathology   & \best{.246} & .071 & .093 & \snd{.142} & .112 \\
SARS-CoV-2 CT         & CT          & \best{.743} & .626 & .730 & .717 & \snd{.731} \\
APTOS-DR5             & fundus      & \best{.589} & .377 & .382 & \snd{.457} & .447 \\
ColorFundus10         & fundus      & \best{.460} & .375 & .419 & .445 & \snd{.459} \\
\midrule
\multicolumn{2}{l}{\textbf{Mean} (n=11)} & \ours{\best{.674}} & .515 & .542 & \snd{.594} & .538 \\
\bottomrule
\end{tabular}
\end{table}
\subsection{Linear probe}\label{sec:lp}
Table~\ref{tab:lp} evaluates frozen image features using logistic regression.
The proposed technique\ is best or tied-best on all ten benchmarks, with mean balanced accuracy
$0.856$. At the reported precision, MerMED-FM ties the proposed technique\ on CT-Kidney
($0.993$), and ConceptCLIP ties it on ColorFundus10 ($0.712$).

On the seven benchmarks available for every model, the proposed technique\ averages $0.906$,
compared with $0.867$ for MerMED-FM and $0.866$ for ConceptCLIP. MerMED-FM has
no reported fundus or OCT probe, so its mean covers this seven-benchmark subset.
ConceptCLIP's mean over all ten benchmarks is $0.824$. The different benchmark
coverage matters when interpreting the mean row.

\begin{table}[tb]
\centering\small
\caption{\textbf{Linear probe} on frozen features, $5$-fold cross-validation (seed $0$),
balanced accuracy. \best{Best}, \snd{second}. The proposed technique\ is best or tied-best on all ten rows
and has the highest mean over all ten benchmarks. Ties
are CT-Kidney, where the proposed technique\ and MerMED-FM both reach $0.993$, and ColorFundus10, where the proposed technique\
and ConceptCLIP both reach $0.712$. $^\dagger$MerMED-FM has no reported fundus/OCT probe;
its mean is computed over the original seven benchmarks, where the proposed technique\ scores $0.906$ on the
same subset.}
\label{tab:lp}
\begin{tabular}{lcccccc}
\toprule
Benchmark & \ours{OURS} & BMC & BMCA & ConceptCLIP & MerMED &  UniMed-CLIP \\
\midrule
HEST (pathology)                 & \best{.994} & .978 & \snd{.986} & .981 & .984 & .979 \\
BrainTumor (MRI)                 & \best{.972} & .866 & .894 & \snd{.926} & .917 & .903 \\
CT-Kidney (CT)                   & \best{.993} & .950 & .967 & .986 & \best{.993} & .964 \\
TN5000 (ultrasound)              & \best{.802} & .761 & .768 & \snd{.783} & .782 & .762 \\
RadImageNet (MRI)                & \best{.956} & .940 & .942 & \snd{.953} & .932 & .949 \\
EDD2020 (endoscopy)              & \best{.825} & .767 & .808 & .820 & \snd{.824} & .763 \\
HAM10000 (dermatology)           & \best{.797} & .519 & .559 & .610 & \snd{.626} & .515 \\
APTOS-DR5 (fundus)               & \best{.665} & .610 & \snd{.654} & .624 & --$^\dagger$ & .620 \\
ColorFundus10 (fundus)           & \best{.712} & .653 & .700 & \best{.712} & --$^\dagger$ & .689 \\
OCTDL (OCT)                      & \best{.838} & .710 & .814 & \snd{.837} & --$^\dagger$ & .750 \\
\midrule
\textbf{Mean} (n=10)             & \ours{\best{.856}} & .776 & .809 & \snd{.824} & --$^\dagger$ & .793 \\
\bottomrule
\end{tabular}
\end{table}

\subsection{Cross-modal retrieval}\label{sec:ret}
Under the same retrieval protocol, the proposed technique\ achieves the highest mean score ($20.70$ vs.\
$12.09$, a difference of $8.61$; Table~\ref{tab:ret}) and leads all four benchmarks. The
largest gains occur on OpenI ($35.88$ vs.\ $21.97$ for UniMed) and MR-RATE ($11.52$ vs.\
$5.68$ for BMCA). The margins are smaller on derm7pt ($9.64$ vs.\ $8.74$ for
ConceptCLIP) and PathGen-600 ($25.75$ vs.\ $25.61$ for ConceptCLIP).

\begin{table}[tb]
\centering\small
\caption{\textbf{Cross-modal retrieval}, mean $R@\{1,10,50\}$ over image$\to$text and
text$\to$image, with one fixed gallery per benchmark for all models. All benchmarks use
cosine similarity; PathGen-600 treats rows with identical captions as positives.
\best{Best}, \snd{second}. The proposed technique\ leads all four rows and the mean.}
\label{tab:ret}
\begin{tabular}{llccccc}
\toprule
Benchmark & Modality & \ours{OURS} & BMC & BMCA & ConceptCLIP & UniMed-CLIP \\
\midrule
OpenI          & X-ray       & \best{35.88} & 2.60  & 4.23  & 5.47  & \snd{21.97} \\
MR-RATE        & MRI         & \best{11.52} & 4.72  & \snd{5.68}  & 5.12  & 5.12 \\
derm7pt        & dermatology & \best{9.64}  & 6.00  & 7.37  & \snd{8.74}  & 3.30 \\
PathGen-600    & pathology   & \best{25.75} & 17.64 & 19.39 & \snd{25.61} & 17.97 \\
\midrule
\multicolumn{2}{l}{\textbf{Mean} (n=4)} & \ours{\best{20.70}} & 7.74 & 9.17 & 11.24 & \snd{12.09} \\
\bottomrule
\end{tabular}
\end{table}

Figure~\ref{fig:radar}(c) summarizes performance across all nine modalities. The smallest gaps
occur on OCT and CT: the proposed technique\ and ConceptCLIP differ by $0.001$ on OCT and $0.017$ on CT.
The largest gaps occur on dermatology and X-ray.

\section{Modality expansion and representation retention}
\label{sec:expansion-results}
\subsection{Sequential adaptation with matched data}\label{sec:matched-main}
We first assess learning and retention with the same adaptation data for all
methods: $20{,}000$ fundus pairs followed by $16{,}000$ OCT pairs. Each stage
uses the same validation-based checkpoint-selection rules, and results are
averaged over three seeds. The baselines use LoRA while the proposed technique\ trains new
modality experts with the existing paths frozen. Model size, pretraining
history, adaptation capacity, and computational cost remain different.
Appendix~\ref{app:ft} gives the protocol and definitions of gain and forgetting.

Table~\ref{tab:learn-forget} shows that all methods learn from the new data.
The proposed technique\ gains $14.95$ percentage points on fundus retrieval and $9.31$ on OCT,
with no measured loss on the preceding fundus or PathGen tasks. The baselines
lose $5.33$--$15.29$ points on fundus after OCT adaptation and
$2.97$--$6.11$ points on PathGen across the sequence. These outcomes assess
retention under this protocol rather than every possible capability.

\begin{table}[t]
\centering\small
\caption{\textbf{Learning and forgetting under sequential fundus$\to$OCT adaptation.}
All methods use the same $20{,}000$ fundus pairs followed by the same $16{,}000$ OCT pairs,
three training seeds, and the same validation-based checkpoint-selection rules. The baselines
use LoRA, whereas the proposed technique trains new experts while keeping its existing pathways frozen.
Values are mean percentage-point changes across the three seeds. Fundus and OCT gain measure
improvement after training on each new modality. Fundus forgetting is the decrease after OCT
training, and PathGen forgetting is the decrease from the initial model to the final model.}
\label{tab:learn-forget}
\resizebox{\linewidth}{!}{%
\begin{tabular}{@{}lrrrr@{}}
\toprule
Model & Fundus gain & OCT gain & Fundus forgetting & PathGen forgetting \\
\midrule
BMC  & 13.95 & 11.58 &  9.85 & 6.11 \\
BMCA   & 15.47 & 10.53 &  7.59 & 3.22 \\
ConceptCLIP & 13.78 &  9.01 &  5.33 & 5.39 \\
UniMed-CLIP & 11.69 & 10.50 & 15.29 & 2.97 \\
Proposed technique\ ViT-L  & $14.95$ & $9.31$ & \textbf{0.00} & \textbf{0.00} \\
\bottomrule
\end{tabular}
}
\end{table}

The trajectories in Figure~\ref{fig:matched_seq} show the tradeoff between
learning the new modality and retaining earlier tasks. The proposed technique's fundus retrieval
rises from $5.22\%$ to $20.17\%$ and remains fixed after OCT training. Its OCT
score rises from $4.28\%$ to $13.59\%$, below the adapted baselines' final
$15.46\%$--$15.88\%$. Derm7pt retrieval remains at $9.63\%$, with byte-identical
saved features within each seed. Thus, retaining earlier representations does
not imply the highest score on every newly adapted task.

\begin{figure}[tbp]
\centering
\includegraphics[trim=6.0pt 4.5pt 6.0pt 7.5pt, clip, width=\linewidth]{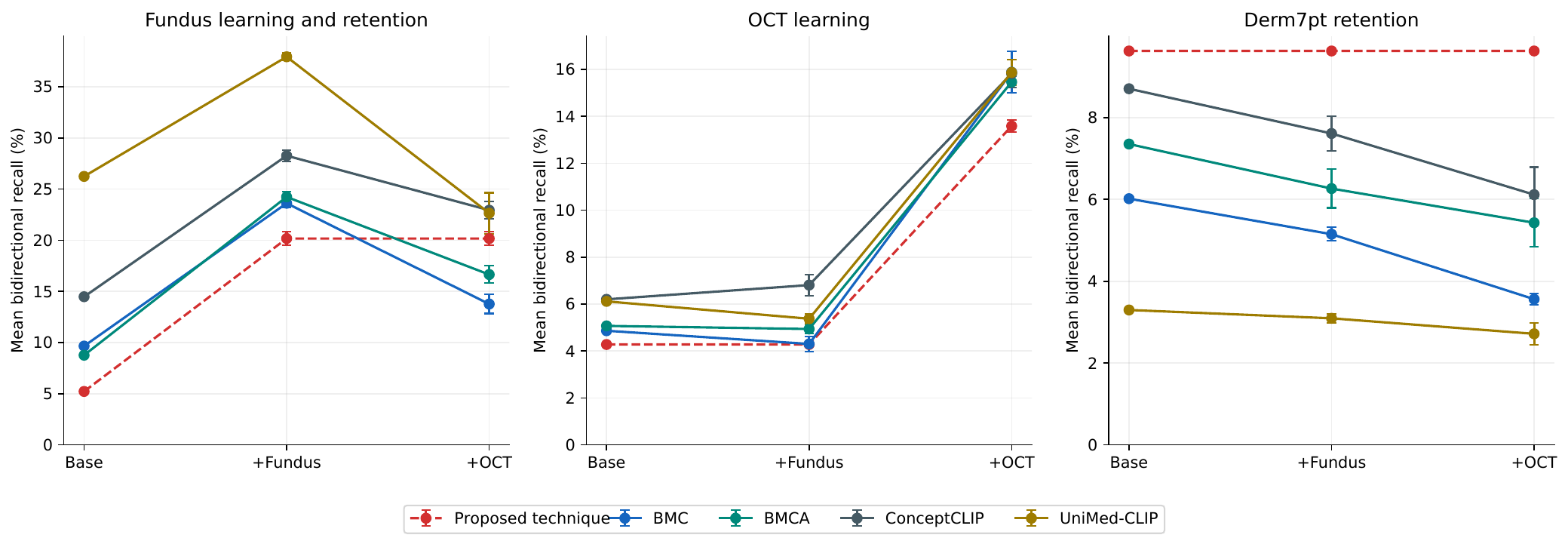}
\caption{\textbf{Learning new modalities while retaining existing capabilities.} Models use $20{,}000$ fundus pairs followed by $16{,}000$ OCT pairs. Left: fundus learning and retention. Middle: OCT learning. Right: retention on Derm7pt. Points show mean bidirectional recall and error bars show standard deviation across three training seeds. Baselines use LoRA; the proposed technique trains new experts with existing pathways frozen. This experiment uses training subsets, not the full training corpus.}
\label{fig:matched_seq}
\end{figure}

\subsection{Full-data expert assembly}\label{sec:fundus}
The assembled experts learn the new modalities: in Tables~\ref{tab:zs} and~\ref{tab:lp},
the proposed technique\ is best or tied-best on all five fundus/OCT rows. We also measure pathology retention on
PathGen-600. The proposed technique\ remains at $25.75\%$ mean bidirectional recall after both additions, with
byte-identical saved image and text features. Four baselines adapted by sequential LoRA lose
$2.97$--$6.11$ points (Table~\ref{tab:pathgen_lora}). This comparison is descriptive: the proposed technique's
full-data experts are trained independently and assembled, whereas the baselines use
sequential LoRA on matched-data subsets. The matched-data results above and Appendix~\ref{app:matched} provide the sequential comparison and per-stage results.

\begin{table}[t]
\centering\small
\caption{\textbf{PathGen pathology retention} after fundus$\to$OCT adaptation. Baselines
use sequential LoRA and report the mean over three seeds; the proposed technique\ uses one assembled-expert
sequence. Full trajectories are in Figure~\ref{fig:pathgen_trajectory}.}
\label{tab:pathgen_lora}
\begin{tabular}{lr}
  \toprule
  Model (adaptation) & $\Delta$ PathGen \\
  \midrule
  \ours{Proposed technique\ ViT-L (routed expert)} & \zero \\
  BMCA $+$ LoRA    & $-3.22$ \\
  BMC $+$ LoRA   & $-6.11$ \\
  UniMed-CLIP $+$ LoRA  & $-2.97$ \\
  ConceptCLIP $+$ LoRA  & $-5.39$ \\
  \bottomrule
\end{tabular}
\end{table}

\subsection{Checks on existing representations}\label{sec:forget}
We add independently trained fundus and OCT experts to the frozen ViT-Large core and test an
old modality after each addition. On a balanced $4096$-image PCam probe, the core,
$+$fundus, and $+$fundus$+$OCT checkpoints give identical AUROC ($0.8691$), balanced accuracy
($64.26\%$), features, cosine scores, and predictions. Parameter and seven-modality retention
checks are reported in Appendices~\ref{app:params} and~\ref{app:retention7}.

The full-data assembly also retains the evaluated retrieval results across
all seven base modalities. Appendix~\ref{app:retention7} reports the individual
benchmarks and feature checks. These assembly checks complement the matched-data
sequence; they involve independently trained experts and should be read as a
separate experiment.

\section{Ablations and supporting analyses}\label{sec:abl}
\subsection{Modality experts under a matched training budget}
A controlled ViT-B comparison uses identical shared random initialization,
training data, tokenizer, optimization settings, and training seed. The routed
model improves $15$ of $18$ benchmarks relative to the variant without
modality experts, ties on BreaKHis zero-shot classification, and scores lower
on Pneumonia zero-shot classification and HAM10000 linear probing.
Mean gains are $0.0433$ in zero-shot balanced accuracy, $0.0201$ in
linear-probe balanced accuracy, and $0.1330$ percentage points in retrieval.
Appendix~\ref{app:ablation} reports all benchmark rows and settings.

Both models train for five epochs per stage and use the final checkpoint,
without early stopping or validation-based checkpoint selection. This comparison
supports the contribution of modality-specific capacity under a limited,
matched training budget. It does not establish the same effect at convergence
or isolate routing from the additional expert parameters.

\subsection{Multi-teacher visual tokenization}
The tokenizer comparison holds the ViT-B architecture, initialization, data
subsets, and training schedules fixed while changing the visual teachers.
Multi-teacher supervision improves mean zero-shot balanced accuracy from
$32.74\%$ to $35.60\%$, linear-probe balanced accuracy from $74.56\%$ to
$76.34\%$, and retrieval from $2.17\%$ to $2.35\%$.
Appendix~\ref{app:tokenizer-ablation} provides the full results. Improvements
vary by benchmark, including two linear-probe decreases. The experiment uses
one seed and a short schedule, so these means do not establish statistical
significance or the effect under full-data training.

\subsection{Qualitative evidence}
The appendix collects two kinds of visual evidence. Before-and-after retrieval
examples in Appendix~\ref{app:qual} show individual ranking changes through
fundus and OCT integration. Similarity-gradient visualizations in
Appendices~\ref{app:saliency} and~\ref{app:comparative-saliency} compare spatial
responses to finding descriptions and paired reports. For the linear-probe
examples, the saliency target is image--prompt similarity rather than the
image-only classifier's class score. These selected visualizations complement
the quantitative evaluations and do not measure localization performance.

\section{Discussion and limitations}\label{sec:discussion}
The results support a distinction between acquiring new representations and
preserving an existing embedding function. New experts provide trainable
capacity for an added modality, while frozen original paths preserve earlier
embeddings under the conditions in Proposition~\ref{prop:zero}. Whether the
new expert reaches the strongest task performance remains an empirical
question. In the matched-data experiment, the proposed technique\ learns both fundus and OCT
while retaining earlier tasks, but its OCT retrieval score is below those of
the adapted baselines. Retention and new-task performance therefore need to be
assessed together.

The broader baseline comparisons do not isolate a single source of the
performance differences. Models differ in architecture, pretraining data,
parameter count, and optimization. The proposed technique\ uses a large bank of stored experts,
and the matched-data experiment does not match trainable parameter counts or
computational cost. The ViT-B routing and tokenizer experiments provide more
controlled evidence, but they use limited subsets and short training schedules.
Their results do not establish the size of the same effects after converged
ViT-L training.

Our expansion rule assumes that the modality is supplied at inference and
that existing preprocessing, persistent state, routing, and inference
operations remain fixed. Unchanged embeddings imply unchanged task outputs
only when scoring rules, candidate sets, and downstream classifiers are also
fixed. The feature-identity checks cover evaluated inputs. A change to the
assembled model's learned logit-scale parameter is separate from those checks,
so preserved features do not establish identity of every scaled logit.

Evaluation also has limits. Some linear-probe splits operate at the image or
data-row level rather than the patient or slide level, including HEST's
image-level split. The retention benchmarks sample capabilities across the
base modalities, rather than testing every possible downstream use. The
qualitative examples visualize similarity responses without a quantitative
localization evaluation. These experiments assess representation learning;
clinical use would require prospective validation in the intended setting.

\section{Related work}\label{sec:related}
\textbf{Biomedical vision--language models.} CLIP \citep{radford2021clip} inspired biomedical
vision--language models such as BMC, BMCA, and ConceptCLIP. These models support
text-prompted classification and image--text retrieval, but do not explicitly separate the
computation paths of different imaging modalities. Vision-only models such as MerMED-FM
cover several modalities, but lack a text interface. Another strategy is to connect existing
domain-specific models: CTF \citep{chen2026ctf} integrates separately trained
radiology and pathology encoders through clinical concepts and cross-domain
prompting. In contrast, the proposed technique\ combines broad modality coverage and a language
interface within one routed encoder.

\textbf{Multiway pretraining and visual tokenization.} The proposed technique\ builds on the Multiway
Transformer of BEiT-3 \citep{wang2023beit3} and the VQ-KD tokenizer of BEiT-v2
\citep{peng2022beitv2}. MUSK \citep{xiang2025musk} applies this framework to pathology using
masked image and language modeling followed by contrastive alignment. The proposed technique\ extends the design
to multiple modalities through one tokenizer supervised by six visual teachers and visual
FFNs with modality-specific experts. The teachers supervise tokenizer training; after the
tokenizer is frozen, the VLM receives no additional teacher-feature or teacher-logit loss.

\textbf{Mixture-of-Experts and continual expansion.} MoE models route inputs through
selected parameters \citep{shazeer2017moe,fedus2022switch}. The proposed technique\ uses deterministic
modality routing and therefore needs no load-balancing loss. Unlike learned token routing,
each image uses a known modality label. LoRA \citep{hu2022lora}, adapters
\citep{houlsby2019adapters}, regularization \citep{kirkpatrick2017ewc}, and rehearsal can
reduce interference, while progressive networks preserve earlier computations through model
expansion \citep{rusu2016progressive}. MedQwen \citep{medqwen2026} routes low-rank experts in
a generative medical VLM; the proposed technique\ instead uses full-rank modality experts in a contrastive
encoder. Its preservation guarantee follows from keeping existing computation paths fixed,
not from routing alone.

\section{Conclusion}\label{sec:conclusion}
The proposed technique\ combines masked pretraining on unpaired medical images with contrastive
image--text alignment and expandable modality experts. The resulting model
supports nine imaging modalities and achieves the best or tied-best result in
$24$ of $25$ classification, linear-probe, and retrieval settings. In a
separate matched-data sequential experiment, new experts improve fundus and
OCT retrieval while the evaluated preceding fundus and pathology tasks remain
unchanged. The preservation argument follows from retaining the original
computation paths under explicit inference conditions. Together, these results
support modular expert expansion as a way to extend medical vision--language
representations while retaining existing embeddings.

\clearpage
\bibliographystyle{plainnat}
\bibliography{references}

@inproceedings{radford2021clip,
  title     = {Learning Transferable Visual Models From Natural Language Supervision},
  author    = {Radford, Alec and Kim, Jong Wook and Hallacy, Chris and Ramesh, Aditya
               and Goh, Gabriel and Agarwal, Sandhini and Sastry, Girish and Askell, Amanda
               and Mishkin, Pamela and Clark, Jack and Krueger, Gretchen and Sutskever, Ilya},
  booktitle = {International Conference on Machine Learning (ICML)},
  year      = {2021}
}

@article{zhang2023biomedclip,
  title   = {BiomedCLIP: a multimodal biomedical foundation model pretrained from
             fifteen million scientific image--text pairs},
  author  = {Zhang, Sheng and Xu, Yanbo and Usuyama, Naoto and Xu, Hanwen and Bagga, Jaspreet
             and Tinn, Robert and Preston, Sam and Rao, Rajesh and Wei, Mu and Valluri, Naveen
             and Wong, Cliff and Tupini, Andrea and Wang, Yu and Mazzola, Matt and Shukla, Swadheen
             and Liden, Lars and Gao, Jianfeng and Lungren, Matthew P. and Naumann, Tristan
             and Wang, Sheng and Poon, Hoifung},
  journal = {NEJM AI},
  year    = {2025}
}

@inproceedings{biomedica2024,
  title     = {BIOMEDICA: An Open Biomedical Image--Caption Archive, Dataset,
               and Vision--Language Models Derived from Scientific Literature},
  author    = {Lozano, Alejandro and Sun, Min Woo and Burgess, James and others},
  booktitle = {IEEE/CVF Conference on Computer Vision and Pattern Recognition (CVPR)},
  year      = {2025}
}

@article{nie2026conceptclip,
  title={An explainable biomedical foundation model via large-scale concept-enhanced vision-language pre-training},
  author={Nie, Yuxiang and He, Sunan and Bie, Yequan and Wang, Yihui and Chen, Zhixuan and Yang, Shu and Cai, Zhiyuan and Wu, Linshan and Wang, Hongmei and Wang, Xi and Cheng, Ngai Shing and Luo, Luyang and Wu, Mingxiang and Jin, Haibo and Wu, Xian and Chan, Ronald Cheong Kin and Lau, Yuk Ming and Zhang, Zhengyu and Xiao, Sushan and Yang, Can and Zhao, Yinghua and Duan, Xiaohui and Zhang, Li and Liang, Li and Zheng, Yefeng and Rajpurkar, Pranav and Chen, Hao},
  journal={Nature Biomedical Engineering},
  year={2026},
}

@article{xiang2025musk,
  title   = {A vision--language foundation model for precision oncology},
  author  = {Xiang, Jinxi and Wang, Xiyue and Zhang, Xiaoming and Xi, Yinghua
             and Eweje, Feyisope and Chen, Yijiang and Li, Yuchen and Bergstrom, Colin
             and Gopaulchan, Matthew and Kim, Ted and others},
  journal = {Nature},
  volume  = {638},
  pages   = {769--778},
  year    = {2025}
}

@article{zhou2026mermed,
  title={MerMED-FM: Multimodal, Multi-Disease Medical Imaging Foundation Model},
  author={Zhou, Yang and Quek, Chrystie Wan Ning and Zhou, Jun and Wang, Yan and Bai, Yang and Gutierrez, Laura and Ke, Yuhe and Yao, Jie and Teo, Zhen Ling and Ting, Darren Shu Jeng and others},
  journal={The Lancet Digital Health},
  volume={8},
  number={7},
  year={2026},
  publisher={Elsevier}
}

@inproceedings{medqwen2026,
  title     = {Sparse Spectral LoRA: Routed Experts for Medical VLMs},
  author    = {Nejati Manzari, Omid and Asgariandehkordi, Hojat and Koleilat, Taha
               and Xiao, Yiming and Rivaz, Hassan},
  booktitle = {Proceedings of the IEEE/CVF Conference on Computer Vision and
               Pattern Recognition (CVPR)},
  pages     = {35351--35362},
  year      = {2026}
}

@inproceedings{wang2023beit3,
  title     = {Image as a Foreign Language: BEiT Pretraining for Vision and
               Vision-Language Tasks},
  author    = {Wang, Wenhui and Bao, Hangbo and Dong, Li and Bjorck, Johan and Peng, Zhiliang
               and Liu, Qiang and Aggarwal, Kriti and Mohammed, Owais Khan and Singhal, Saksham
               and Som, Subhojit and Wei, Furu},
  booktitle = {IEEE/CVF Conference on Computer Vision and Pattern Recognition (CVPR)},
  year      = {2023}
}

@inproceedings{peng2022beitv2,
  title     = {BEiT v2: Masked Image Modeling with Vector-Quantized Visual Tokenizers},
  author    = {Peng, Zhiliang and Dong, Li and Bao, Hangbo and Ye, Qixiang and Wei, Furu},
  booktitle = {arXiv preprint arXiv:2208.06366},
  year      = {2022}
}

@inproceedings{shazeer2017moe,
  title     = {Outrageously Large Neural Networks: The Sparsely-Gated
               Mixture-of-Experts Layer},
  author    = {Shazeer, Noam and Mirhoseini, Azalia and Maziarz, Krzysztof and Davis, Andy
               and Le, Quoc and Hinton, Geoffrey and Dean, Jeff},
  booktitle = {International Conference on Learning Representations (ICLR)},
  year      = {2017}
}

@article{fedus2022switch,
  title   = {Switch Transformers: Scaling to Trillion Parameter Models with
             Simple and Efficient Sparsity},
  author  = {Fedus, William and Zoph, Barret and Shazeer, Noam},
  journal = {Journal of Machine Learning Research},
  volume  = {23},
  number  = {120},
  pages   = {1--39},
  year    = {2022}
}

@inproceedings{hu2022lora,
  title     = {LoRA: Low-Rank Adaptation of Large Language Models},
  author    = {Hu, Edward J. and Shen, Yelong and Wallis, Phillip and Allen-Zhu, Zeyuan
               and Li, Yuanzhi and Wang, Shean and Wang, Lu and Chen, Weizhu},
  booktitle = {International Conference on Learning Representations (ICLR)},
  year      = {2022}
}

@inproceedings{houlsby2019adapters,
  title     = {Parameter-Efficient Transfer Learning for NLP},
  author    = {Houlsby, Neil and Giurgiu, Andrei and Jastrzebski, Stanislaw and Morrone, Bruna
               and De Laroussilhe, Quentin and Gesmundo, Andrea and Attariyan, Mona
               and Gelly, Sylvain},
  booktitle = {International Conference on Machine Learning (ICML)},
  year      = {2019}
}

@article{rusu2016progressive,
  title   = {Progressive Neural Networks},
  author  = {Rusu, Andrei A. and Rabinowitz, Neil C. and Desjardins, Guillaume
             and Soyer, Hubert and Kirkpatrick, James and Kavukcuoglu, Koray
             and Pascanu, Razvan and Hadsell, Raia},
  journal = {arXiv preprint arXiv:1606.04671},
  year    = {2016}
}

@article{kirkpatrick2017ewc,
  title   = {Overcoming Catastrophic Forgetting in Neural Networks},
  author  = {Kirkpatrick, James and Pascanu, Razvan and Rabinowitz, Neil and Veness, Joel
             and Desjardins, Guillaume and Rusu, Andrei A. and Milan, Kieran and Quan, John
             and Ramalho, Tiago and Grabska-Barwinska, Agnieszka and others},
  journal = {Proceedings of the National Academy of Sciences},
  volume  = {114},
  number  = {13},
  pages   = {3521--3526},
  year    = {2017}
}

@article{chen2024uni,
  title   = {Towards a general-purpose foundation model for computational pathology},
  author  = {Chen, Richard J. and Ding, Tong and Lu, Ming Y. and Williamson, Drew F. K.
             and Jaume, Guillaume and Song, Andrew H. and Chen, Bowen and Zhang, Andrew
             and Shao, Daniel and Shaban, Muhammad and others},
  journal = {Nature Medicine},
  volume  = {30},
  pages   = {850--862},
  year    = {2024}
}

@article{meddinov3,
  title   = {MedDINOv3: How to adapt vision foundation models for medical
             image segmentation?},
  author  = {Li, Yuheng and Wu, Yizhou and Lai, Yuxiang and Hu, Mingzhe and
             Yang, Xiaofeng},
  journal = {arXiv preprint arXiv:2509.02379},
  year    = {2025}
}

@article{tschandl2018ham10000,
  title   = {The HAM10000 dataset, a large collection of multi-source dermatoscopic
             images of common pigmented skin lesions},
  author  = {Tschandl, Philipp and Rosendahl, Cliff and Kittler, Harald},
  journal = {Scientific Data},
  volume  = {5},
  pages   = {180161},
  year    = {2018}
}

@inproceedings{jaume2024hest,
  title     = {HEST-1k: A Dataset for Spatial Transcriptomics and Histology Image Analysis},
  author    = {Jaume, Guillaume and Doucet, Paul and Song, Andrew H. and Lu, Ming Y.
               and Almagro-P{\'e}rez, Cristina and Wagner, Sophia J. and Vaidya, Anurag J.
               and Chen, Richard J. and Williamson, Drew F. K. and Kim, Ahrong and Mahmood, Faisal},
  booktitle = {Advances in Neural Information Processing Systems (NeurIPS)},
  year      = {2024}
}

@inproceedings{xie2025medtrinity,
  title     = {MedTrinity-25M: A Large-scale Multimodal Dataset with Multigranular
               Annotations for Medicine},
  author    = {Xie, Yunfei and Zhou, Ce and Gao, Lang and Wu, Juncheng and Li, Xianhang
               and Zhou, Hong-Yu and Liu, Sheng and Xing, Lei and Zou, James
               and Xie, Cihang and Zhou, Yuyin},
  booktitle = {International Conference on Learning Representations (ICLR)},
  year      = {2025}
}

@article{soares2020sarscov2,
  title   = {{SARS-CoV-2} CT-scan dataset: A large dataset of real
             patients CT scans for {SARS-CoV-2} identification},
  author  = {Soares, Eduardo and Angelov, Plamen and Biaso, Sarah
             and Higa Froes, Michele and Kanda Abe, Daniel},
  journal = {medRxiv},
  year    = {2020},
  doi     = {10.1101/2020.04.24.20078584}
}

@inproceedings{chen2026ctf,
  title     = {Bridging Radiology and Pathology Foundation Models via Concept-Based
               Multimodal Co-Adaptation},
  author    = {Chen, Yihang and Huang, Yanyan and Wang, Fuying and Yeung, Maximus and
               Jiang, Yuming and Wang, Shujun and Yu, Lequan},
  booktitle = {International Conference on Learning Representations (ICLR)},
  year      = {2026}
}

@article{khattak2024unimed,
  title   = {UniMed-CLIP: Towards a Unified Image-Text Pretraining Paradigm for Diverse
             Medical Imaging Modalities},
  author  = {Khattak, Muhammad Uzair and Kunhimon, Shahina and Naseer, Muzammal and
             Khan, Salman and Khan, Fahad Shahbaz},
  journal = {arXiv preprint arXiv:2412.10372},
  year    = {2024}
}

@article{weinstein2013cancer,
  title   = {The Cancer Genome Atlas Pan-Cancer Analysis Project},
  author  = {Weinstein, John N. and Collisson, Eric A. and Mills, Gordon B. and Shaw, Kenna R.
             and Ozenberger, Brad A. and Ellrott, Kyle and Shmulevich, Ilya and Sander, Chris
             and Stuart, Joshua M.},
  journal = {Nature Genetics},
  volume  = {45},
  pages   = {1113--1120},
  year    = {2013}
}

@inproceedings{ikezogwo2023quilt,
  title     = {Quilt-1M: One Million Image--Text Pairs for Histopathology},
  author    = {Ikezogwo, Wisdom Oluchi and Seyfioglu, Mehmet Saygin and Ghezloo, Fatemeh
               and Geva, Dylan Stefan Chan and Mohammed, Fatwir Sheikh and Anand, Pavan Kumar
               and Krishna, Ranjay and Shapiro, Linda},
  booktitle = {Advances in Neural Information Processing Systems (NeurIPS Datasets and
               Benchmarks Track)},
  volume    = {36},
  year      = {2023}
}

@inproceedings{sun2024pathasst,
  title     = {PathAsst: A Generative Foundation AI Assistant Towards Artificial General
               Intelligence of Pathology},
  author    = {Sun, Yuxuan and Zhu, Chenglu and Zheng, Sunyi and Zhang, Kai and Sun, Lin
               and Shui, Zhongyi and Zhang, Yunlong and Li, Honglin and Yang, Lin},
  booktitle = {Proceedings of the AAAI Conference on Artificial Intelligence},
  volume    = {38},
  pages     = {5034--5042},
  year      = {2024}
}

@inproceedings{irvin2019chexpert,
  title     = {CheXpert: A Large Chest Radiograph Dataset with Uncertainty Labels
               and Expert Comparison},
  author    = {Irvin, Jeremy and Rajpurkar, Pranav and Ko, Michael and Yu, Yifan
               and Ciurea-Ilcus, Silviana and Chute, Chris and Marklund, Henrik
               and Haghgoo, Behzad and Ball, Robyn and Shpanskaya, Katie and Seekins, Jayne
               and Mong, David A. and Halabi, Safwan S. and Sandberg, Jesse K. and Jones, Ricky
               and Larson, David B. and Langlotz, Curtis P. and Patel, Bhavik N.
               and Lungren, Matthew P. and Ng, Andrew Y.},
  booktitle = {Proceedings of the AAAI Conference on Artificial Intelligence},
  volume    = {33},
  pages     = {590--597},
  year      = {2019}
}

@inproceedings{wang2017chestxray8,
  title     = {ChestX-ray8: Hospital-Scale Chest X-Ray Database and Benchmarks on
               Weakly-Supervised Classification and Localization of Common Thorax Diseases},
  author    = {Wang, Xiaosong and Peng, Yifan and Lu, Le and Lu, Zhiyong
               and Bagheri, Mohammadhadi and Summers, Ronald M.},
  booktitle = {Proceedings of the IEEE Conference on Computer Vision and Pattern
               Recognition (CVPR)},
  year      = {2017}
}

@article{borgli2020hyperkvasir,
  title   = {HyperKvasir, a Comprehensive Multi-class Image and Video Dataset for
             Gastrointestinal Endoscopy},
  author  = {Borgli, Hanna and Thambawita, Vajira and Smedsrud, Pia H. and Hicks, Steven
             and Jha, Debesh and Eskeland, Sigrun L. and Randel, Kristin Ranheim
             and Pogorelov, Konstantin and Lux, Mathias and Nguyen, Duc Tien Dang
             and Johansen, Dag and Griwodz, Carsten and Stensland, H{\aa}kon K.
             and Garcia-Ceja, Enrique and Schmidt, Peter T. and Hammer, Hugo L.
             and Riegler, Michael A. and Halvorsen, P{\aa}l and de Lange, Thomas},
  journal = {Scientific Data},
  volume  = {7},
  pages   = {283},
  year    = {2020}
}

@article{smedsrud2021kvasircapsule,
  title   = {Kvasir-Capsule, a Video Capsule Endoscopy Dataset},
  author  = {Smedsrud, Pia H. and Thambawita, Vajira and Hicks, Steven A.
             and Gjestang, Henrik and Nedrejord, Oda Olsen and N{\ae}ss, Espen
             and Borgli, Hanna and Jha, Debesh and Berstad, Tor Jan Derek
             and Eskeland, Sigrun L. and Lux, Mathias and Espeland, H{\aa}vard
             and Petlund, Andreas and Nguyen, Duc Tien Dang and Garcia-Ceja, Enrique
             and Johansen, Dag and Schmidt, Peter T. and Toth, Ervin and Hammer, Hugo L.
             and de Lange, Thomas and Riegler, Michael A. and Halvorsen, P{\aa}l},
  journal = {Scientific Data},
  volume  = {8},
  pages   = {142},
  year    = {2021}
}

@article{nwoye2022rendezvous,
  title   = {Rendezvous: Attention Mechanisms for the Recognition of Surgical Action
             Triplets in Endoscopic Videos},
  author  = {Nwoye, Chinedu Innocent and Yu, Tong and Gonzalez, Cristians
             and Seeliger, Barbara and Mascagni, Pietro and Mutter, Didier
             and Marescaux, Jacques and Padoy, Nicolas},
  journal = {Medical Image Analysis},
  volume  = {78},
  pages   = {102433},
  year    = {2022}
}

@article{ruckert2024rocov2,
  title   = {{ROCOv2}: Radiology Objects in {COntext} Version 2, an Updated
             Multimodal Image Dataset},
  author  = {R{\"u}ckert, Johannes and Bloch, Louise and Br{\"u}ngel, Raphael
             and Idrissi-Yaghir, Ahmad and Sch{\"a}fer, Henning and Schmidt, Cynthia S.
             and Koitka, Sven and Pelka, Obioma and Ben Abacha, Asma
             and Seco de Herrera, Alba G. and M{\"u}ller, Henning and Horn, Peter A.
             and Nensa, Felix and Friedrich, Christoph M.},
  journal = {Scientific Data},
  volume  = {11},
  pages   = {688},
  year    = {2024},
  doi     = {10.1038/s41597-024-03496-6}
}

@inproceedings{yao2025retcop,
  title     = {Continual Retinal Vision-Language Pre-training upon Incremental
               Imaging Modalities},
  author    = {Yao, Yuang and Wu, Ruiqi and Zhou, Yi and Zhou, Tao},
  booktitle = {International Conference on Medical Image Computing and
               Computer-Assisted Intervention (MICCAI)},
  year      = {2025}
}

@article{boers2024gastronet,
  title   = {Foundation models in gastrointestinal endoscopic {AI}: Impact of architecture, pre-training approach and data efficiency},
  author  = {Boers, T. G. W. and Fockens, K. N. and van der Putten, J. A. and Jaspers, T. J. M. and Kusters, C. H. J. and Jukema, J. B. and Jong, M. R. and Struyvenberg, M. R. and de Groof, J. and Bergman, J. J. and de With, P. H. N. and van der Sommen, F.},
  journal = {Medical Image Analysis},
  volume  = {98},
  pages   = {103298},
  year    = {2024},
  doi     = {10.1016/j.media.2024.103298}
}

@inproceedings{gautam2025kvasir,
  title     = {Kvasir-VQA-x1: A multimodal dataset for medical reasoning and robust {MedVQA} in gastrointestinal endoscopy},
  author    = {Gautam, Sushant and Riegler, Michael A. and Halvorsen, P{\aa}l},
  booktitle = {Data Engineering in Medical Imaging},
  publisher = {Springer},
  year      = {2025},
  doi       = {10.1007/978-3-032-08009-7_6}
}

@misc{gow2026mimicivecho,
  title        = {{MIMIC-IV-ECHO}: Echocardiogram matched subset, version 1.0},
  author       = {Gow, Brian and others},
  year         = {2026},
  howpublished = {PhysioNet},
  doi          = {10.13026/nrjh-5r77}
}

@inproceedings{huang2021deepopht,
  title     = {{DeepOpht}: Medical report generation for retinal images via deep models and visual explanation},
  author    = {Huang, Jia-Hong and others},
  booktitle = {IEEE Winter Conference on Applications of Computer Vision},
  pages     = {2442--2452},
  year      = {2021},
}

@misc{isic_archive,
  title        = {{ISIC} Archive},
  author       = {{International Skin Imaging Collaboration}},
  year         = {2026},
  note         = {Accessed September 20, 2026},
  howpublished = {\url{https://www.isic-archive.com/blank-1}}
}

@article{jiao2024usfm,
  title   = {{USFM}: A universal ultrasound foundation model generalized to tasks and organs towards label efficient image analysis},
  author  = {Jiao, Jing and Zhou, Jin and Li, Xiaokang and Xia, Menghua and Huang, Yi and Huang, Lihong and Wang, Na and Zhang, Xiaofan and Zhou, Shichong and Wang, Yuanyuan and Guo, Yi},
  journal = {arXiv preprint arXiv:2401.00153},
  year    = {2024}
}

@inproceedings{jin2026ultrasoundclip,
  title     = {Ultrasound-{CLIP}: Semantic-aware contrastive pre-training for ultrasound image--text understanding},
  author    = {Jin, Jiayun and Chai, Haolong and Huang, Xueying and Guo, Xiaoqing and Zheng, Zengwei and Zhou, Zhan and Wang, Junmei and Wang, Xinyu and Liu, Jie and Zhou, Binbin},
  booktitle = {IEEE/CVF Conference on Computer Vision and Pattern Recognition},
  year      = {2026},
  note      = {Dataset: \url{https://huggingface.co/datasets/JJY-0823/US-365K}}
}

@misc{johnson2024mimiccxr,
  title        = {{MIMIC-CXR-JPG}---chest radiographs with structured labels, version 2.1.0},
  author       = {Johnson, Alistair and Lungren, Matthew and Peng, Yifan and Lu, Zhiyong and Mark, Roger and Berkowitz, Steven and Horng, Steven},
  year         = {2024},
  howpublished = {PhysioNet},
  doi          = {10.13026/jsn5-t979}
}

@inproceedings{li2026gmai,
  title     = {{GMAI-VL} \& {GMAI-VL-5.5M}: A large vision--language model and a comprehensive multimodal dataset towards general medical {AI}},
  author    = {Li, T. and others},
  booktitle = {Proceedings of the AAAI Conference on Artificial Intelligence},
  volume    = {40},
  pages     = {23177--23185},
  year      = {2026},
  url       = {https://github.com/uni-medical/GMAI-VL}
}

@inproceedings{lin2023pmc,
  title     = {{PMC-CLIP}: Contrastive language--image pre-training using biomedical documents},
  author    = {Lin, Weixiong and Zhao, Ziheng and Zhang, Xiaoman and Wu, Chaoyi and Zhang, Ya and Wang, Yanfeng and Xie, Weidi},
  booktitle = {Medical Image Computing and Computer Assisted Intervention},
  pages     = {525--536},
  year      = {2023},
  url       = {https://arxiv.org/abs/2303.07240}
}

@inproceedings{ma2021ldpolyp,
  title     = {{LDPolypVideo} benchmark: A large-scale colonoscopy video dataset of diverse polyps},
  author    = {Ma, Yinghao and Chen, Xianhao and Cheng, Kai and Li, Yang and Sun, Bin},
  booktitle = {Medical Image Computing and Computer Assisted Intervention},
  year      = {2021},
  url       = {https://miccai2021.org/openaccess/paperlinks/2021/09/01/268-Paper1167.html}
}

@article{marzola2021stmus,
  title   = {Deep learning segmentation of transverse musculoskeletal ultrasound images for neuromuscular disease assessment},
  author  = {Marzola, Francesco and others},
  journal = {Computers in Biology and Medicine},
  volume  = {135},
  pages   = {104623},
  year    = {2021},
  doi     = {10.1016/j.compbiomed.2021.104623}
}

@article{menze2015brats,
  title   = {The multimodal brain tumor image segmentation benchmark ({BRATS})},
  author  = {Menze, Bjoern H. and others},
  journal = {IEEE Transactions on Medical Imaging},
  volume  = {34},
  number  = {10},
  pages   = {1993--2024},
  year    = {2015},
  doi     = {10.1109/TMI.2014.2377694}
}

@article{perezgarcia2024raddino,
  title   = {{RAD-DINO}: Exploring scalable medical image encoders beyond text supervision},
  author  = {P{\'e}rez-Garc{\'i}a, Fernando and others},
  journal = {arXiv preprint arXiv:2401.10815},
  year    = {2024}
}

@misc{pmc_oa,
  title        = {{PMC} Open Access Subset},
  author       = {{National Library of Medicine}},
  year         = {2026},
  note         = {Accessed September 20, 2026},
  howpublished = {\url{https://pmc.ncbi.nlm.nih.gov/tools/openftlist/}}
}

@misc{qiaiuncc_oct_summary,
  title        = {{OCT-summary-Dataset}},
  author       = {{QIAIUNCC}},
  year         = {2026},
  note         = {Accessed September 20, 2026},
  howpublished = {\url{https://huggingface.co/datasets/QIAIUNCC/OCT-summary-Dataset}}
}

@inproceedings{tyagi2024nerve,
  title     = {Nerve block target localization and needle guidance for autonomous robotic ultrasound guided regional anesthesia},
  author    = {Tyagi, Ayush and others},
  booktitle = {IEEE/RSJ International Conference on Intelligent Robots and Systems},
  year      = {2024},
  url       = {https://arxiv.org/abs/2308.03717},
  note      = {Dataset: \url{https://github.com/Regional-US/brachial_plexus}}
}

@inproceedings{wang2021echocp,
  title     = {{EchoCP}: An echocardiography dataset in contrast transthoracic echocardiography for patent foramen ovale diagnosis},
  author    = {Wang, T. and Li, Z. and Huang, M. and Zhuang, J. and Bi, S. and Zhang, J. and Shi, Y. and Fei, H. and Xu, X.},
  booktitle = {Medical Image Computing and Computer Assisted Intervention},
  year      = {2021},
  url       = {https://arxiv.org/abs/2105.08267}
}

@inproceedings{wang2023endofm,
  title     = {Foundation model for endoscopy video analysis via large-scale self-supervised pre-train},
  author    = {Wang, Zheng and Liu, Chang and Zhang, Shijie and Dou, Qi},
  booktitle = {Medical Image Computing and Computer Assisted Intervention},
  pages     = {101--111},
  year      = {2023},
  url       = {https://github.com/med-air/Endo-FM}
}

@article{gamper2020pannuke,
  title={Pannuke dataset extension, insights and baselines},
  author={Gamper, Jevgenij and Koohbanani, Navid Alemi and Benes, Ksenija and Graham, Simon and Jahanifar, Mostafa and Khurram, Syed Ali and Azam, Ayesha and Hewitt, Katherine and Rajpoot, Nasir},
  journal={arXiv preprint arXiv:2003.10778},
  year={2020}
}

@article{butsanets2025radimagenet,
  title={RadImageNet-VQA: A Large-Scale CT and MRI Dataset for Radiologic Visual Question Answering},
  author={Butsanets, L{\'e}o and Corbi{\`e}re, Charles and Khlaut, Julien and Manceron, Pierre and Dancette, Corentin},
  journal={arXiv preprint arXiv:2512.17396},
  year={2025}
}

@article{demner2016preparing,
  title={Preparing a collection of radiology examinations for distribution and retrieval},
  author={Demner-Fushman, Dina and Kohli, Marc D and Rosenman, Marc B and Shooshan, Sonya E and Rodriguez, Laritza and Antani, Sameer and Thoma, George R and McDonald, Clement J},
  journal={Journal of the American Medical Informatics Association},
  volume={23},
  number={2},
  pages={304--310},
  year={2016},
  publisher={Oxford University Press}
}

@misc{forithmus2026mrrate,
  title={{MR-RATE}: A Vision-Language Foundation Model and Dataset for Magnetic Resonance Imaging},
  author={{Forithmus Team} and {University of Zurich} and {NVIDIA}},
  howpublished={\url{https://huggingface.co/datasets/Forithmus/MR-RATE}},
  year={2026},
  note={Accessed: 2026}
}

@inproceedings{sun2025pathgen,
  title={PathGen-1.6M: 1.6 Million Pathology Image-text Pairs Generation through Multi-agent Collaboration},
  author={Sun, Yuxuan and others},
  booktitle={International Conference on Learning Representations (ICLR)},
  year={2025}
}

@article{kawahara2019seven,
  title={Seven-point checklist and skin lesion classification using multitask multimodal neural nets},
  author={Kawahara, Jeremy and Daneshvar, Sara and Argenziano, Giuseppe and Hamarneh, Ghassan},
  journal={IEEE Journal of Biomedical and Health Informatics},
  volume={23},
  number={2},
  pages={538--546},
  year={2019},
  publisher={IEEE}
}

@article{yu2026endoclip,
  title         = {A Report-Grounded Vision--Language Foundation Model
                   for Colonoscopy from 280,000 Routine Reports},
  author        = {Yu, Jia and Zhu, Yan and He, Yili and Wang, Zilong
                   and Jiang, Xinyang and Fu, Peiyao and Yang, Ruijie
                   and Chen, Tianyi and Li, Siyuan and Wang, Zhihua
                   and Wu, Fei and Li, Quanlin and Yang, Xian
                   and Zhou, Pinghong and Wang, Shuo},
  journal       = {arXiv preprint arXiv:2607.28466},
  year          = {2026},
  eprint        = {2607.28466},
  archivePrefix = {arXiv},
  primaryClass  = {cs.CV},
  url           = {https://arxiv.org/abs/2607.28466}
}

@article{li2025ultrasound,
  title   = {Ultrasound Report Generation With Cross-Modality Feature Alignment via Unsupervised Guidance},
  author  = {Li, Jun and Su, Tongkun and Zhao, Baoliang and
             Lv, Faqin and Wang, Qiong and Navab, Nassir and
             Hu, Ying and Jiang, Zhongliang},
  journal = {IEEE Transactions on Medical Imaging},
  year    = {2025},
  volume  = {44},
  number  = {1},
  pages   = {19--30},
  doi     = {10.1109/TMI.2024.3424978}
}

@article{spanhol2016dataset,
  title={A dataset for breast cancer histopathological image classification},
  author={Spanhol, Fabio A and Oliveira, Luiz S and Petitjean, Caroline and Heutte, Laurent},
  journal={IEEE Transactions on Biomedical Engineering},
  volume={63},
  number={7},
  pages={1455--1462},
  year={2016},
  publisher={IEEE}
}

@article{borkowski2019lung,
  title={Lung and Colon Cancer Histopathological Image Dataset ({LC25000})},
  author={Borkowski, Andrew A. and Bui, Marilyn M. and Thomas, L. Brannon and Wilson, Catherine P. and DeLand, Lauren A. and Mastorides, Stephen M.},
  journal={arXiv preprint arXiv:1912.12142},
  year={2019}
}

@article{kermany2018identifying,
  title   = {Identifying Medical Diagnoses and Treatable Diseases
             by Image-Based Deep Learning},
  author  = {Kermany, Daniel S. and Goldbaum, Michael and
             Cai, Wenjia and others},
  journal = {Cell},
  year    = {2018},
  volume  = {172},
  number  = {5},
  pages   = {1122--1131.e9},
  doi     = {10.1016/j.cell.2018.02.010}
}

@article{metmer2024casia,
  title   = {An open chest {X-ray} dataset with benchmarks for
             automatic radiology report generation in {French}},
  author  = {Metmer, Hichem and Yang, Xiaoshan},
  journal = {Neurocomputing},
  year    = {2024},
  volume  = {609},
  pages   = {128478},
  doi     = {10.1016/j.neucom.2024.128478}
}
\clearpage
\appendix
\section*{Appendices}
The appendices provide data and implementation details, controlled comparisons,
adaptation and retention protocols, and qualitative image examples.
\section{Datasets and evaluation tasks}\label{app:leak}
We evaluate zero-shot classification, linear probing, and image--text retrieval across nine imaging modalities: X-ray, CT, MRI, ultrasound, pathology, dermatology, endoscopy, fundus photography, and OCT. Within each experiment, all models use the same evaluation set and scoring procedure.

\textbf{Training data.} The reported ViT-L model begins with a seven-modality core. We train this core on $19{,}393{,}075$ image rows for masked pretraining and $3{,}229{,}073$ image--text pairs for alignment. We then train the fundus and OCT experts using $88{,}143$ and $19{,}396$ additional image--text pairs, respectively, without further masked pretraining. Across core alignment and both extension stages, training uses $3{,}336{,}612$ pairs. Table~\ref{tab:training-data} gives the counts and sources for both training stages.

\begin{table}[p]\centering\small
\caption{Training data for the ViT-L core and modality extensions. Stage~1 uses image rows for masked pretraining, and Stage~2 uses image--text pairs for alignment. The pathology Stage~1 count is measured in extracted patches. A dash indicates that no additional Stage~1 training is used for an extension.}
\label{tab:training-data}
\setlength{\tabcolsep}{3pt}
\renewcommand{\arraystretch}{1.2}
\begin{tabularx}{\linewidth}{@{}lrr>{\raggedright\arraybackslash}X@{}}\toprule
\textbf{Modality} & \textbf{Stage 1 rows} & \textbf{Stage 2 pairs} & \textbf{Sources: Stage 1 $\rightarrow$ Stage 2} \\ \midrule
Pathology & 5,956,670\textsuperscript{$\dagger$} & 1,139,291 & TCGA patches~\citep{weinstein2013cancer} $\rightarrow$ Quilt-1M~\citep{ikezogwo2023quilt}; PathCap~\citep{sun2024pathasst} \\[2pt]
Endoscopy & 4,864,701 & 158,672 & Endo-FM private~\citep{wang2023endofm}; Hyper-Kvasir~\citep{borgli2020hyperkvasir}; Kvasir-Capsule~\citep{smedsrud2021kvasircapsule}; CholecT45~\citep{nwoye2022rendezvous}; LDPolypVideo~\citep{ma2021ldpolyp}; Colonoscopic $\rightarrow$ Kvasir-VQA-x1~\citep{gautam2025kvasir} \\[2pt]
MRI & 3,877,303 & 463,588 & MedTrinity MRI~\citep{xie2025medtrinity}, including BraTS~\citep{menze2015brats} $\rightarrow$ MedTrinity MRI~\citep{xie2025medtrinity} \\[2pt]
CT & 3,336,658 & 347,842 & MedTrinity CT~\citep{xie2025medtrinity} $\rightarrow$ MedTrinity CT~\citep{xie2025medtrinity} \\[2pt]
X-ray & 621,616 & 212,498 & MIMIC-CXR-JPG~\citep{johnson2024mimiccxr}; CheXpert~\citep{irvin2019chexpert}; NIH ChestX-ray14~\citep{wang2017chestxray8} $\rightarrow$ CheXpert~\citep{irvin2019chexpert} \\[2pt]
Ultrasound & 193,308 & 364,362 & MIMIC-IV-ECHO~\citep{gow2026mimicivecho}; brachial plexus~\citep{tyagi2024nerve}; STMUS~\citep{marzola2021stmus}; EchoCP~\citep{wang2021echocp} $\rightarrow$ US-365K~\citep{jin2026ultrasoundclip} \\[2pt]
Dermatology & 542,819 & 542,820 & ISIC, with known HAM10000 IDs excluded $\rightarrow$ ISIC metadata-derived captions~\citep{isic_archive} \\[2pt]
\midrule
\textbf{Seven-modality core} & \textbf{19,393,075\textsuperscript{$\dagger$}} & \textbf{3,229,073} & \\
Fundus extension & -- & 88,143 & DeepEyeNet~\citep{huang2021deepopht}; GMAI-VL-5.5M~\citep{li2026gmai}; MedConcept-23M~\citep{nie2026conceptclip}; PMC-derived pairs~\citep{lin2023pmc,pmc_oa} \\[2pt]
OCT extension & -- & 19,396 & OCT-summary-Dataset~\citep{qiaiuncc_oct_summary} \\
\midrule
\textbf{Total} & \textbf{19,393,075\textsuperscript{$\dagger$}} & \textbf{3,336,612} & \\
\bottomrule\end{tabularx}
\end{table}

\textbf{Classification and linear probing.} PanNuke ~\citep{gamper2020pannuke} contains 19 tissue classes. HEST ~\citep{jaume2024hest} contains eight tissue classes derived from patch captions. BrainTumor contains four MRI classes, and the RadImageNet-VQA ~\citep{butsanets2025radimagenet} subset contains 574 MRI images labeled by anatomy. The pneumonia benchmark contains pediatric chest radiographs. SARS-CoV-2 CT-scan evaluates binary COVID classification on 2,481 CT images~\citep{soares2020sarscov2}.

\textbf{Retrieval.} The four main retrieval benchmarks are OpenI ~\citep{demner2016preparing} for X-ray, MR-RATE ~\citep{forithmus2026mrrate}for MRI, Derm7pt ~\citep{kawahara2019seven} for dermatology, and PathGen-600 ~\citep{sun2025pathgen} for pathology. They contain $3{,}826$, $505$, $1{,}011$, and $600$ image--text pairs, respectively. The broader retention experiment also uses Liver-US ~\citep{li2025ultrasound} for ultrasound and EndoReport ~\citep{yu2026endoclip} for endoscopy. All retrieval evaluations rank normalized image and text embeddings using cosine similarity. Each experiment section provides its positive-matching and aggregation details.

\textbf{Linear-probe splits.} The ViT-B linear-probe evaluation uses stratified cross-validation. HEST is split at the image level rather than the slide level. Section~\ref{app:params} provides full details.

\section{Method and implementation details}\label{app:params}
\textbf{Parameter counts.} The nine-modality ViT-L model used by the proposed technique contains $1.280$ billion stored parameters, including the text encoder and projection heads. A single modality's image pathway contains $355.40$ million parameters. This pathway count includes the shared visual pathway and the selected expert at each routed layer, but excludes the text encoder and unselected experts. Table~\ref{tab:params} compares the proposed technique with the baseline models.

\begin{table}[ht]\centering\footnotesize
\caption{Architecture and parameter counts in millions ($10^6$). Image path denotes selected-path image parameters for the proposed technique and visual-module parameters for the baselines. Total denotes the full stored inference model. MerMED is vision-only.}
\label{tab:params}
\setlength{\tabcolsep}{3pt}
\resizebox{\linewidth}{!}{%
\begin{tabular}{@{}llrrr@{}}\toprule
Model & Vision backbone & Input & Image path (M) & Total (M) \\ \midrule
Proposed technique ViT-L (9-mod) & Multiway ViT-L/16 + DA-MoE & 384 & 355.40 & 1,280.00 \\
BMC & ViT-B/16 & 224 & 86.19 & 195.90 \\
BMCA & ViT-L/14 & 224 & 303.97 & 454.16 \\
ConceptCLIP & SigLIP SO400M/14 & 384 & 428.23 & 542.87 \\
UniMed-CLIP & ViT-B/16 & 224 & 86.19 & 195.90 \\
MerMED (inference backbone) & ViT-B/16 & 224 & 85.80 & 85.80 \\
\bottomrule\end{tabular}
}
\end{table}

\textbf{Tokenizer and two-stage training.}
We train one VQ-KD tokenizer with $8{,}192$ visual codes and freeze it
for both ViT-B and ViT-L training. Its targets come from six
modality-specific visual teachers: UNI for pathology, MedDINOv3 for CT,
USFM for ultrasound, RAD-DINO for X-ray, GastroNet-5M for endoscopy,
and BMC for MRI and dermatology. Stage 1 performs masked-image
and masked-language prediction using separate image and text forward
passes. Stage 2 uses a symmetric contrastive loss to align projected
and normalized image and text embeddings. Both stages use modality
routing.

\textbf{Core training schedule.} Masked pretraining runs for $130{,}000$ steps, including $2{,}000$ warmup steps. Image--text alignment runs for $15$ epochs, including one warmup epoch. The maximum text length is $100$ tokens in both stages.

\textbf{Stage 2 objective and optimization.} We optimize the ViT-L core using the symmetric diagonal InfoNCE objective in Section~\ref{sec:train}. The text encoder remains trainable, and its learning rate is $0.1$ times the vision learning rate. The learned logit scale is $\gamma=\min\{\exp(a),100\}$, giving an effective temperature of $1/\gamma$. Cross-GPU gathering over seven GPUs with $32$ pairs per GPU gives $B=224$ pairs in each contrastive calculation. We accumulate gradients over $18$ such microbatches, so each optimizer update uses $32\times7\times18=4{,}032$ pairs, but the negative pool remains $224$ pairs per contrastive calculation. The contrastive helper additionally divides its returned loss by the distributed world size; this changes optimizer scaling, not the positive or negative sets. The matched-data expansion experiment instead uses the equal-caption multiple-positive objective specified in Table~\ref{tab:matched-settings}.

\textbf{Domain-aware routing.} All image modalities share the image-attention parameters. The text pathway uses separate attention parameters. At each routed image layer, we add the shared image FFN output to the output of the expert selected by the given modality ID:
\[
\operatorname{FFN}(x,m)=\operatorname{FFN}_{\mathrm{shared}}(x)+\operatorname{Expert}_{m}(x).
\]
Here, $m$ denotes the image modality. Only the selected expert contributes to the layer output. However, this formulation does not claim that the implementation avoids every computation associated with unselected experts. Paired image--text training supervises the image expert; it does not add or train a modality-specific text expert.

\textbf{Learning a new modality with a frozen text encoder.} Each training pair is processed by both encoders. The image passes through the frozen shared visual pathway and the trainable expert for the new modality. The corresponding caption passes through the frozen text encoder. A contrastive loss aligns the resulting image and text embeddings. We compute text embeddings for every batch, but we do not update the text encoder. Only the image-expert FFNs for the new modality are updated. In this way, the new expert learns to map images into the existing text embedding space without requiring a new text expert.

\textbf{Preserving existing pathways.} During new-expert training, we freeze the shared visual parameters, all existing experts, the text encoder, and the projection heads. Existing modalities continue to use their original routes. When persistent state, routing, preprocessing, inference operations, scoring rules, candidate sets, and downstream classifiers also remain unchanged, the existing image and text embedding functions are preserved. The original 24-layer ViT-L core routes layers $\{4,8,12,16,20,24\}$, whereas the full-data expansion model uses routed FFNs in all 24 layers. When we convert the remaining layers, FFN remapping and zero-output branches preserve the original computation for existing modalities. A new modality expert therefore contains FFN parameters at every routed layer, rather than a single FFN block. On the evaluated tasks, we verify that the saved features and predictions are bitwise identical. This is an empirical verification on those inputs, not an unconditional claim about every possible output. Scaled similarity logits need not be identical because the assembled checkpoint sequence includes a separate change to the learned logit-scale parameter.

\textbf{Full-data ViT-L expert training.} We train the fundus and OCT experts independently from the same seven-modality base using $88{,}143$ and $19{,}396$ image--text pairs, respectively. Neither expert receives additional masked pretraining. We select the fundus checkpoint at epoch $8$ and the OCT checkpoint at epoch $15$, when their training losses plateau. The final full-data model uses these checkpoints. These runs are separate from the matched-data experiment, which uses subsets of $20{,}000$ fundus pairs and $16{,}000$ OCT pairs.

\begin{table}[ht]\centering\small
\caption{Full-data ViT-L fundus and OCT expert training. Configured epochs are the training budgets; selected epochs identify the checkpoints used in the final full-data assembly.}
\label{tab:full-expert-settings}
\resizebox{\linewidth}{!}{%
\begin{tabular}{@{}lcc@{}}\toprule
Setting & Fundus & OCT \\ \midrule
Training pairs & 88,143 & 19,396 \\
Initialization & Seven-modality base & Seven-modality base \\
Additional Stage 1 & None & None \\
Learning rate & $10^{-4}$ & $10^{-4}$ \\
Global batch size & 112 & 112 \\
Per-GPU batch size & 16 & 16 \\
Configured epochs & 10 & 40 \\
Selected checkpoint epoch & 8 & 15 \\
Selection criterion & Training-loss plateau & Training-loss plateau \\
\bottomrule\end{tabular}
}
\end{table}

\textbf{Matched-data expansion experiment.} This experiment starts from the seven-modality ViT-L checkpoint and keeps its original routed-layer structure. The $20{,}000$ fundus pairs are selected from the available $88{,}143$-pair fundus training dataset, and the $16{,}000$ OCT pairs are selected from the available $19{,}396$-pair OCT training dataset. We initialize the new fundus and OCT expert FFNs from the shared FFNs. At each stage, only the expert for the current modality is trainable. The shared parameters, previously trained experts, text pathway, and learned logit-scale parameter remain frozen. After selecting the fundus checkpoint, we train the OCT expert while keeping the fundus expert fixed. Its equal-caption multiple-positive contrastive loss differs from the diagonal-only core Stage-2 equation. This sequential experiment differs from the final model assembly, which combines independently trained full-data experts.

\begin{table}[h]\centering\small
\caption{Settings for the proposed technique's matched-data expansion experiment. The $20{,}000$ fundus pairs are selected from the available paired fundus training dataset; the $16{,}000$ OCT pairs are selected from the available paired OCT training dataset.}
\label{tab:matched-settings}
\resizebox{\linewidth}{!}{%
\begin{tabular}{@{}lp{8cm}@{}}\toprule
Setting & Value \\ \midrule
Training pairs & Fundus: $20{,}000$; OCT: $16{,}000$ \\
Seeds & $0,1,2$ \\
Optimizer & AdamW; weight decay $0.01$ \\
Batch size & $32$ image--text pairs \\
Candidate learning rates & $10^{-5}$ and $3\times10^{-5}$ \\
Training length & Minimum $8$, maximum $20$ epochs per candidate \\
Early stopping & Four epochs without a cumulative validation gain of $0.1$ percentage points relative to the stopping reference \\
Checkpoint selection & Highest current-task validation mean bidirectional Recall@$\{1,10,50\}$; first maximum on ties \\
Image preprocessing & Bicubic resize and center crop to $384\times384$; tensor values in $[0,1]$ \\
Text input & SentencePiece; maximum length $256$ \\
Loss & Symmetric contrastive loss with identical captions treated as multiple positives \\
Logit scale & Frozen at the initial checkpoint value \\
\bottomrule\end{tabular}
}
\end{table}

\textbf{ViT-B classification comparison.} We resize and center-crop each image to $384\times384$. For zero-shot classification, we compute the cosine similarity between normalized projected image embeddings and class-text embeddings. We encode and normalize each class prompt separately, average the prompt embeddings within each class, and normalize the result again. The maximum text length is $100$ tokens. Both models use the same procedure for each benchmark.

For the six linear-probe benchmarks, we use five-fold stratified cross-validation with shuffling and seed $0$. The input features are the global image representations before projection and normalization. Within each fold, we fit a StandardScaler on the training split and then fit logistic regression with $C=1$, balanced class weights, the L-BFGS solver, and a maximum of $3{,}000$ iterations. We report the mean balanced accuracy across folds. The folds are split by data row, not by patient or slide.

\textbf{Scope of identity checks.} The unchanged saved features establish identity only for the evaluated inputs. Other checkpoint changes, such as the logit-scale change in the independently assembled sequence, are separate from these feature checks. A finite benchmark cannot establish that every output is identical for every possible input.

\section{ViT-B comparison with and without DA-MoE}\label{app:ablation}
We compare a ViT-B implementation of the proposed technique with and without modality experts. Both models use the same benchmark-specific evaluation procedure. This comparison is separate from the ViT-L results in the main text.

Both ViT-B variants are trained from random initialization without loading a pretrained vision--language checkpoint. They start with identical image and text weights and use the same frozen pretrained visual tokenizer and training subsets. The routed model's expert output projections are initialized to zero, so the experts initially leave the shared pathway unchanged. Both variants use the same optimization settings and a fixed budget of five epochs for Stage 1 and five epochs for Stage 2. During Stage 2, the text learning rate is $0.1$ times the vision learning rate. We evaluate the final Stage-2 checkpoint for each model without early stopping or validation-based checkpoint selection. Table~\ref{tab:ablation-settings} provides the full configuration. This experiment compares training with and without modality experts under a limited, matched training budget. We did not train a ViT-L model without experts.

\begin{table}[h]\centering\small
\caption{Settings for the ViT-B comparison with and without DA-MoE (Section~\ref{app:ablation}).}
\label{tab:ablation-settings}
\resizebox{\linewidth}{!}{%
\begin{tabular}{@{}lp{8.5cm}@{}}\toprule
Setting & Value \\ \midrule
Shared initialization & Identical randomly initialized shared weights; no pretrained backbone checkpoint \\
Expert initialization & Shared-FFN initialization with zero output projections; routed model only \\
Visual tokenizer & Same pretrained, frozen VQ-KD tokenizer for both models \\
Stage-1 images & $69{,}902$ across seven modalities; 98 validation-overlapping image paths excluded \\
Stage-1 text & $3{,}229{,}073$ unpaired texts, sampled separately from images \\
Stage-1 duration & Five image-data epochs; $1{,}560$ optimizer updates \\
Stage-1 batch & Seven GPUs $\times$ 32 images and 32 independent texts per GPU \\
Stage-1 masking & $40\%$ image patches; $15\%$ text tokens \\
Stage-1 learning rate & Peak $2\times10^{-4}$, minimum $10^{-5}$; 156-update warmup, then cosine decay \\
Stage-1 objective & Equal-weighted sum of mean masked-image and mean masked-language cross-entropy losses \\
Stage-2 pairs & $164{,}605$ training / $8{,}666$ validation \\
Stage-2 effective batch & $4{,}032$ pairs: 448-pair distributed microbatch $\times$ 9 accumulation steps \\
Stage-2 learning rate & Peak $10^{-4}$; text learning-rate multiplier $0.1$ \\
Stage-2 schedule & Five epochs; one-epoch warmup, then cosine decay \\
Checkpoint used for evaluation & Final Stage-2 epoch 5 for both variants \\
Logit scale & Trainable; initial effective temperature $0.02$; scale capped at $100$ \\
Stage-2 objective & Symmetric contrastive loss; auxiliary MLM, load balancing, and distillation inactive \\
Image resolution / text limit & $384\times384$ / 100 tokens \\
\bottomrule\end{tabular}
}
\end{table}

Both variants use the same fixed training budget. Validation scores are monitored but do not determine stopping or checkpoint selection. Five epochs do not necessarily establish convergence.

\begin{table}[h]\centering\small
\caption{\textbf{Comparison with and without DA-MoE, the proposed technique (ViT-B) vs.\ no-MoE ViT-B.} Zero-shot and linear probe
are balanced accuracy; retrieval is mean $R@\{1,10,50\}$ over both directions. All retrieval benchmarks use cosine similarity. PathGen-600 treats rows with identical captions as positives.
Both variants share the same random initialization, training data, optimization settings, and training seed (Table~\ref{tab:ablation-settings}); \best{best} per row.}
\label{tab:ablation-full}
\resizebox{\linewidth}{!}{%
\begin{tabular}{llccc}
\toprule
Benchmark & Modality & Proposed technique\ ViT-B & no-MoE ViT-B & $\Delta$ \\
\midrule
\multicolumn{5}{l}{\emph{Zero-shot} (n=8 benchmarks)}\\
HAM10000            & dermatology & \best{.1401} & .1276 & $+.0125$ \\
LC25000             & pathology   & \best{.2000} & .0652 & $+.1348$ \\
Thyroid             & ultrasound  & \best{.5091} & .4608 & $+.0483$ \\
BreaKHis            & pathology   & \best{.5000} & \best{.5000} & $.0000$ \\
CASIA-CXR           & X-ray       & \best{.3200} & .2000 & $+.1200$ \\
Pneumonia           & X-ray       & .5107 & \best{.5346} & $-.0239$ \\
PanNuke (19 tissues)& pathology   & \best{.0526} & .0499 & $+.0027$ \\
SARS-CoV-2 CT-scan  & CT          & \best{.4996} & .4480 & $+.0516$ \\
\quad\emph{zero-shot mean} & & \best{.3415} & .2983 & $+.0433$ \\
\midrule
\multicolumn{5}{l}{\emph{Linear probe} (n=6 benchmarks)}\\
BrainTumor          & MRI         & \best{.8683} & .8568 & $+.0115$ \\
HAM10000            & dermatology & .4289 & \best{.4325} & $-.0036$ \\
RadImageNet         & MRI         & \best{.7582} & .6908 & $+.0674$ \\
TN5000              & ultrasound  & \best{.7667} & .7475 & $+.0192$ \\
CT-Kidney           & CT          & \best{.9355} & .9142 & $+.0213$ \\
HEST                & pathology   & \best{.9911} & .9863 & $+.0048$ \\
\quad\emph{linear-probe mean} & & \best{.7915} & .7714 & $+.0201$ \\
\midrule
\multicolumn{5}{l}{\emph{Cross-modal retrieval} (n=4 benchmarks)}\\
OpenI               & X-ray       & \best{.5881} & .5837 & $+.0044$ \\
MR-RATE             & MRI         & \best{3.5644} & 3.1584 & $+.4060$ \\
derm7pt             & dermatology & \best{1.9618} & 1.8958 & $+.0660$ \\
PathGen-600         & pathology   & \best{3.5000} & 3.4444 & $+.0556$ \\
\quad\emph{retrieval mean} & & \best{2.4036} & 2.2706 & $+.1330$ \\
\bottomrule
\end{tabular}
}
\end{table}

The proposed technique performs better on 15 of the 18 benchmarks, ties on BreaKHis ~\citep{spanhol2016dataset} zero-shot classification, and performs worse on Pneumonia ~\citep{kermany2018identifying} zero-shot classification and ~\citep{tschandl2018ham10000} HAM10000 linear probing. On average, DA-MoE improves zero-shot balanced accuracy by $0.0433$, linear-probe balanced accuracy by $0.0201$, and retrieval by $0.1330$ percentage points. The largest improvements occur on LC25000 ~\citep{borkowski2019lung} and CASIA-CXR ~\citep{metmer2024casia} for zero-shot classification, RadImageNet for linear probing, and MR-RATE for retrieval.

\section{Multi-teacher visual tokenization}
\label{app:tokenizer-ablation}
We compare two VQ-KD tokenizers. The multi-teacher tokenizer uses six
pretrained visual teachers across seven modalities: UNI for
pathology~\citep{chen2024uni}, MedDINOv3 for
CT~\citep{meddinov3}, USFM for
ultrasound~\citep{jiao2024usfm}, RAD-DINO for
X-ray~\citep{perezgarcia2024raddino}, GastroNet-5M for
endoscopy~\citep{boers2024gastronet}, and BMC for MRI and
dermatology~\citep{zhang2023biomedclip}. The single-teacher tokenizer
uses BMC for all seven modalities. Both tokenizers use the same
architecture and a vocabulary of $8{,}192$ visual codes.

Both conditions use the same subset of $69{,}902$ unpaired images and $164{,}605$ image--text pairs. They also use the same ViT-B DA-MoE architecture, random initialization and hyperparameters. Tokenizer training, masked pretraining, and image--text alignment each run for five epochs. We evaluate the final alignment checkpoints on eight zero-shot classification, six linear-probe, and four image--text retrieval benchmarks.

\begin{table}[ht]
\centering\small
\caption{Comparison of multi-teacher and single-teacher visual tokenization. Zero-shot and linear-probe results are balanced accuracy (\%). Retrieval results are mean bidirectional Recall@$\{1,10,50\}$ (\%). Means are recomputed from the benchmark values shown in the table. The best result in each row is in bold.}
\label{tab:tokenizer-ablation}
\setlength{\tabcolsep}{5pt}
\begin{tabular}{@{}llrr@{}}
\toprule
Task & Benchmark & Multi-teacher & Single-teacher \\
\midrule
\multirow{9}{*}{Zero-shot}
& HAM10000 & \best{14.29} & 14.11 \\
& LC25000 & 20.00 & 20.00 \\
& Thyroid & \best{53.85} & 50.51 \\
& BreaKHis & 50.00 & 50.00 \\
& CASIA-CXR & \best{24.00} & 22.00 \\
& Pneumonia & 50.00 & 50.00 \\
& PanNuke & \best{5.97} & 5.26 \\
& SARS-CoV-2 CT-scan & \best{66.66} & 50.00 \\
& \emph{Mean} & \best{35.60} & 32.74 \\
\midrule
\multirow{7}{*}{Linear probe}
& BrainTumor & \best{85.83} & 81.78 \\
& HAM10000 & \best{39.48} & 34.02 \\
& RadImageNet & \best{71.30} & 67.94 \\
& TN5000 & 72.76 & \best{73.93} \\
& CT-Kidney & 89.61 & \best{90.69} \\
& HEST & \best{99.05} & 98.97 \\
& \emph{Mean} & \best{76.34} & 74.56 \\
\midrule
\multirow{5}{*}{Retrieval}
& OpenI & \best{0.49} & 0.43 \\
& MR-RATE & 3.96 & 3.96 \\
& Derm7pt & \best{1.67} & 1.06 \\
& PathGen-600 & \best{3.28} & 3.22 \\
& \emph{Mean} & \best{2.35} & 2.17 \\
\bottomrule
\end{tabular}
\end{table}

Multi-teacher supervision improves the mean zero-shot balanced accuracy by $2.86$ percentage points, from $32.74\%$ to $35.60\%$. It also improves the mean linear-probe balanced accuracy by $1.78$ points, from $74.56\%$ to $76.34\%$, and the mean retrieval score by $0.18$ points, from $2.17\%$ to $2.35\%$. The largest zero-shot improvement occurs on SARS-CoV-2 CT-scan. At the benchmark level, multi-teacher supervision performs better on five zero-shot tasks, four linear-probe tasks, and three retrieval tasks; it performs worse on two linear-probe tasks and ties on the remaining tasks.

Overall, multi-teacher supervision improves all three mean scores in this experiment, but the gains vary across benchmarks. Because the experiment uses one seed, a limited training subset, and a short schedule, it does not establish statistical significance or performance under converged full-data training.

\section{Matched-data adaptation: setup and results}\label{app:ft}
We compare the proposed technique (ViT-L) with BMC, BMCA, ConceptCLIP,
and UniMed-CLIP using the same sequential adaptation data:
$20{,}000$ fundus image--text pairs followed by $16{,}000$ OCT pairs.
For every model, OCT adaptation starts from the selected fundus
checkpoint. This subset experiment measures learning and retention and
is separate from the independently trained full-data expert assembly.
Fundus and OCT are newly added expert pathways for the proposed technique, although
the baselines may have encountered these modalities during pretraining.

The baselines use LoRA, whereas the proposed technique trains new modality experts.
We initialize the proposed technique from its seven-modality ViT-L checkpoint and
initialize the fundus and OCT experts by copying the shared feed-forward
layers. We first train only the fundus expert and then train only the OCT
expert while keeping the fundus expert fixed. The shared backbone,
original experts, text encoder, projection heads, and contrastive
temperature remain frozen throughout. The frozen text encoder processes
each caption, and the contrastive loss trains the new visual expert to
align images with the corresponding text representations.

For each baseline, we use LoRA with rank $16$, $\alpha=32$, a batch size of $32$, and a symmetric contrastive loss. We select the learning rate from $\{10^{-5},3\times10^{-5}\}$ using validation performance on the current task. Each stage runs for at most $20$ epochs. We report the mean across three training seeds.

Evaluation uses fixed sets of $588$ fundus images, $576$ OCT images, and $600$ PathGen report rows. We define the reported changes as follows. Fundus gain is the score after fundus adaptation minus the base score. OCT gain is the score after OCT adaptation minus the score before OCT adaptation. Fundus forgetting is the fundus score after fundus adaptation minus the fundus score after OCT adaptation. PathGen forgetting is the base PathGen score minus the final PathGen score. All changes are measured in percentage points. A positive forgetting value indicates a performance decrease.

All methods improve on the new fundus and OCT tasks. After OCT adaptation, the baselines lose some fundus performance and also perform worse on PathGen. The proposed technique shows no forgetting on fundus or PathGen because its existing pathways remain frozen.

\section{Matched-data learning and retention}\label{app:matched}
Figure~\ref{fig:matched_seq} presents additional results from the matched-data experiment summarized in Table~\ref{tab:learn-forget}. All methods use the same evaluation galleries and checkpoint-selection rules. The adaptation data are matched, but model size, pretraining history, trainable parameter count, and computational cost are not.

Fundus and OCT are new expert pathways for the proposed technique. In contrast, the baselines may have encountered these modalities during pretraining. Therefore, the stage labels refer to adaptation and do not necessarily indicate a model's first exposure to the modality.

\textbf{Fundus.} After fundus training, the proposed technique improves from $5.22\%$ to $20.17\%$. Its fundus score does not change after OCT integration. All baselines also improve after fundus training, but their fundus scores decrease after OCT training.

\textbf{OCT.} Before OCT training, the proposed technique scores $4.28\%$. After training, it reaches $13.59\%$. The baselines reach scores between $15.46\%$ and $15.88\%$. Thus, the proposed technique learns OCT while preserving earlier capabilities, although it does not achieve the highest OCT score in this subset experiment. The initial fundus and OCT scores are obtained with newly initialized experts.

\textbf{Dermatology.} On the fixed Derm7pt gallery of $1{,}011$ rows, the proposed technique remains at $9.63\%$ after adding both modalities. Within each seed, the saved image and text features are byte-identical across stages. The baseline scores decrease from $6.02$ to $3.56$ for BMC, from $7.35$ to $5.43$ for BMCA, from $8.70$ to $6.12$ for ConceptCLIP, and from $3.30$ to $2.71$ for UniMed-CLIP.

We report mean Recall@$\{1,10,50\}$ across both retrieval directions. Derm7pt, fundus, and OCT all use cosine similarity. Within each benchmark, all methods and stages use the same evaluation protocol.

\section{Retention across seven base modalities}\label{app:retention7}
\begin{figure}[t]
\centering
\includegraphics[trim=6.2pt 32pt 5.8pt 5.8pt, clip, width=0.78\linewidth]{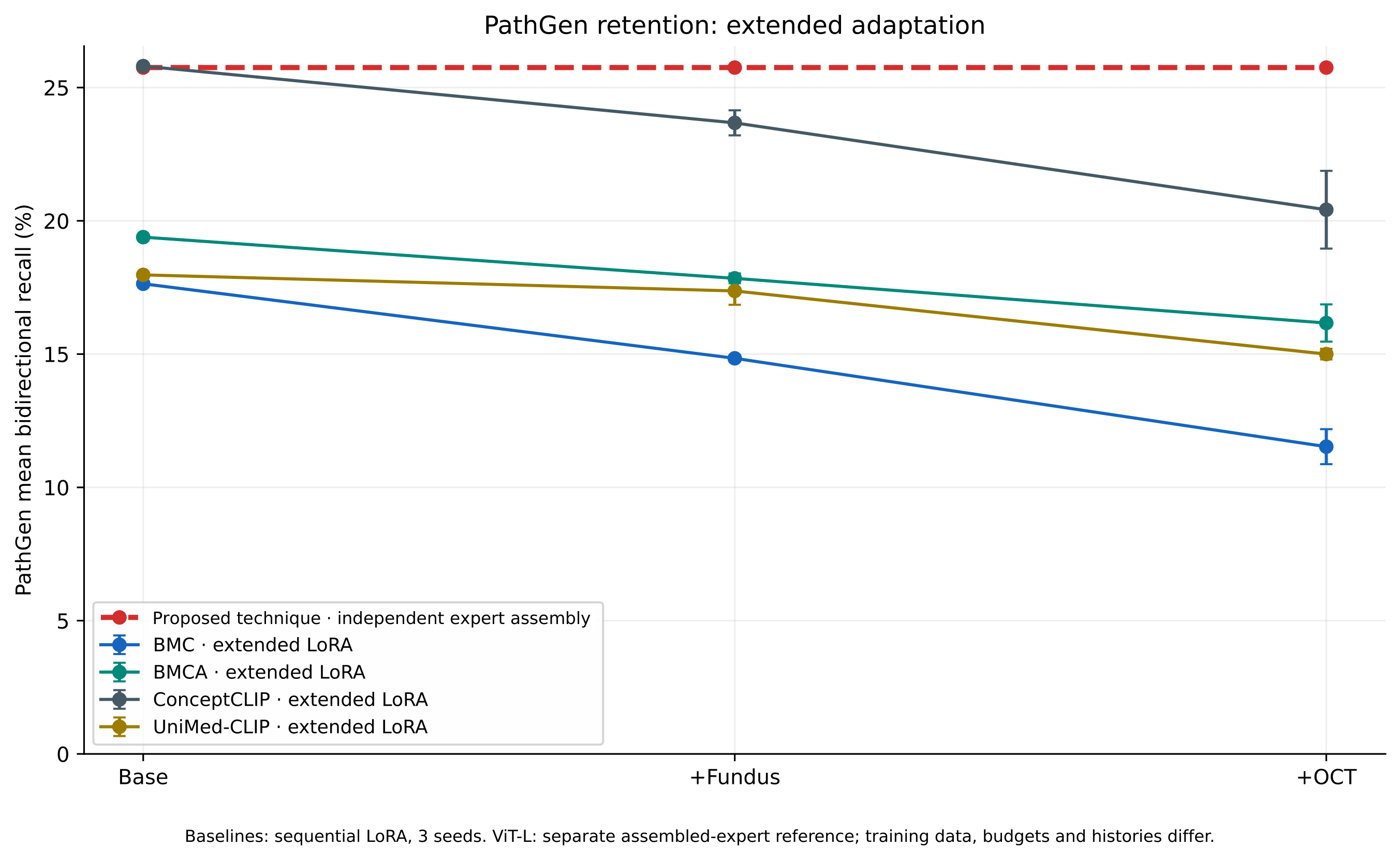}
\caption{\textbf{PathGen retention through fundus--OCT adaptation.} Mean bidirectional
Recall@$\{1,10,50\}$ at the base checkpoint, after fundus, and after OCT. Baseline error bars
show standard deviation over three seeds. The proposed technique's independently trained experts leave the
pathology pathway unchanged. Training history, data exposure, and adaptation capacity differ
between the proposed technique\ and the sequential-LoRA baselines.}
\label{fig:pathgen_trajectory}
\end{figure}

We evaluate one benchmark for each of the seven original modalities before and after adding fundus and OCT. For the baselines, we reuse the sequential LoRA runs described in Section~\ref{app:ft}. For the proposed technique, we evaluate three assembled checkpoints: the original core, the core with the fundus expert, and the core with both the fundus and OCT experts. The two new experts were trained independently from the frozen core and then combined. Therefore, this procedure differs from both the sequential baseline adaptation and the three-seed matched-data experiment.

We use OpenI ($3{,}826$ rows), MR-RATE ($505$), Derm7pt ($1{,}011$), PathGen-600 ($600$), ROCO-CT ($2{,}026$ pairs), Liver-US ($680$), and EndoReport ($217$). ROCO-CT is a CT-only subset of the ROCOv2 test split \citep{ruckert2024rocov2}. We select this subset using the dataset's modality annotations and remove records with missing or non-CT modality labels. Each image is paired with a caption from a biomedical publication. All seven benchmarks measure bidirectional retrieval recall. Inputs, prompts, and scoring procedures remain fixed across stages.

For each benchmark, we report $\Delta=S_{\mathrm{after\ OCT}}-S_{\mathrm{base}}$ in percentage points. A negative value indicates a performance decrease, and a positive value indicates an improvement. All retrieval benchmarks use cosine similarity. The retrieval score averages Recall@$1$, Recall@$10$, and Recall@$50$ across both directions. 

\begin{table}[h]\centering\small
\caption{\textbf{Full base-modality retention}, $\Delta$ from original checkpoint to after
fundus+OCT integration (negative = forgetting, positive = improvement). Baseline
values are the mean sequential-LoRA run over three seeds; ViT-L values for the proposed technique come from one
checkpoint sequence (these values do not estimate training-seed variance). All seven rows
use bidirectional retrieval recall.}
\label{tab:retention7}
\resizebox{\linewidth}{!}{%
\begin{tabular}{@{}llrrrrr@{}}
\toprule
Benchmark & Modality & Proposed technique\ & BMC & BMCA & ConceptCLIP & UniMed \\
\midrule
OpenI          & X-ray       & 0.00 & $-1.71$  & $-0.70$  & $-1.85$  & $-14.70$ \\
MR-RATE        & MRI         & 0.00 & $-0.20$  & $-0.42$  & $-0.23$  & $+0.57$  \\
Derm7pt        & dermatology & 0.00 & $-2.46$  & $-1.92$  & $-2.59$  & $-0.58$  \\
PathGen-600    & pathology   & 0.00 & $-6.11$  & $-3.22$  & $-5.39$  & $-2.97$  \\
ROCO-CT        & CT          & 0.00 & $-33.26$ & $-13.42$ & $-10.86$ & $-22.58$ \\
Liver-US       & ultrasound  & 0.00 & $-0.44$  & $-0.44$  & $-0.54$  & $-0.96$  \\
EndoReport     & endoscopy   & 0.00 & $-4.46$  & $-3.62$  & $-1.43$  & $-4.08$  \\
\midrule
\textbf{Mean} ($n=7$) & & 0.00 & $-6.95$ & $-3.39$ & $-3.27$ & $-6.47$ \\
\bottomrule
\end{tabular}
}
\end{table}

The baseline scores decrease in $27$ of the $28$ model--benchmark combinations. UniMed-CLIP improves on MR-RATE by $0.57$ points. Every other baseline result decreases, and all four baselines decrease on ROCO-CT. The proposed technique remains unchanged on all seven benchmarks. The mean row summarizes percentage-point changes across different tasks; it is not a single shared evaluation metric.

The saved the proposed technique image and text features are byte-identical across the evaluated checkpoints, and the ROCO-CT retrieval results are unchanged. This verifies preservation on the tested inputs. It does not mean that every parameter or scaled similarity logit is unchanged because the learned logit-scale parameter differs between the original core and the fundus checkpoint. We also do not claim that every baseline decrease is statistically significant or that these seven tasks cover every capability of the models.

\section{Qualitative examples}\label{app:qual}
The following figures show one retrieval example for each original modality in
Table~\ref{tab:retention7}, before expansion and after adding fundus and OCT. Every model uses
the same query, gallery, preprocessing, and cosine-similarity scoring at both stages. Each
figure shows the three highest-ranked reports. Green marks a report that is correct under the
benchmark's matching rule; an unmarked report may still be clinically relevant. These examples
illustrate individual ranking changes. We also include a
separate CT-Kidney classification example (Figure~\ref{fig:qual-ct}); it is not part of the
aggregate retrieval results in Table~\ref{tab:retention7}.

\begin{figure}[p]\centering
\includegraphics[width=0.95\linewidth,height=0.69\textheight,keepaspectratio]{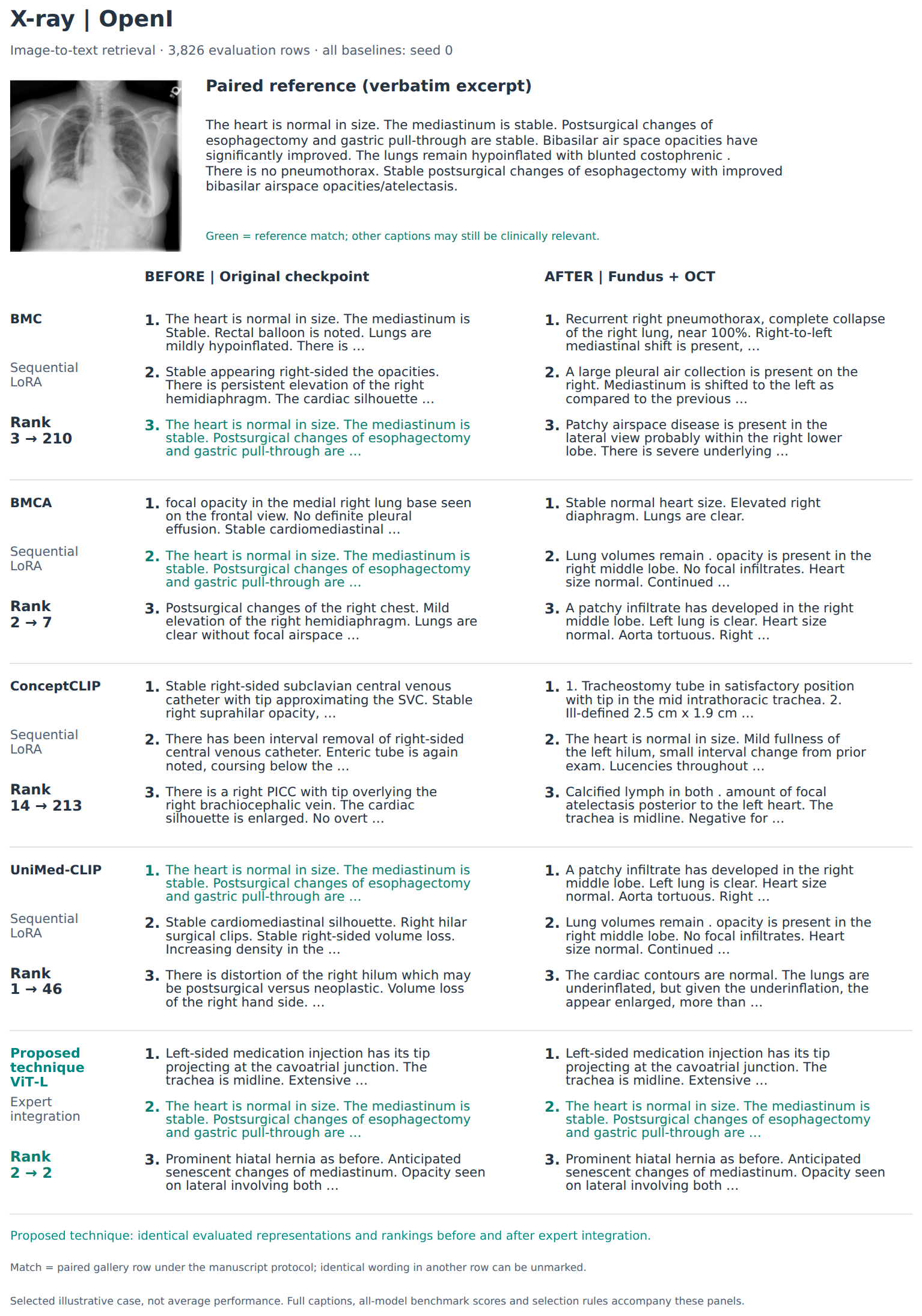}
\caption{\textbf{X-ray (OpenI) qualitative example.} Top-three image-to-report results for the
same query radiograph before expansion (original checkpoint) and after fundus and OCT
integration. Baseline results use seed~$0$. Green marks the reference report. The rank of the benchmark-positive report is: the proposed technique\
$2\!\to\!2$ (unchanged; saved representations identical); BMC $3\!\to\!210$; BMCA
$2\!\to\!7$; ConceptCLIP $14\!\to\!213$; UniMed-CLIP $1\!\to\!46$.}
\label{fig:qual-xray}
\end{figure}

\begin{figure}[p]\centering
\includegraphics[width=0.95\linewidth,height=0.69\textheight,keepaspectratio]{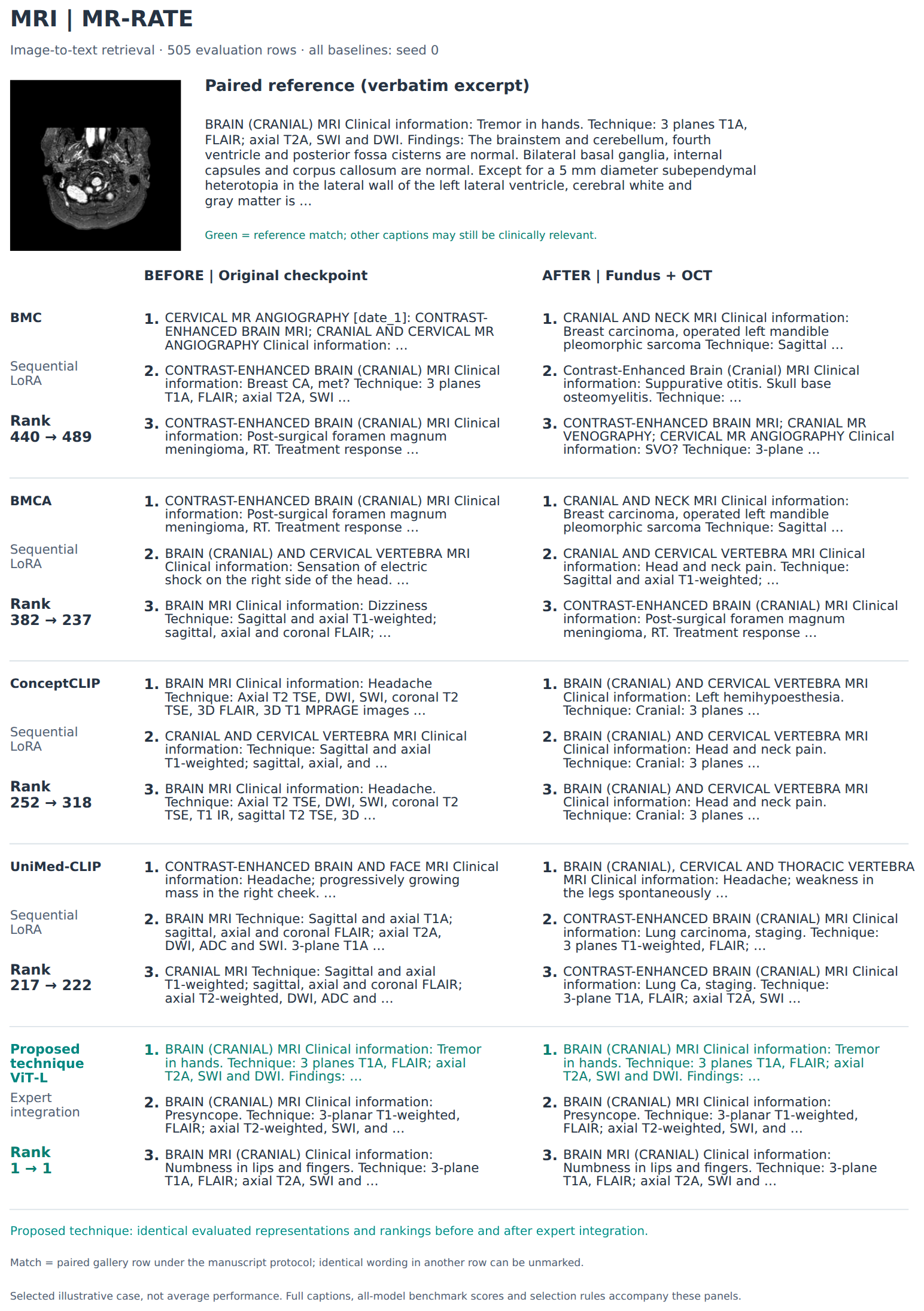}
\caption{\textbf{MRI (MR-RATE) qualitative example.} We use the same before-and-after format as
Figure~\ref{fig:qual-xray} and apply MR-RATE scoring. The rank of the benchmark-positive report is: the proposed technique\ $1\!\to\!1$; BMC
$440\!\to\!489$; BMCA $382\!\to\!237$; ConceptCLIP $252\!\to\!318$; UniMed-CLIP
$217\!\to\!222$. For this query, no baseline ranks the correct report among the top three at
either checkpoint, whereas the proposed technique\ ranks it first at both checkpoints.}
\label{fig:qual-mri}
\end{figure}

\begin{figure}[p]\centering
\includegraphics[width=0.95\linewidth,height=0.69\textheight,keepaspectratio]{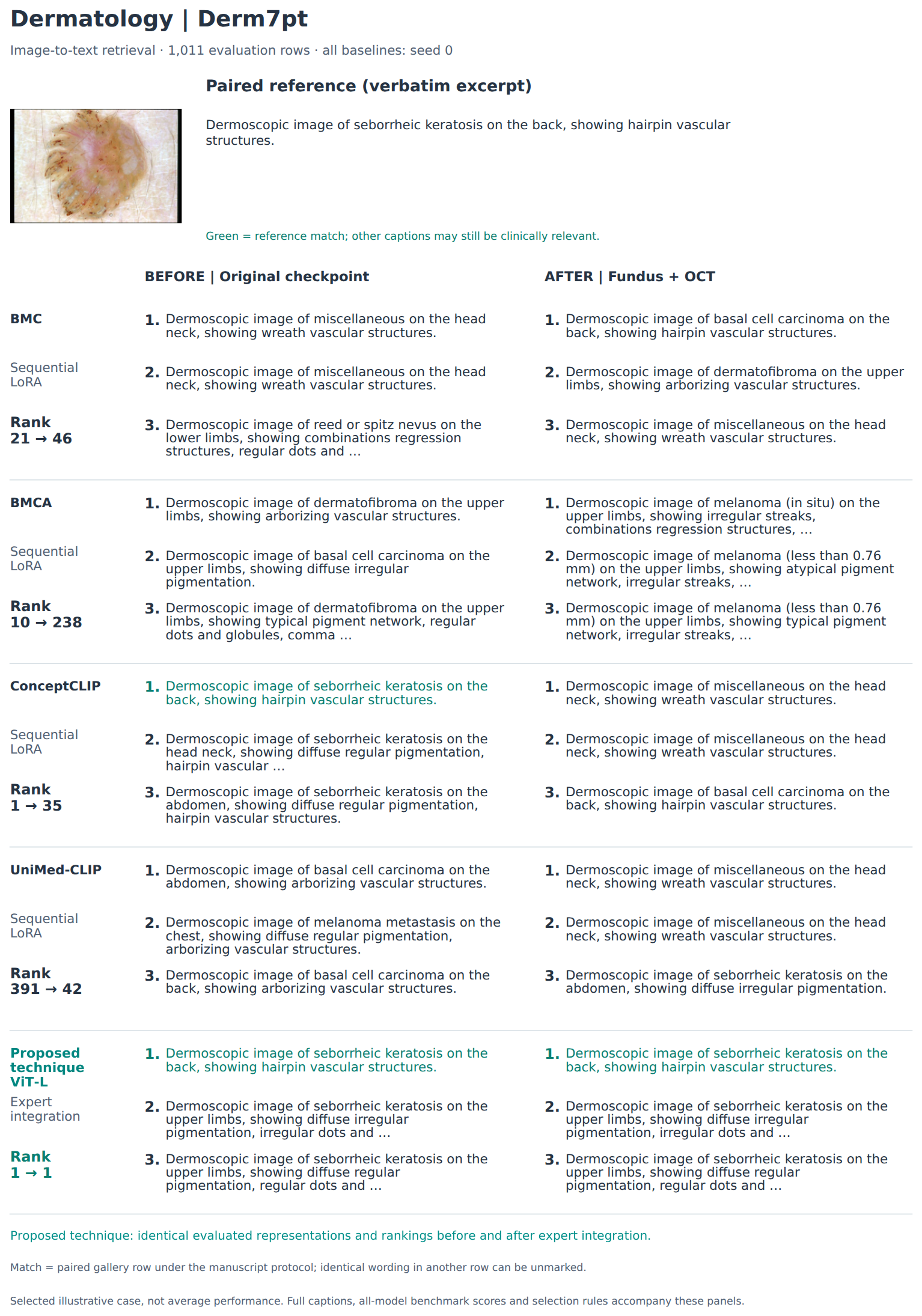}
\caption{\textbf{Dermatology (Derm7pt) qualitative example.} We use the same before-and-after format as
Figure~\ref{fig:qual-xray} and apply Derm7pt scoring. The rank of the benchmark-positive report is: the proposed technique\ $1\!\to\!1$; BMC
$21\!\to\!46$; BMCA $10\!\to\!238$; ConceptCLIP $1\!\to\!35$; UniMed-CLIP
$391\!\to\!42$. For this query, ConceptCLIP is the only baseline that ranks the correct report
among the top three before adaptation; no baseline does so afterward.}
\label{fig:qual-dermatology}
\end{figure}

\begin{figure}[p]\centering
\includegraphics[width=0.95\linewidth,height=0.69\textheight,keepaspectratio]{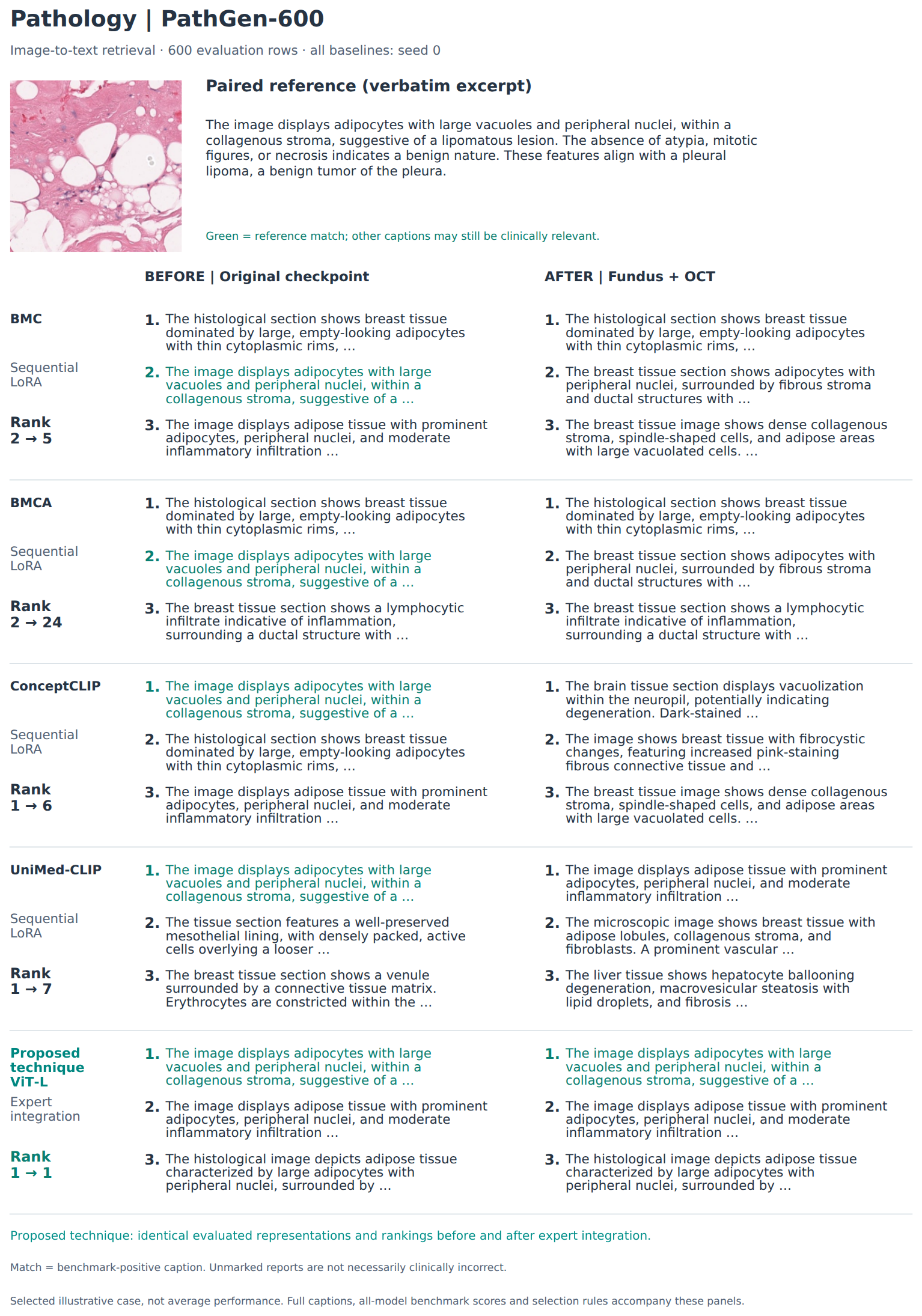}
\caption{\textbf{Pathology (PathGen-600) qualitative example.} We use the same before-and-after format as
Figure~\ref{fig:qual-xray} and apply PathGen-600 scoring. The rank of the benchmark-positive report is: the proposed technique\ $1\!\to\!1$; BMC
$2\!\to\!5$; BMCA $2\!\to\!24$; ConceptCLIP $1\!\to\!6$; UniMed-CLIP $1\!\to\!7$. All
four baselines place the correct report in the top three before adaptation; none do
afterward.}
\label{fig:qual-pathology}
\end{figure}

\begin{figure}[p]\centering
\includegraphics[width=0.95\linewidth,height=0.69\textheight,keepaspectratio]{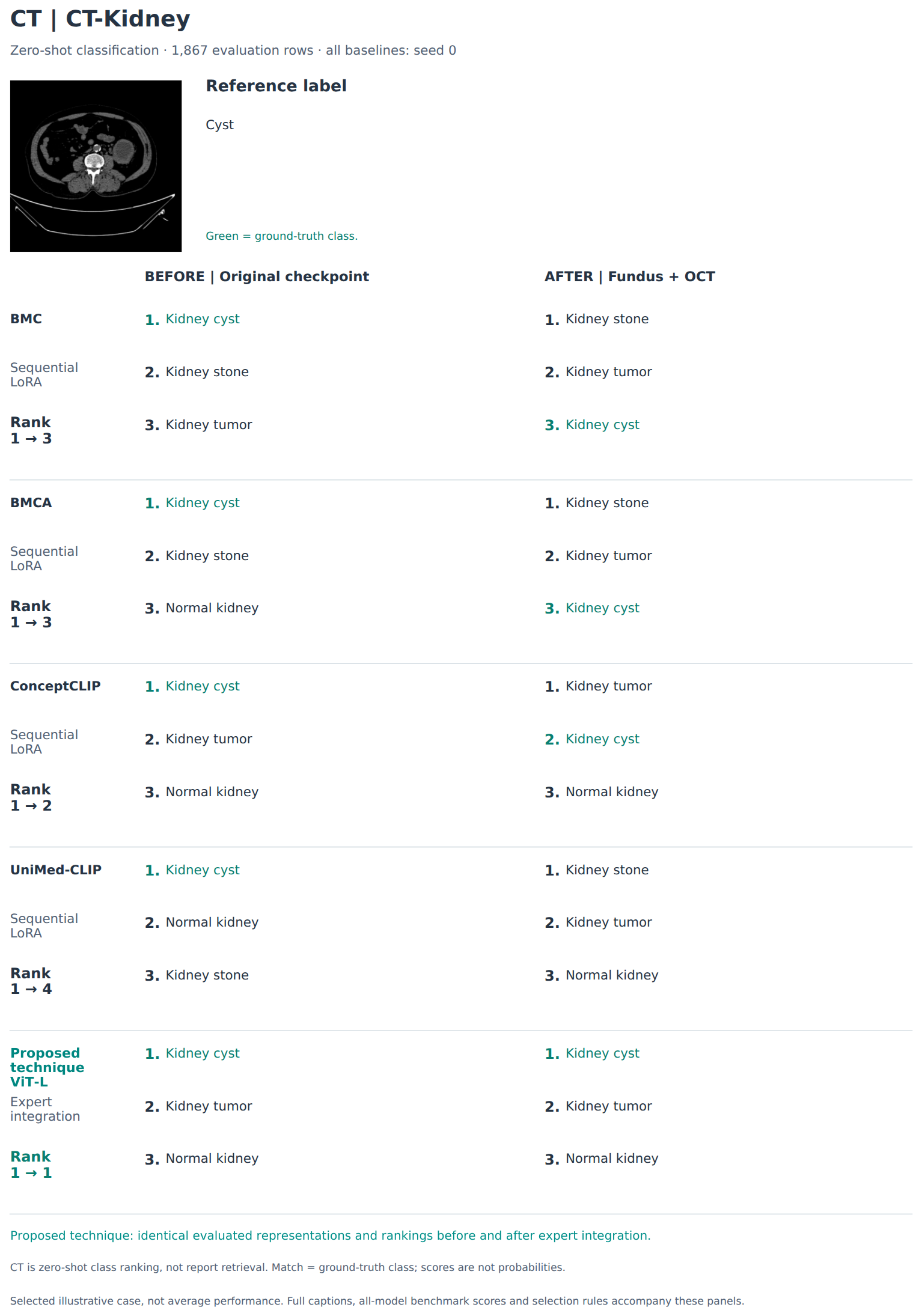}
\caption{\textbf{Additional CT-Kidney qualitative example.} This is a zero-shot classification
example, not a retrieval example, and it is not included in Table~\ref{tab:retention7}. We show the
top three predictions among four classes (kidney cyst, stone, tumor, and normal kidney) for the same query before and
after integration. Baseline results use seed~$0$. Green marks the ground-truth class (cyst). Its rank is: the proposed technique\
$1\!\to\!1$; BMC $1\!\to\!3$; BMCA $1\!\to\!3$; ConceptCLIP $1\!\to\!2$; UniMed-CLIP
$1\!\to\!4$. At the original checkpoint, every model ranks the ground-truth class first.
After adaptation, the rank decreases for every baseline on this query. However, on the full
CT-Kidney benchmark, BMCA's mean balanced accuracy improves after adaptation. CT-Kidney
is a separate zero-shot evaluation and is not the ROCO-CT retrieval benchmark in
Table~\ref{tab:retention7}. This difference shows why a single example should not be used to infer aggregate performance.}
\label{fig:qual-ct}
\end{figure}

\begin{figure}[p]\centering
\includegraphics[width=0.95\linewidth,height=0.69\textheight,keepaspectratio]{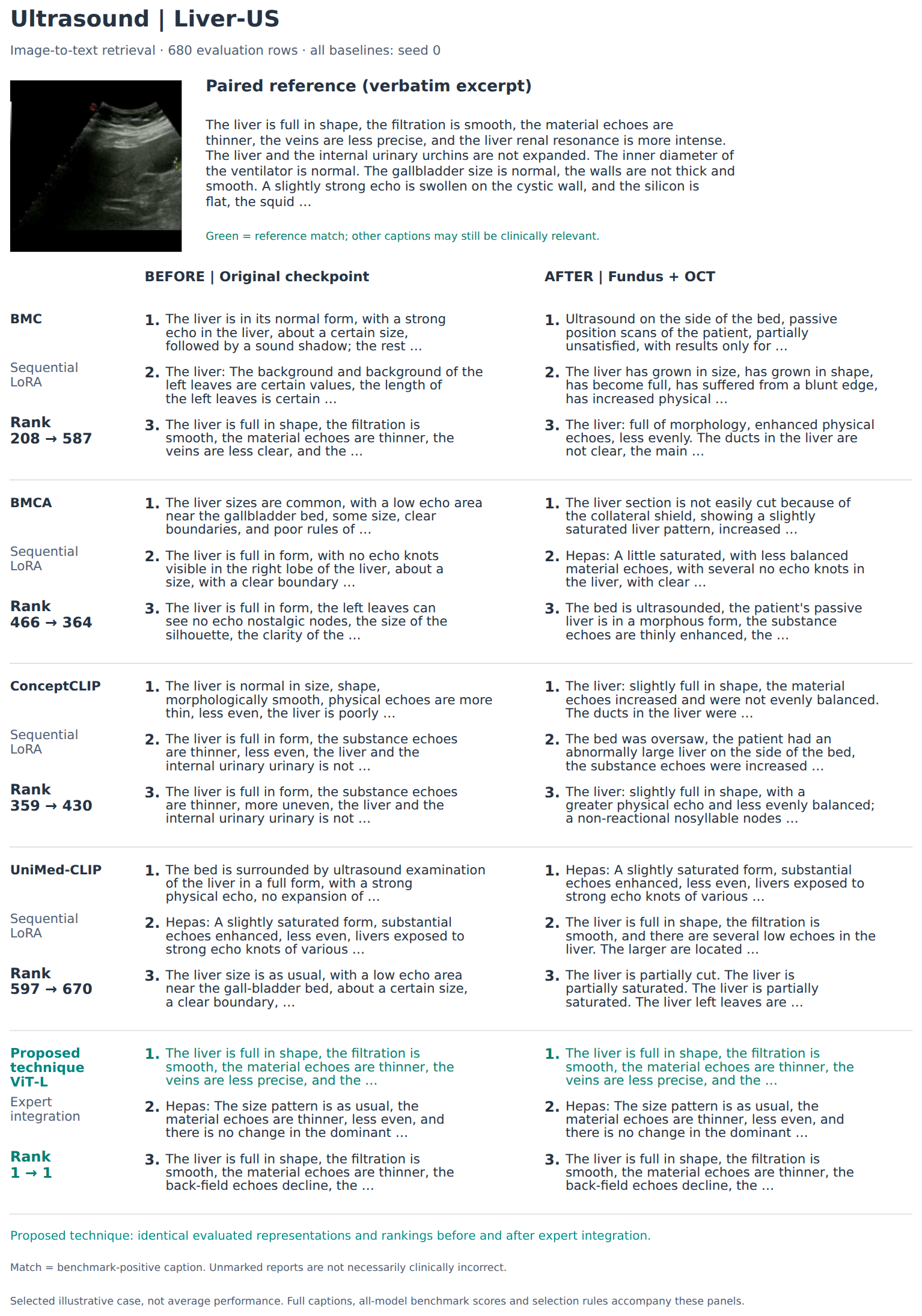}
\caption{\textbf{Ultrasound (Liver-US) qualitative example.} We use the same before-and-after format as
Figure~\ref{fig:qual-xray} and apply Liver-US scoring. The rank of the benchmark-positive report is: the proposed technique\ $1\!\to\!1$; BMC
$208\!\to\!587$; BMCA $466\!\to\!364$; ConceptCLIP $359\!\to\!430$; UniMed-CLIP
$597\!\to\!670$. No baseline is within the top three at either checkpoint on this query.}
\label{fig:qual-ultrasound}
\end{figure}

\begin{figure}[p]\centering
\includegraphics[width=0.95\linewidth,height=0.69\textheight,keepaspectratio]{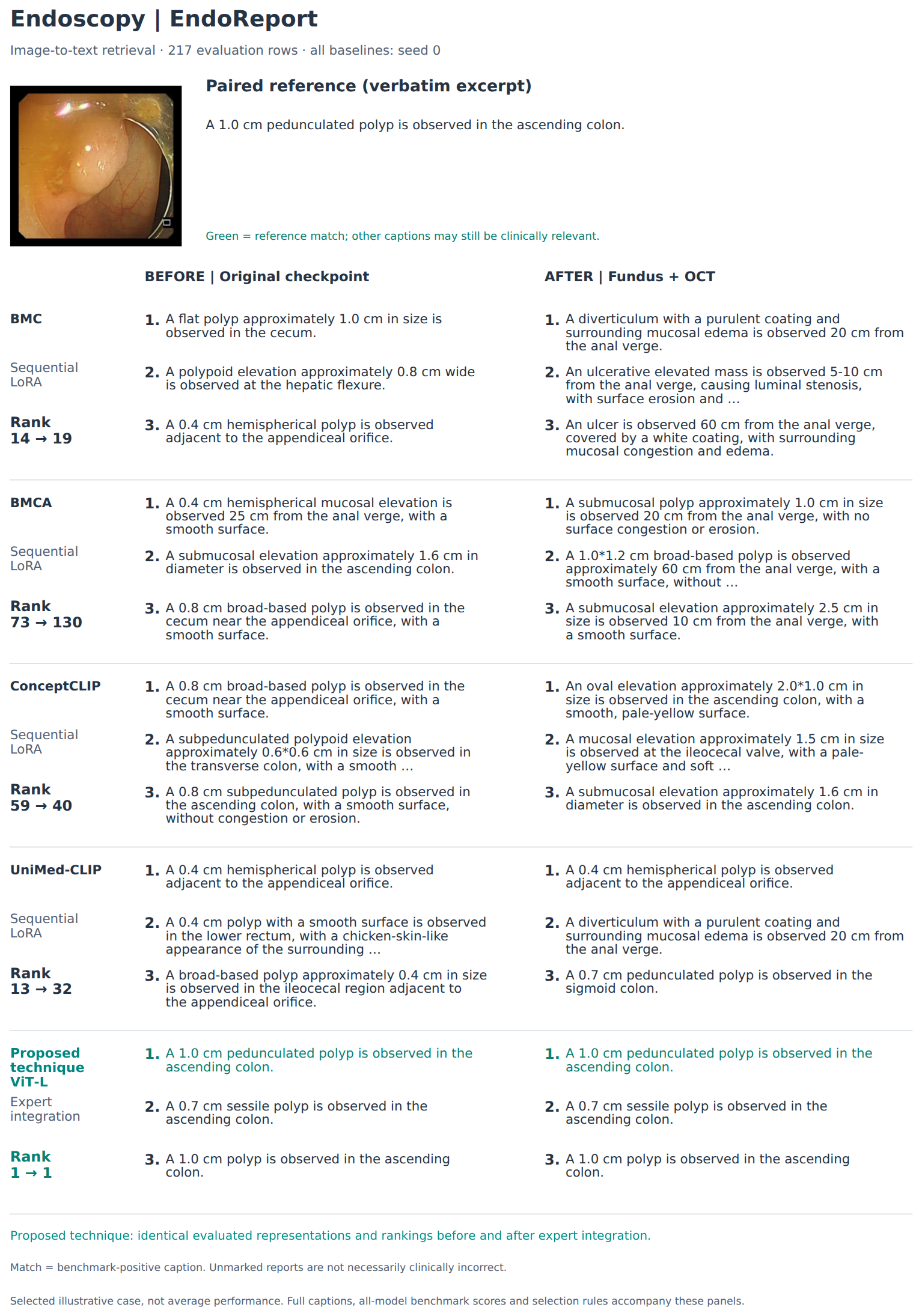}
\caption{\textbf{Endoscopy (EndoReport) qualitative example.} We use the same before-and-after format as
Figure~\ref{fig:qual-xray} and apply EndoReport scoring. The rank of the benchmark-positive report is: the proposed technique\ $1\!\to\!1$; BMC
$14\!\to\!19$; BMCA $73\!\to\!130$; ConceptCLIP $59\!\to\!40$; UniMed-CLIP
$13\!\to\!32$. No baseline is within the top three at either checkpoint on this query.}
\label{fig:qual-endoscopy}
\end{figure}

\section{Saliency overview across nine modalities}\label{app:saliency}
Figure~\ref{fig:saliency} shows similarity-gradient Grad-CAM maps for all nine modalities.
We back-propagate the image--text cosine similarity for each finding prompt through the routed
expert to the final block's patch tokens. The maps illustrate spatial responses to the specified prompts across the
modalities, including fundus and OCT. They are qualitative examples rather
than a quantitative localization evaluation. 

\begin{figure}[t]
\centering
\includegraphics[width=\linewidth]{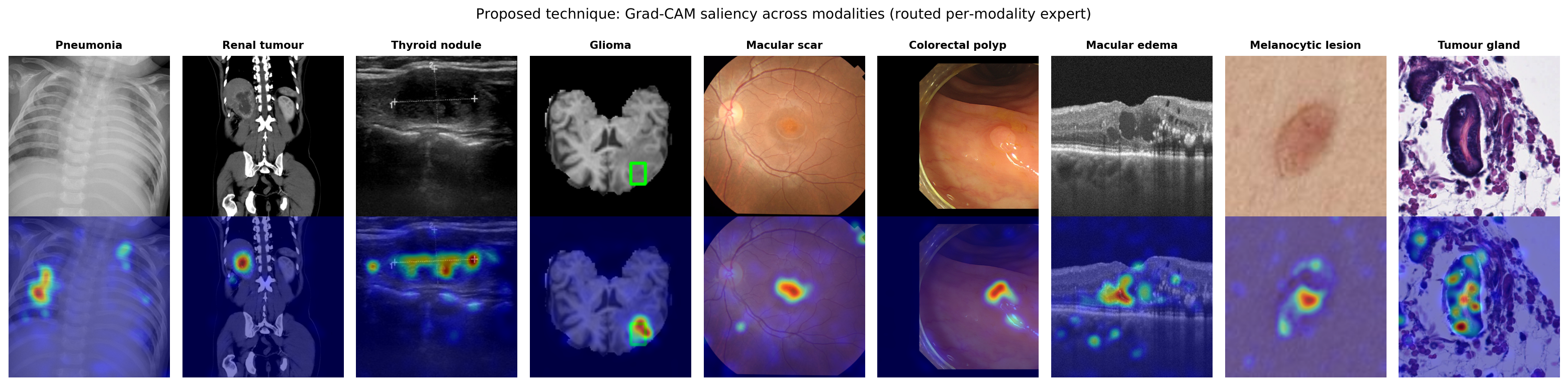}
\caption{\textbf{Qualitative saliency across nine modalities.} Similarity-gradient Grad-CAM
through each modality's routed expert; top row original, bottom row saliency. Left to right:
pneumonia, renal tumour, thyroid nodule, glioma on FLAIR with the ground-truth box, macular
scar, colorectal polyp, macular edema on OCT, melanocytic lesion, and malignant gland on
H\&E. Fundus and OCT use experts added to a frozen model.}
\label{fig:saliency}
\end{figure}

\clearpage
\section{Comparative image--text similarity-gradient patch saliency}
\label{app:comparative-saliency}

We supplement the quantitative evaluations with $66$ selected examples from $22$
datasets: $33$ zero-shot classification examples, $21$ linear-probe examples, and
$12$ image-to-text retrieval examples. Each dataset contributes three examples.
The columns compare the proposed technique, BMC, BMCA (labeled BIOMEDICA in the source
presentation), ConceptCLIP, and UniMed-CLIP. Each comparison uses the same
source image and text prompt for all five models, with each model's own image
preprocessing. Within an example, the upper row shows the image and the lower
row shows the saliency overlay. These selected cases illustrate differences in
spatial responses; they do not estimate dataset-level accuracy or retrieval
recall or characterize the models' failure cases. IDRiD is included as an additional qualitative linear-probe dataset and
is not a row in Table~\ref{tab:lp}.

\subsection{Saliency construction and task-specific prompts}
\label{app:comparative-saliency-method}
We compute image--text similarity-gradient patch saliency without updating model
weights. Let $s$ be the cosine similarity between the image and text embeddings,
and let $A_{p,c}$ be the patch features at the fourth transformer block from the
end, where $p$ indexes spatial patches and $c$ indexes channels. Backpropagating
$s$ to these features gives
\begin{equation}
H_p=\operatorname{ReLU}\!\left(
\sum_c A_{p,c}\frac{\partial s}{\partial A_{p,c}}
\right).
\label{eq:comparative-patch-saliency}
\end{equation}
Each displayed overlay combines $50\%$ image and $50\%$ heatmap. The proposed technique\ uses the
modality route associated with the image. This construction uses a patch-wise
feature--gradient product; the new comparisons use the fourth block from the
end, rather than the final-block maps in Figure~\ref{fig:saliency}. Heatmap
colors visualize spatial responses within an example and should not be read as
a common numerical scale across models.

\textbf{Zero-shot classification.} The heatmap prompt describes the ground-truth
class, including when the model predicts a different class. For example, the
first APTOS example uses ``Fundus photograph showing proliferative diabetic
retinopathy.'' The first Pneumonia CXR example uses ``Chest X-ray showing
pneumonia.'' The classification evaluation may average several prompts per
class, whereas each heatmap explains the single prompt used for that example.
Thus, the map is a response to the specified reference class, rather than
necessarily an explanation of the predicted class or the full classification
decision.

\textbf{Linear probing.} The linear classifier operates on image features and
uses no text. A class-related text prompt is introduced only to compute the
saliency map: the first Brain Tumor MRI example uses ``pituitary, medical
image'', and the first CT-Kidney example uses ``Tumor, medical image''.
The heatmaps explain image--prompt similarity. In particular, these maps are not
gradients of the linear-probe class score.

\textbf{Image-to-text retrieval.} The target text is the image's paired caption
or report, including the full report when available. The recorded diagnostic
label identifies the example but is not a replacement for the paired text.
The maps therefore show the spatial response to the reference text. They do not visualize
the content of a model's highest-ranked alternative report.

\subsection{Reading the comparative examples}
Figures~\ref{fig:saliency-aptos-2019}--\ref{fig:saliency-thyroid-us} show
zero-shot examples, Figures~\ref{fig:saliency-brain-tumor-mri}--\ref{fig:saliency-tn5000}
show linear-probe examples, and Figures~\ref{fig:saliency-derm7pt}--\ref{fig:saliency-pathgen-600}
show retrieval examples. Each caption identifies the three cases and explains
specific spatial responses. Interpret these patterns together with the target
text and the task-specific definitions above. Responses within an apparent
lesion can occur in several models and do not by themselves establish a
correct classification or a successful retrieval.

These visualizations provide descriptive evidence about the models' spatial
responses to text. They do not include a quantitative localization evaluation
or establish that a highlighted region is causally necessary for a prediction.
The selected examples complement the aggregate results in
Tables~\ref{tab:zs}, \ref{tab:lp}, and~\ref{tab:ret}; they do not establish that one
model localizes disease better across a complete dataset or isolate the effect
of routing from other differences between models.

\clearpage
\newcommand{\saliencycolumns}{%
\noindent\hspace*{0.071\linewidth}%
\makebox[0.176\linewidth][c]{\scriptsize\shortstack{Proposed\\technique}}%
\hspace*{0.009\linewidth}\makebox[0.176\linewidth][c]{\scriptsize BMC}%
\hspace*{0.009\linewidth}\makebox[0.176\linewidth][c]{\scriptsize BMCA}%
\hspace*{0.009\linewidth}\makebox[0.176\linewidth][c]{\scriptsize ConceptCLIP}%
\hspace*{0.009\linewidth}\makebox[0.176\linewidth][c]{\scriptsize UniMed-CLIP}\par}

\begin{figure}[p]
\centering\begin{minipage}{5.5in}
\centering
\saliencycolumns
\vspace{3pt}
{\footnotesize (a)}\\[1pt]
\includegraphics[width=\linewidth]{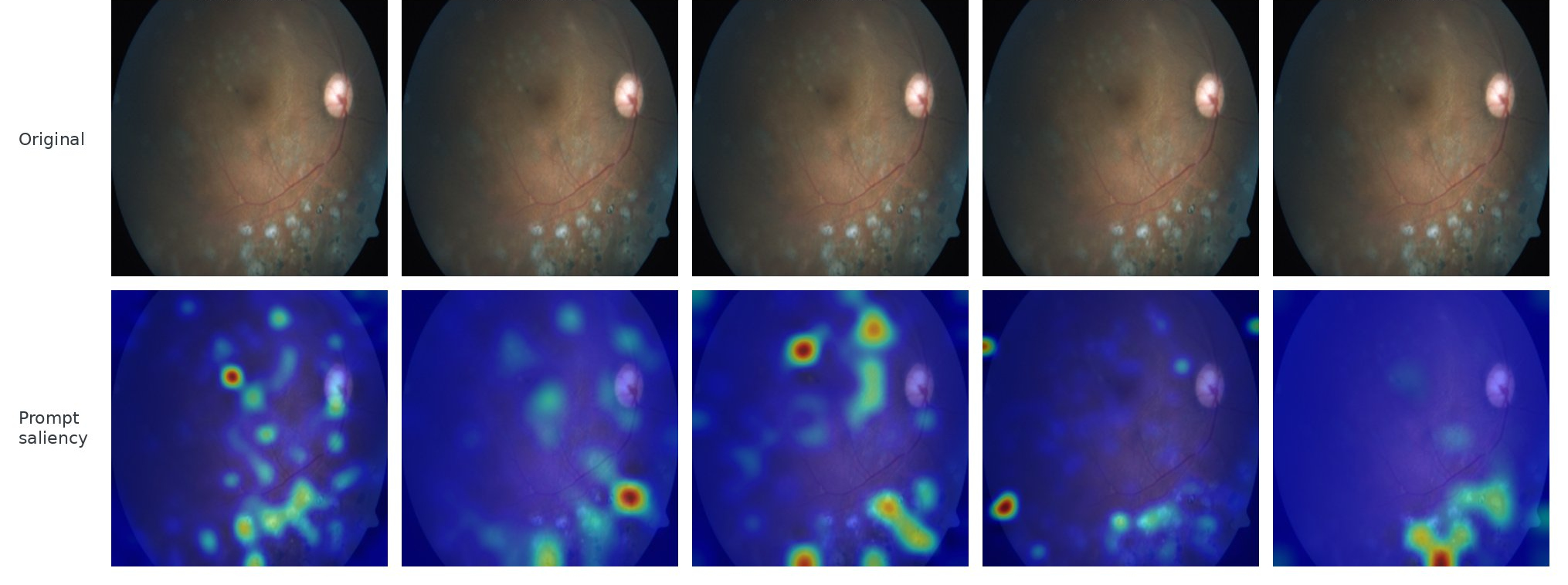}\\[4pt]
{\footnotesize (b)}\\[1pt]
\includegraphics[width=\linewidth]{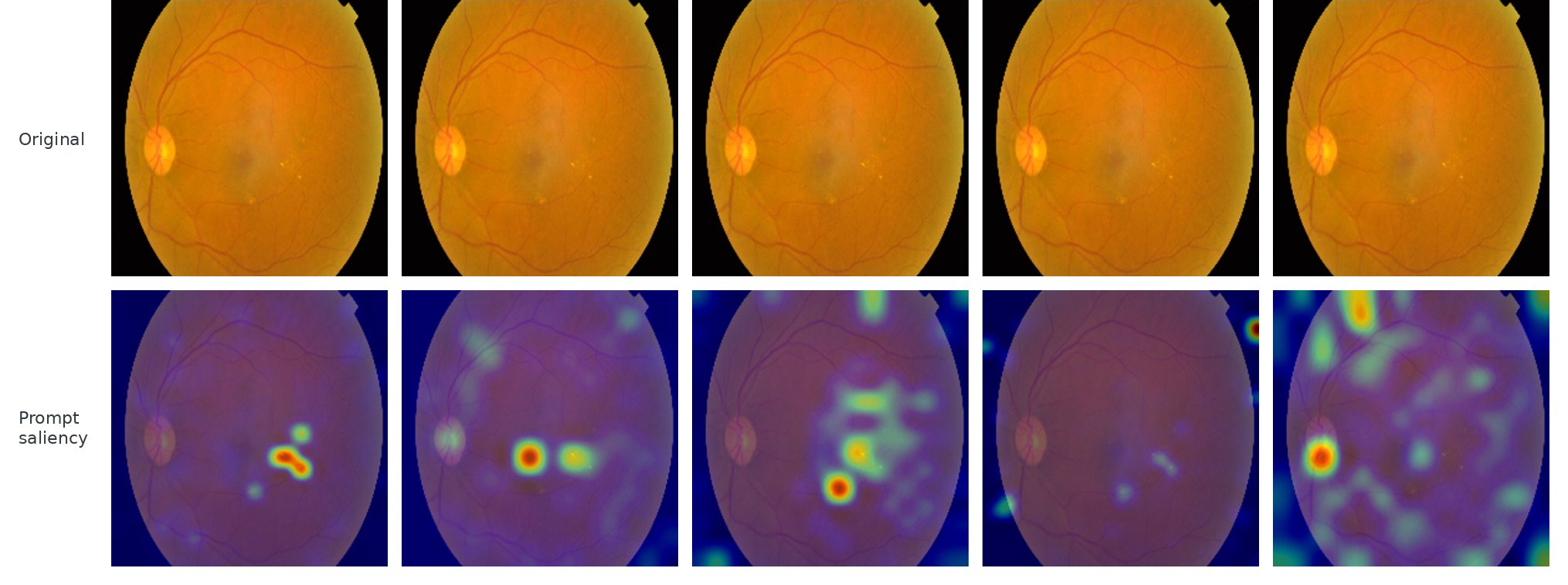}\\[4pt]
{\footnotesize (c)}\\[1pt]
\includegraphics[width=\linewidth]{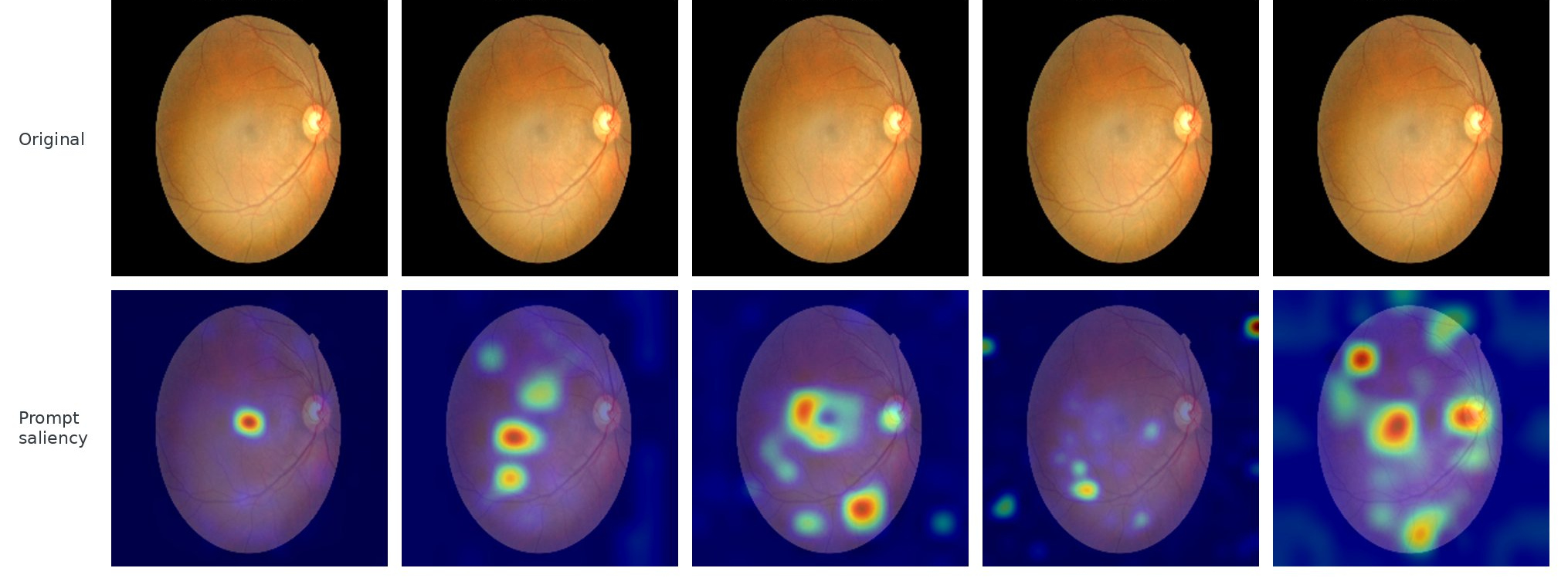}
\caption{\textbf{APTOS 2019 zero-shot classification.} Examples: (a) proliferative diabetic retinopathy, (b) moderate diabetic retinopathy, and (c) no diabetic retinopathy. In panel (a), the proposed technique highlights several locations within the retinal field, including the lower portion of the image. BMC and BMCA also respond in this area, while ConceptCLIP includes peripheral foci and UniMed-CLIP concentrates much of its response near the lower edge. The comparison shows how a common retinopathy prompt produces different spatial distributions. Heatmaps use ground-truth-class descriptions. Saliency construction follows Section~\ref{app:comparative-saliency-method}.}
\label{fig:saliency-aptos-2019}
\end{minipage}
\end{figure}
\clearpage

\begin{figure}[p]
\centering\begin{minipage}{5.5in}
\centering
\saliencycolumns
\vspace{3pt}
{\footnotesize (a)}\\[1pt]
\includegraphics[width=\linewidth]{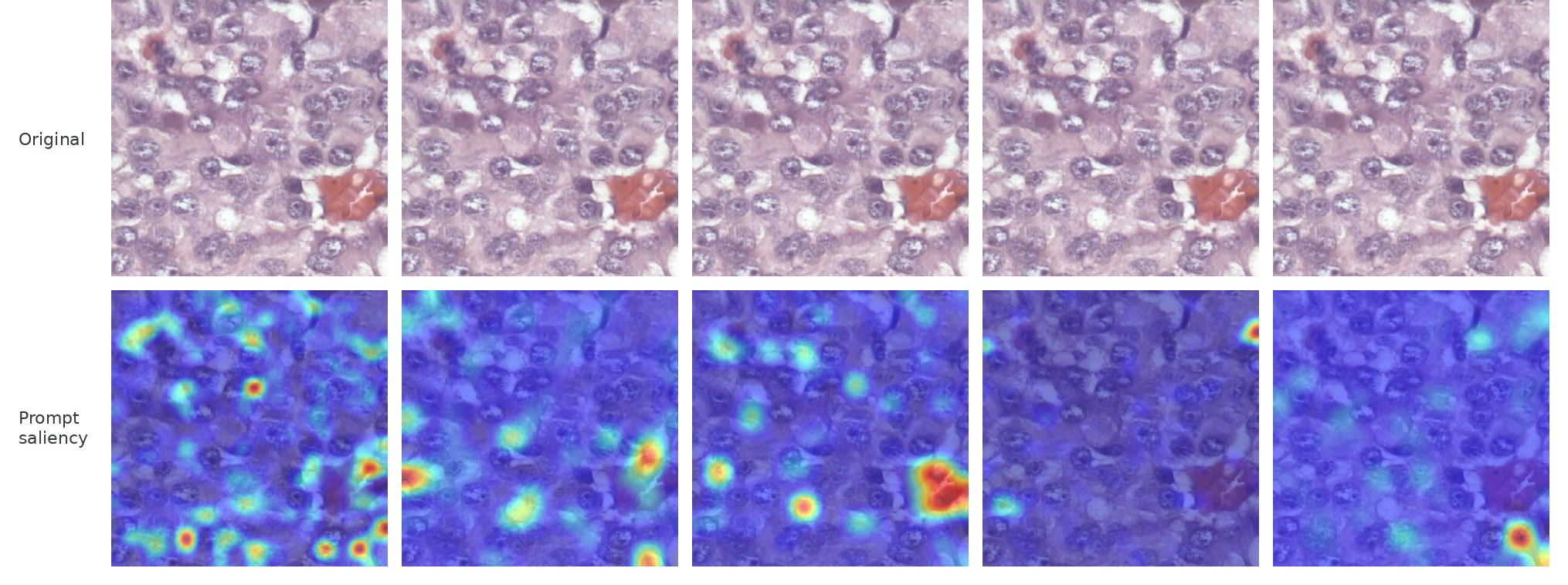}\\[4pt]
{\footnotesize (b)}\\[1pt]
\includegraphics[width=\linewidth]{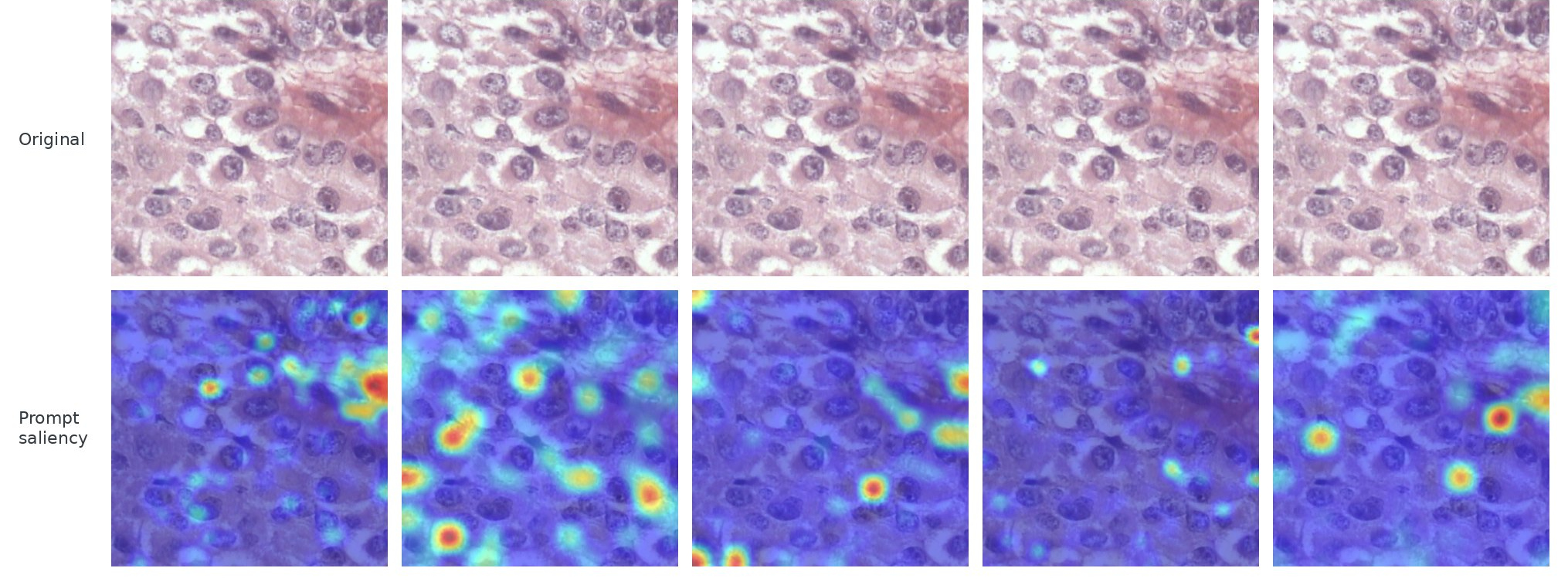}\\[4pt]
{\footnotesize (c)}\\[1pt]
\includegraphics[width=\linewidth]{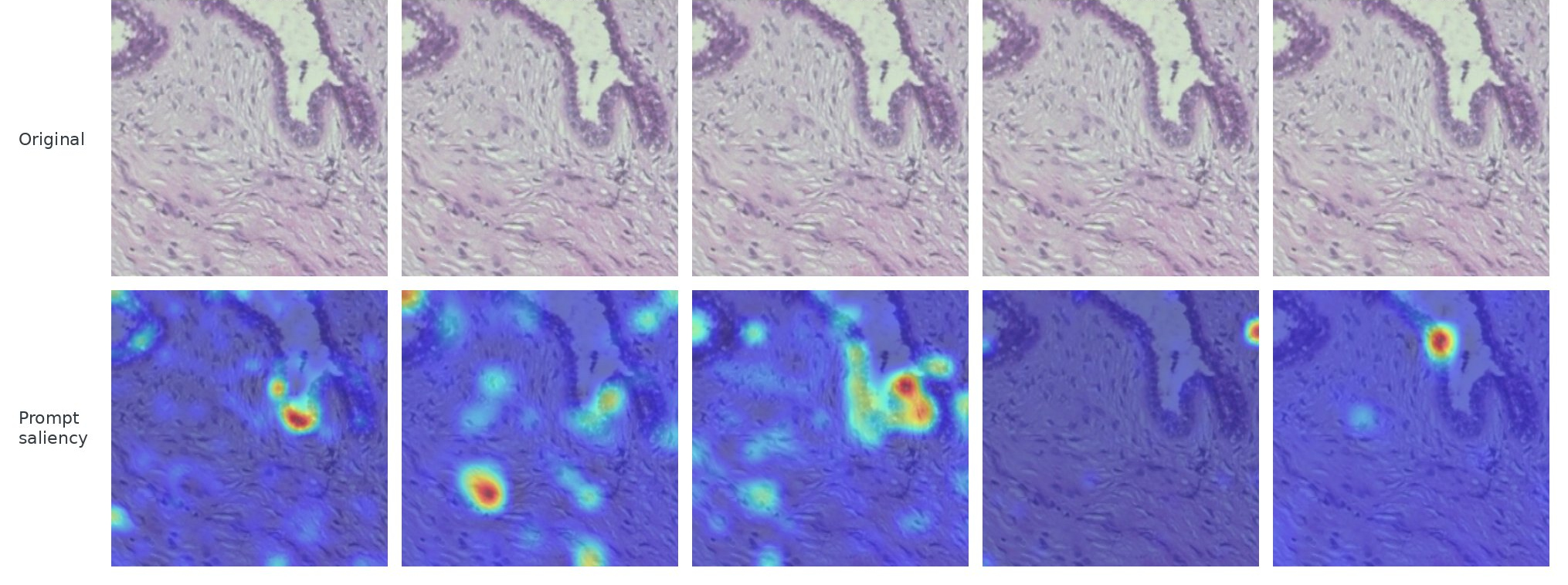}
\caption{\textbf{BreakHis zero-shot classification.} Examples: (a) malignant breast tumor, (b) malignant breast tumor, and (c) benign breast tumor. In panel (a), the proposed technique produces multiple foci across the cellular tissue, whereas BMC and BMCA emphasize a smaller set of locations. ConceptCLIP shows few prominent foci in this example, and UniMed-CLIP includes a response near the lower-right boundary. The overlays describe spatial responses to the tissue-class prompt without delineating a tumor boundary. Heatmaps use ground-truth-class descriptions. Saliency construction follows Section~\ref{app:comparative-saliency-method}.}
\label{fig:saliency-breakhis}
\end{minipage}
\end{figure}
\clearpage

\begin{figure}[p]
\centering\begin{minipage}{5.5in}
\centering
\saliencycolumns
\vspace{3pt}
{\footnotesize (a)}\\[1pt]
\includegraphics[width=\linewidth]{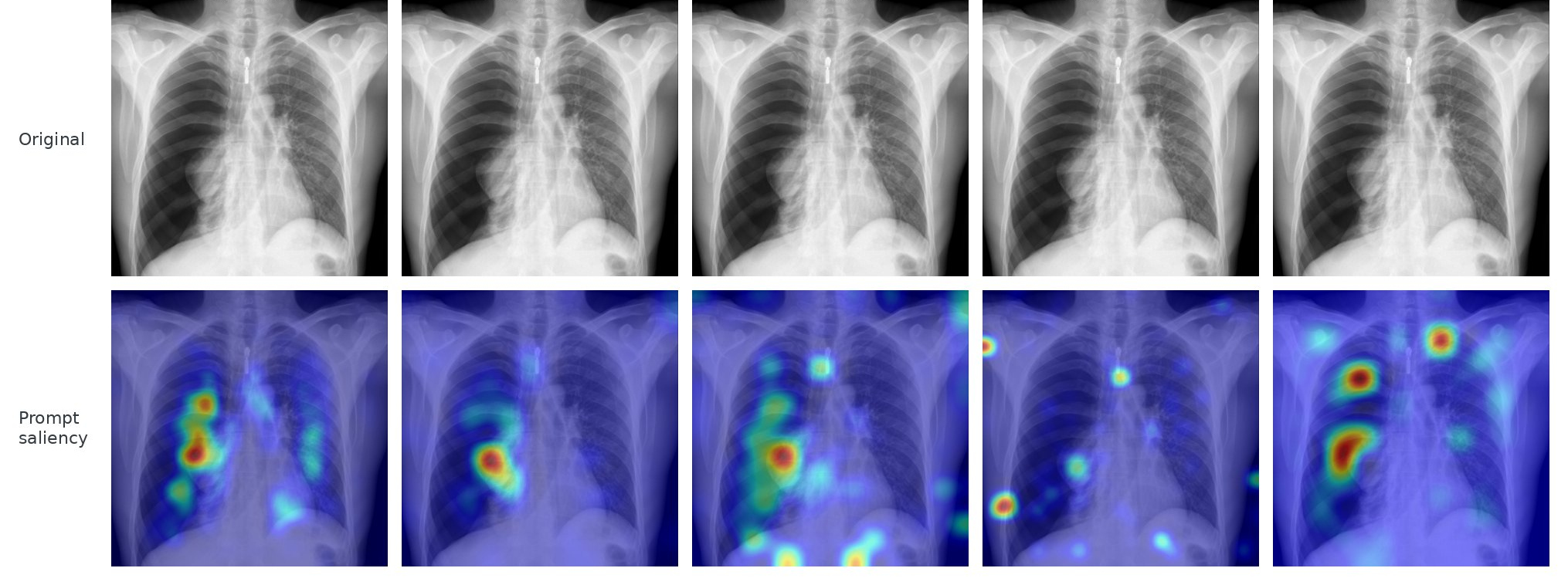}\\[4pt]
{\footnotesize (b)}\\[1pt]
\includegraphics[width=\linewidth]{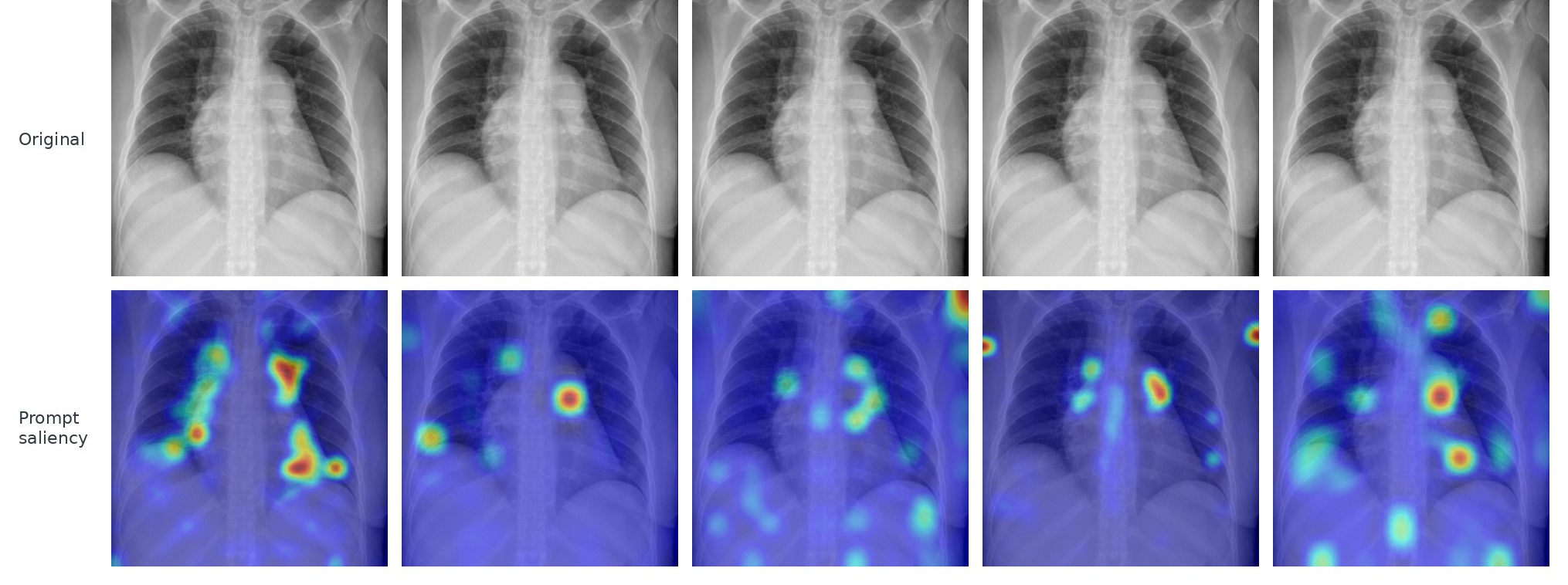}\\[4pt]
{\footnotesize (c)}\\[1pt]
\includegraphics[width=\linewidth]{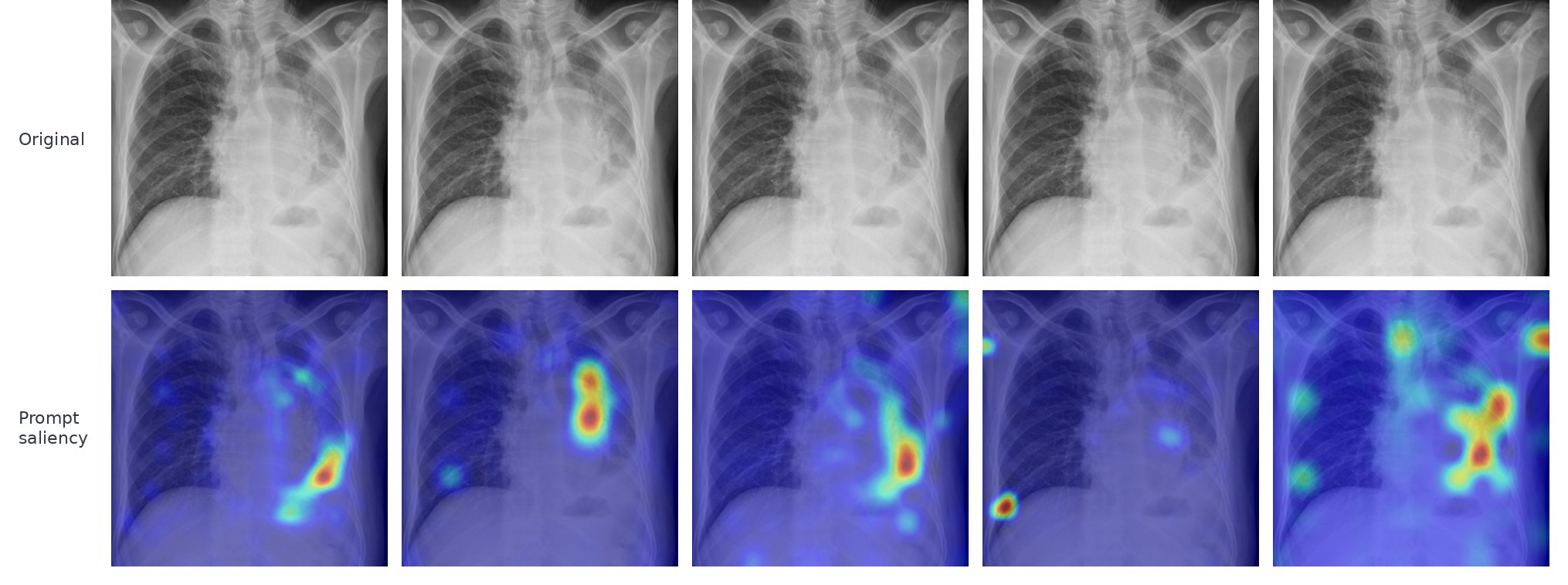}
\caption{\textbf{CASIA-CXR zero-shot classification.} Examples: (a) pneumothorax, (b) cardiomegaly, and (c) pleural effusion. In panel (a), the proposed technique, BMC, and BMCA show prominent foci within the lung field on the left side of the displayed image. ConceptCLIP distributes its foci differently, and UniMed-CLIP includes responses higher in the thorax. The differing maps illustrate sensitivity to the same finding description; they do not delineate the extent of pneumothorax. Heatmaps use ground-truth-class descriptions. Saliency construction follows Section~\ref{app:comparative-saliency-method}.}
\label{fig:saliency-casia-cxr}
\end{minipage}
\end{figure}
\clearpage

\begin{figure}[p]
\centering\begin{minipage}{5.5in}
\centering
\saliencycolumns
\vspace{3pt}
{\footnotesize (a)}\\[1pt]
\includegraphics[width=\linewidth]{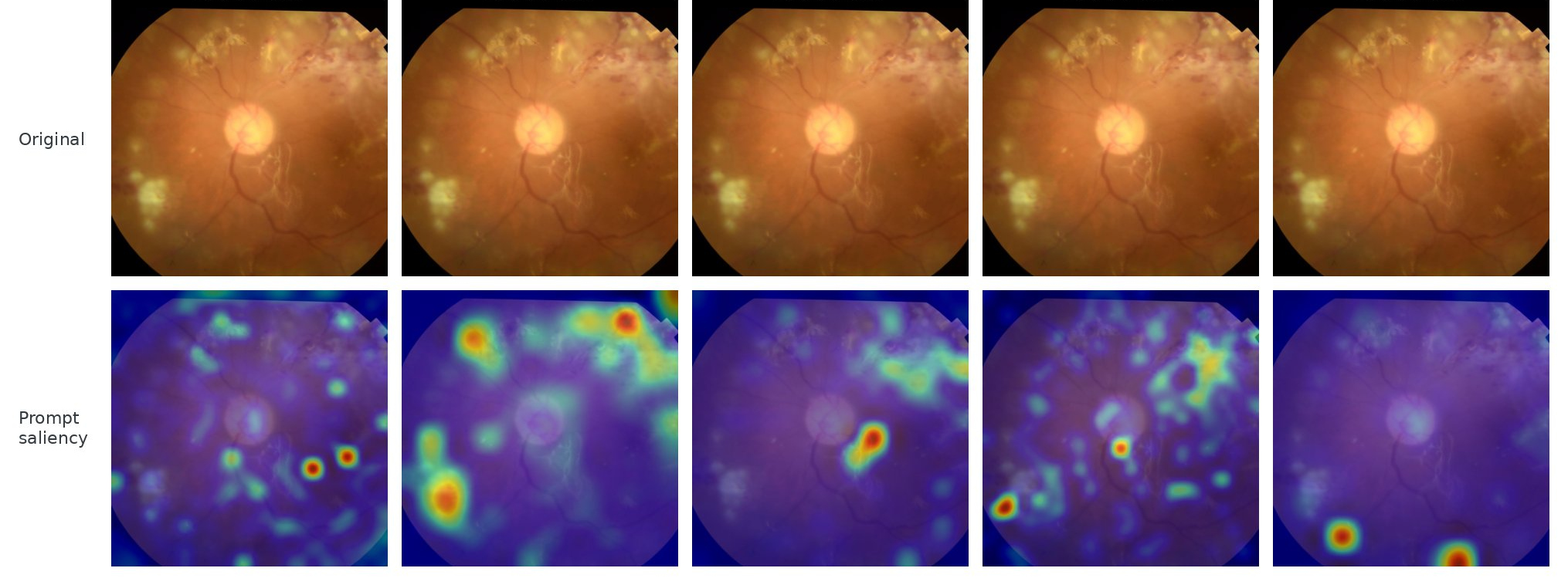}\\[4pt]
{\footnotesize (b)}\\[1pt]
\includegraphics[width=\linewidth]{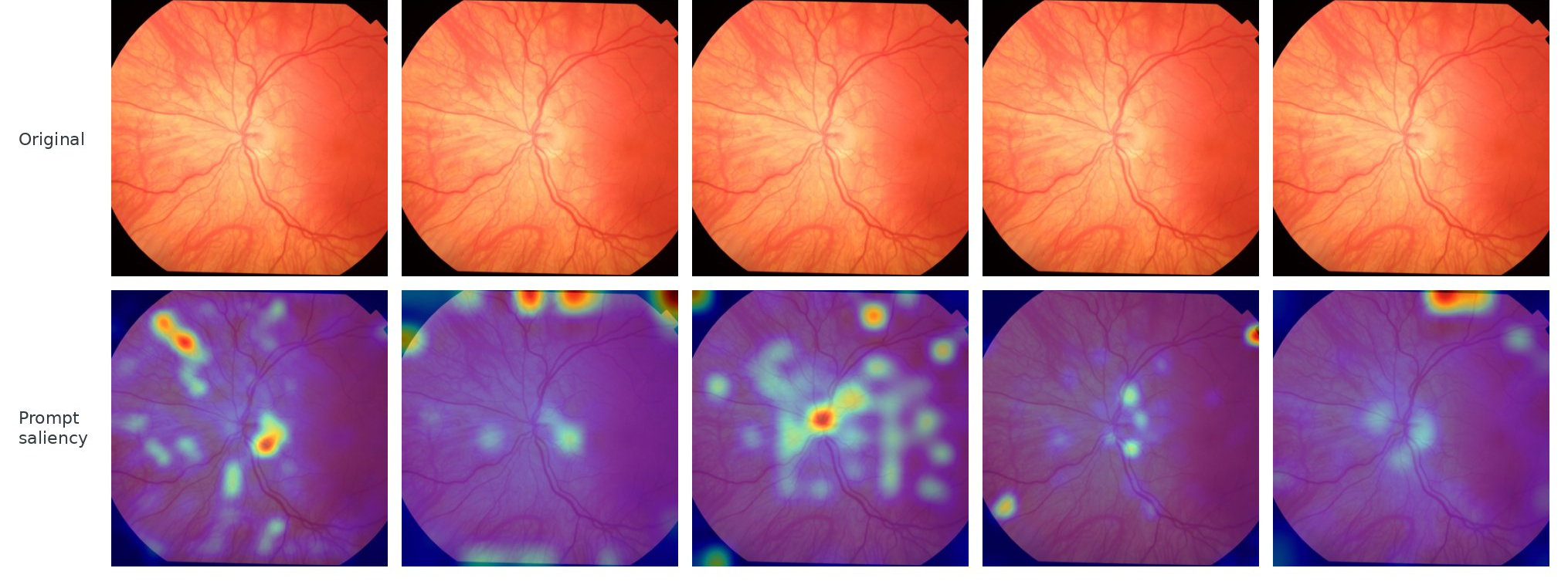}\\[4pt]
{\footnotesize (c)}\\[1pt]
\includegraphics[width=\linewidth]{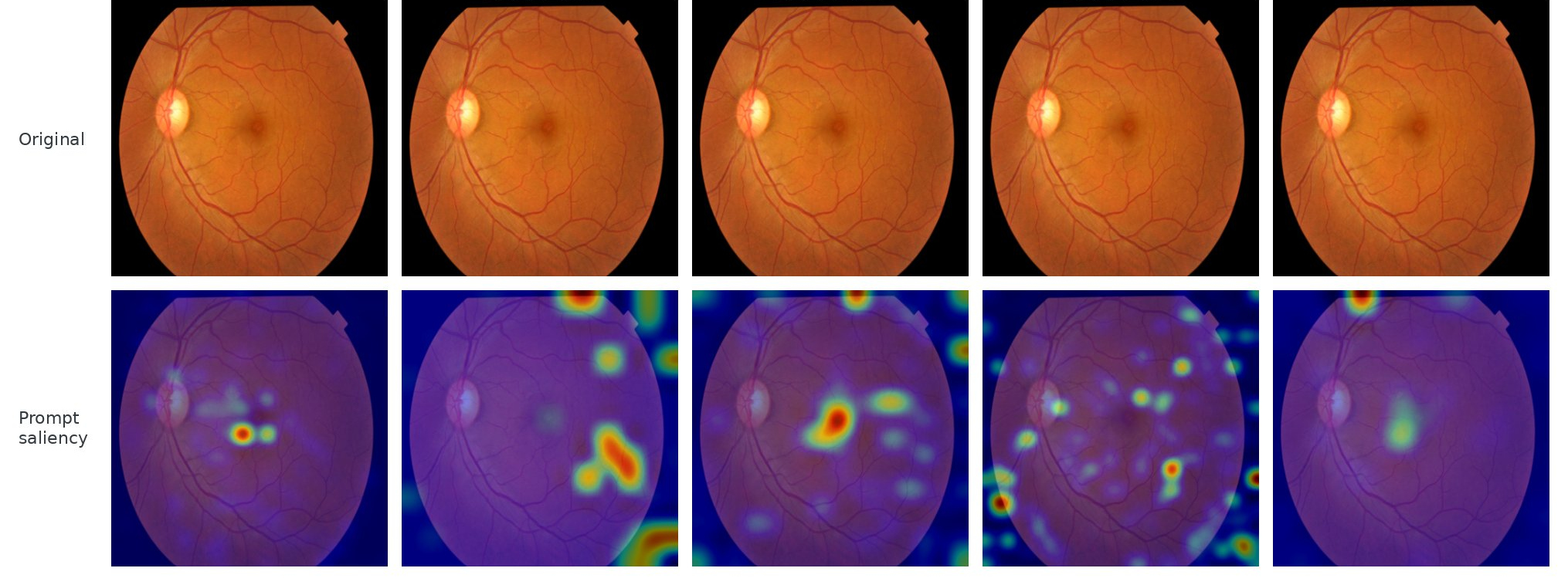}
\caption{\textbf{Color Fundus zero-shot classification.} Examples: (a) diabetic retinopathy, (b) myopia, and (c) healthy. In panel (a), the proposed technique produces several localized retinal responses, while BMC shows broader responses toward the upper and peripheral retinal field. BMCA includes a prominent focus near the optic disc, ConceptCLIP shows several scattered foci, and UniMed-CLIP emphasizes lower peripheral locations. The figure illustrates that a shared class description can elicit different responses within the same fundus image. Heatmaps use ground-truth-class descriptions. Saliency construction follows Section~\ref{app:comparative-saliency-method}.}
\label{fig:saliency-color-fundus}
\end{minipage}
\end{figure}
\clearpage

\begin{figure}[p]
\centering\begin{minipage}{5.5in}
\centering
\saliencycolumns
\vspace{3pt}
{\footnotesize (a)}\\[1pt]
\includegraphics[width=\linewidth]{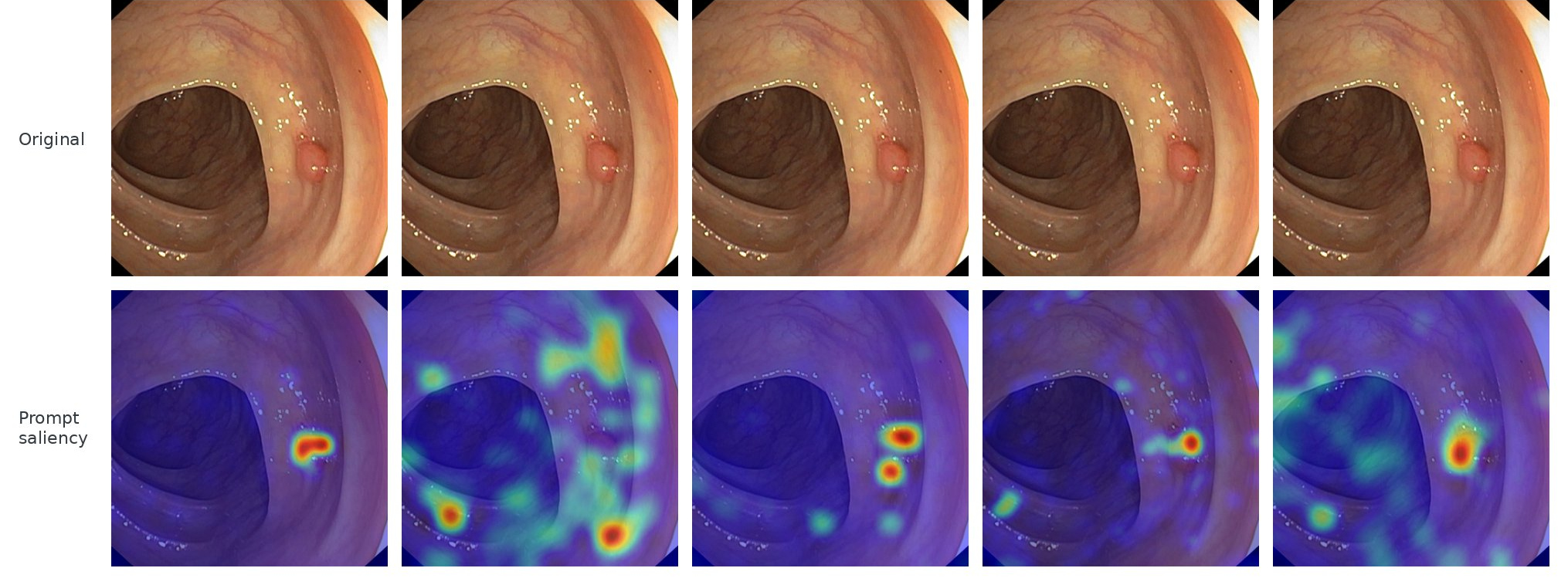}\\[4pt]
{\footnotesize (b)}\\[1pt]
\includegraphics[width=\linewidth]{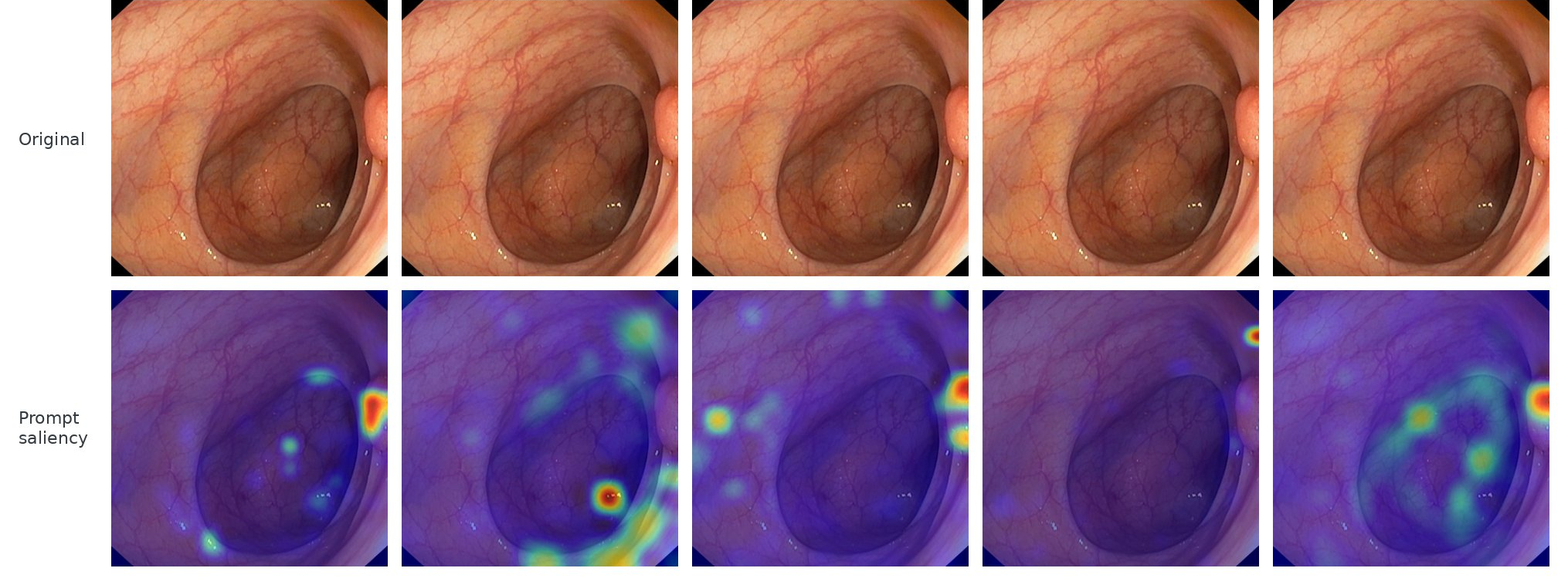}\\[4pt]
{\footnotesize (c)}\\[1pt]
\includegraphics[width=\linewidth]{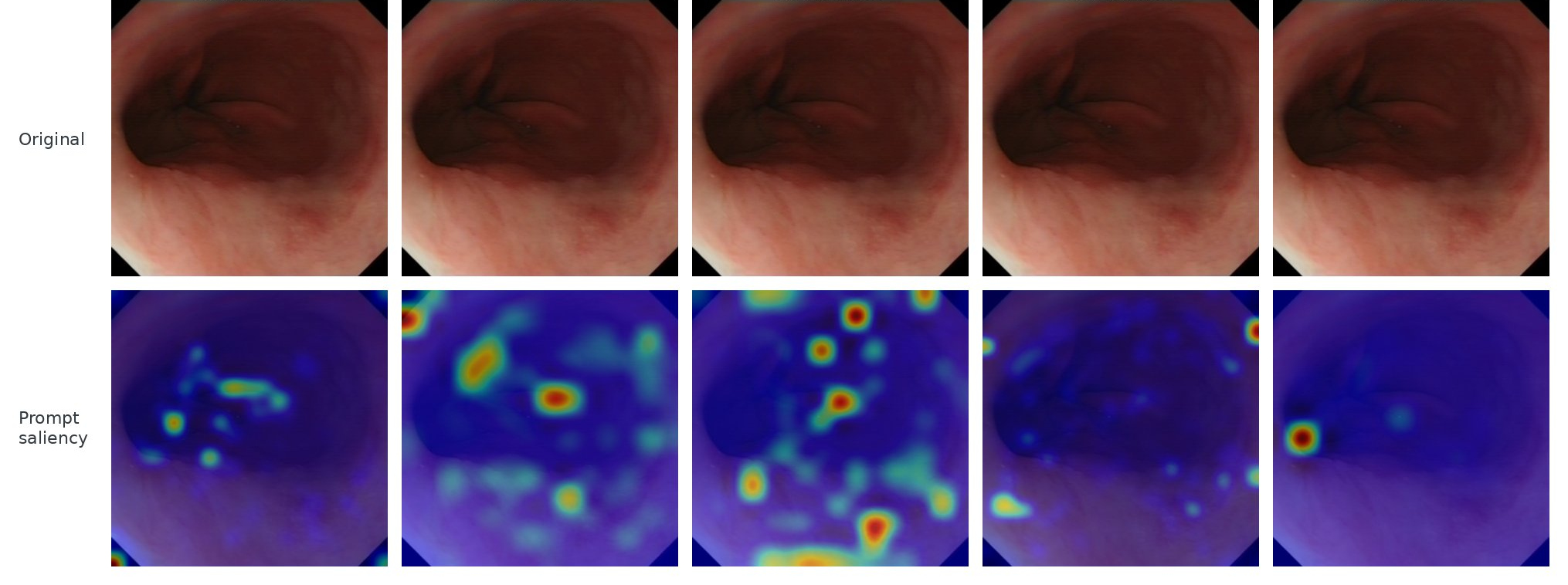}
\caption{\textbf{EDD2020 zero-shot classification.} Examples: (a) polyp-positive, (b) polyp-positive, and (c) polyp-negative. In panel (a), the proposed technique shows a compact response at the raised structure on the right side of the image. BMCA, ConceptCLIP, and UniMed-CLIP also include a response in this area, while BMC's overlay extends across more of the surrounding mucosa. This example contrasts localized and distributed responses to the polyp-related prompt. Heatmaps use ground-truth-class descriptions. Saliency construction follows Section~\ref{app:comparative-saliency-method}.}
\label{fig:saliency-edd2020}
\end{minipage}
\end{figure}
\clearpage

\begin{figure}[p]
\centering\begin{minipage}{5.5in}
\centering
\saliencycolumns
\vspace{3pt}
{\footnotesize (a)}\\[1pt]
\includegraphics[width=\linewidth]{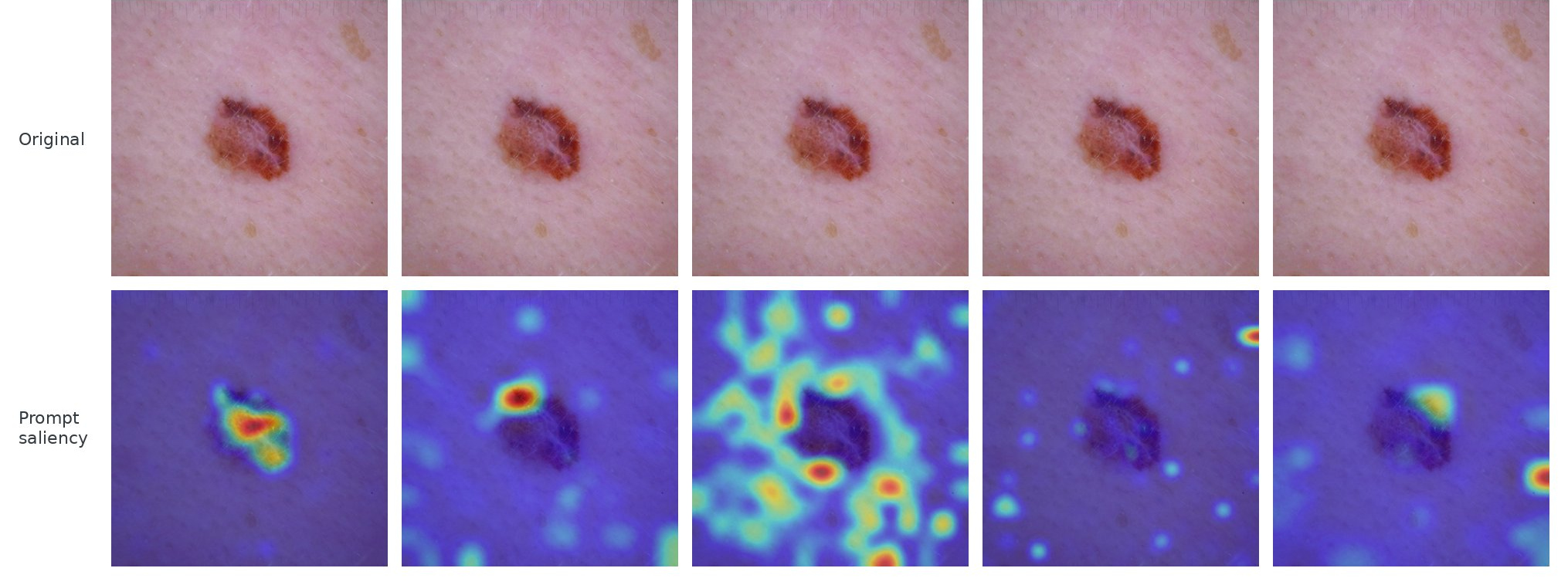}\\[4pt]
{\footnotesize (b)}\\[1pt]
\includegraphics[width=\linewidth]{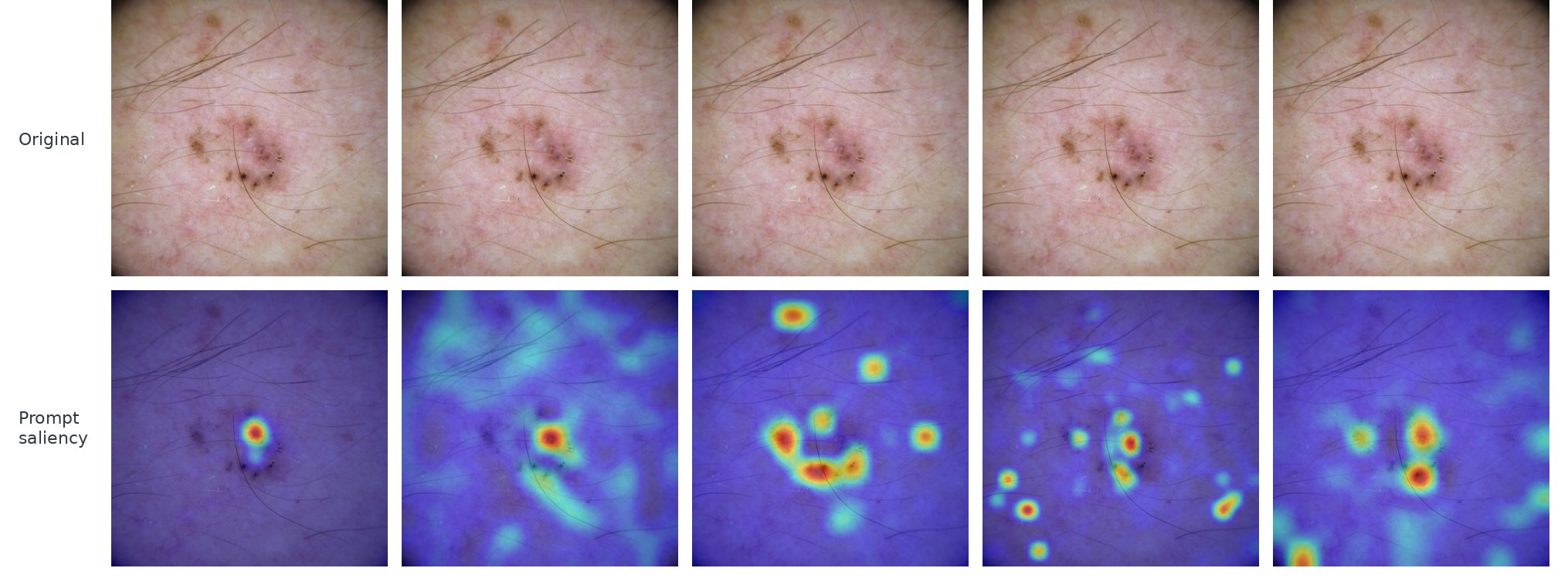}\\[4pt]
{\footnotesize (c)}\\[1pt]
\includegraphics[width=\linewidth]{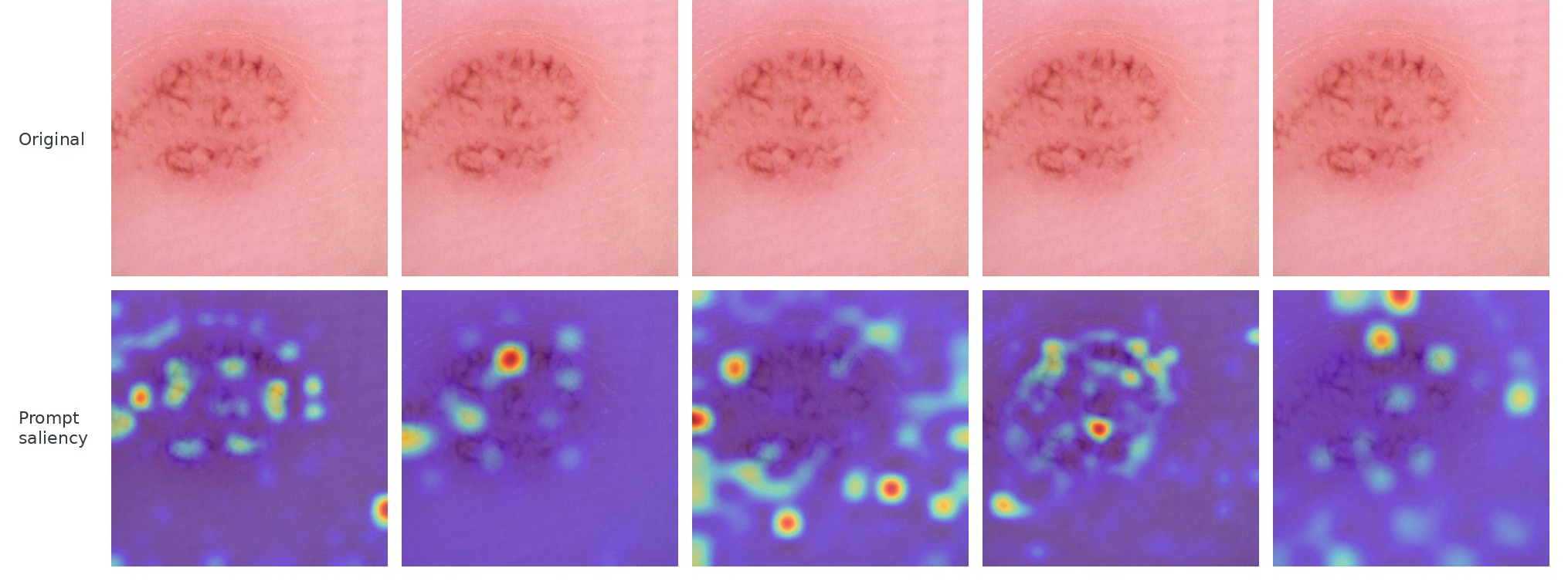}
\caption{\textbf{HAM10000 zero-shot classification.} Examples: (a) melanoma, (b) basal cell carcinoma, and (c) melanocytic nevus. In panel (a), the proposed technique shows a concentrated response over part of the pigmented lesion. BMC also highlights a lesion-associated area, while BMCA distributes responses over both the lesion and surrounding skin. ConceptCLIP and UniMed-CLIP show different, more limited foci. The overlays compare responses to the melanoma description without supplying a lesion segmentation. Heatmaps use ground-truth-class descriptions. Saliency construction follows Section~\ref{app:comparative-saliency-method}.}
\label{fig:saliency-ham10000}
\end{minipage}
\end{figure}
\clearpage

\begin{figure}[p]
\centering\begin{minipage}{5.5in}
\centering
\saliencycolumns
\vspace{3pt}
{\footnotesize (a)}\\[1pt]
\includegraphics[width=\linewidth]{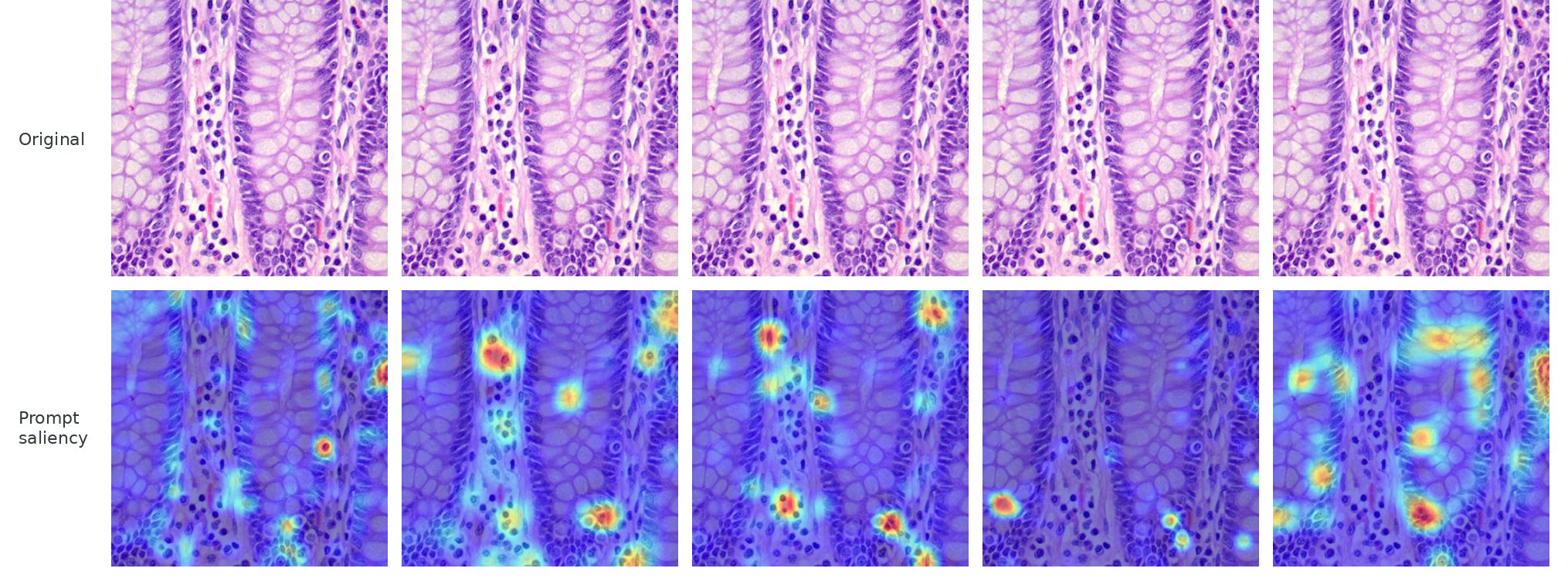}\\[4pt]
{\footnotesize (b)}\\[1pt]
\includegraphics[width=\linewidth]{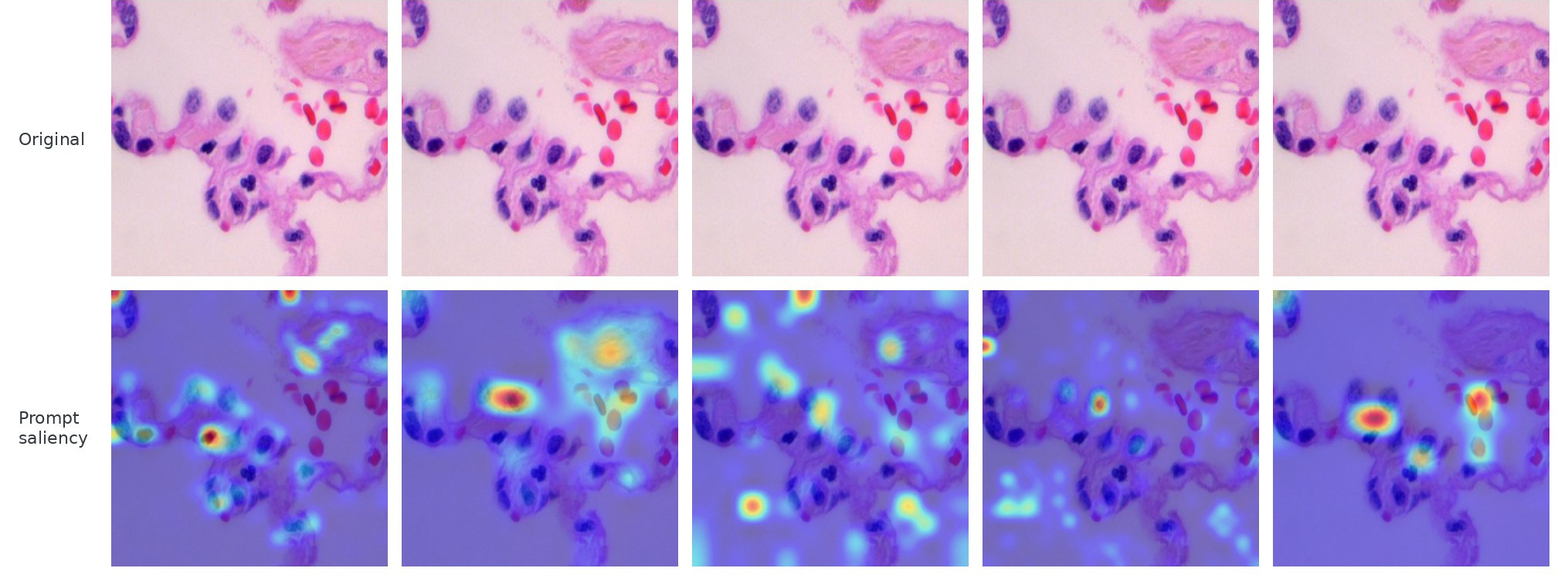}\\[4pt]
{\footnotesize (c)}\\[1pt]
\includegraphics[width=\linewidth]{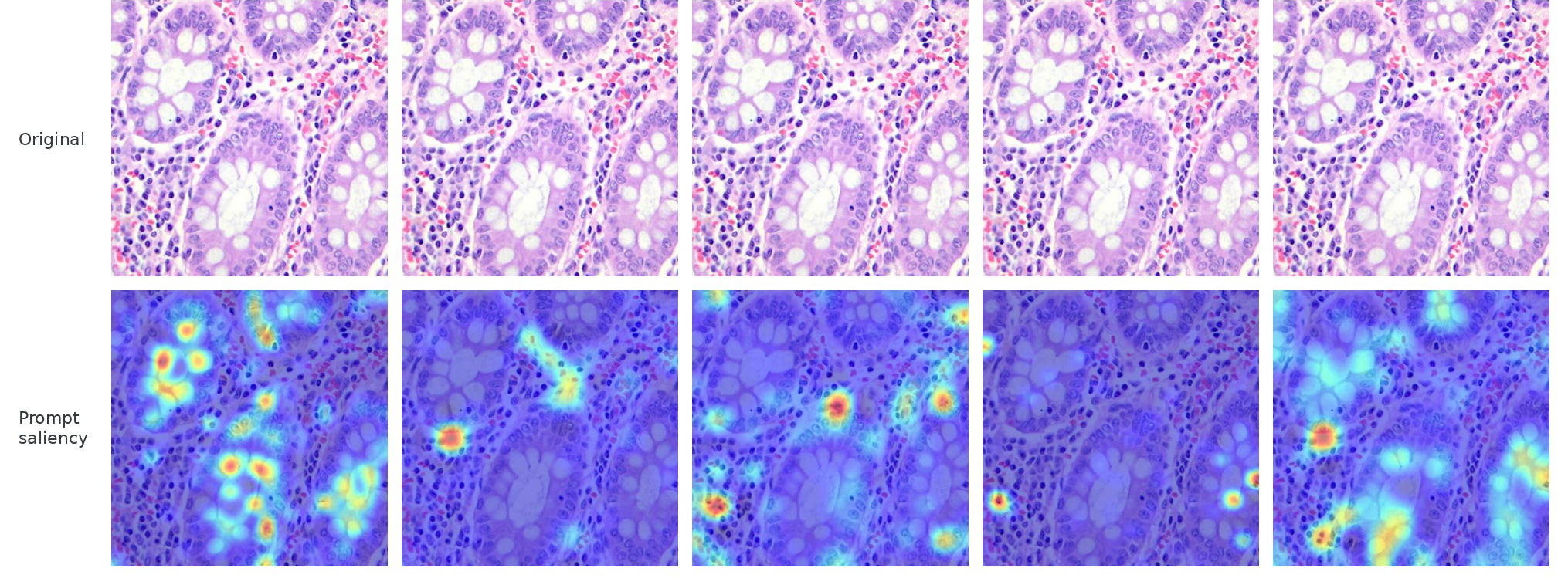}
\caption{\textbf{LC25000 zero-shot classification.} Examples: (a) colon tissue, (b) lung tissue, and (c) colon tissue. In panel (a), the proposed technique's responses are distributed among the visible tissue structures. BMC, BMCA, and UniMed-CLIP place prominent foci at different positions within the same field, while ConceptCLIP highlights fewer locations. The comparison concerns the spatial response to the colon-tissue description, which can depend on tissue context rather than a single lesion. Heatmaps use ground-truth-class descriptions. Saliency construction follows Section~\ref{app:comparative-saliency-method}.}
\label{fig:saliency-lc25000}
\end{minipage}
\end{figure}
\clearpage

\begin{figure}[p]
\centering\begin{minipage}{5.5in}
\centering
\saliencycolumns
\vspace{3pt}
{\footnotesize (a)}\\[1pt]
\includegraphics[width=\linewidth]{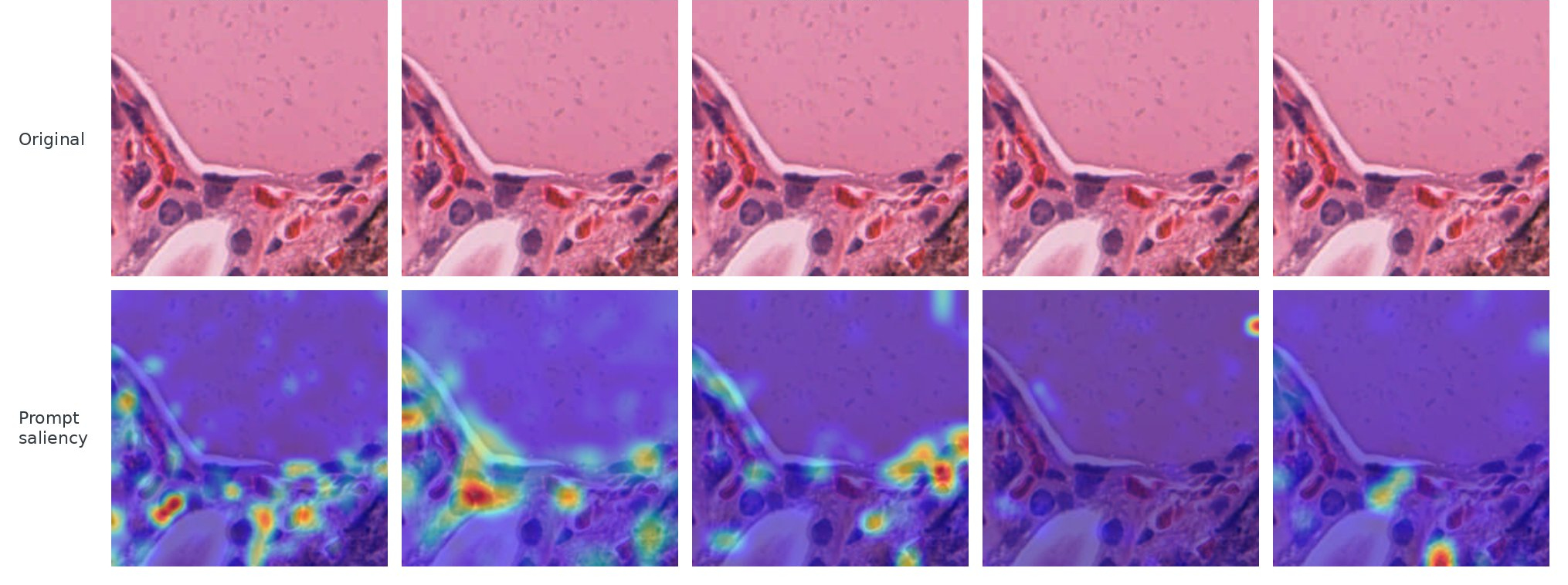}\\[4pt]
{\footnotesize (b)}\\[1pt]
\includegraphics[width=\linewidth]{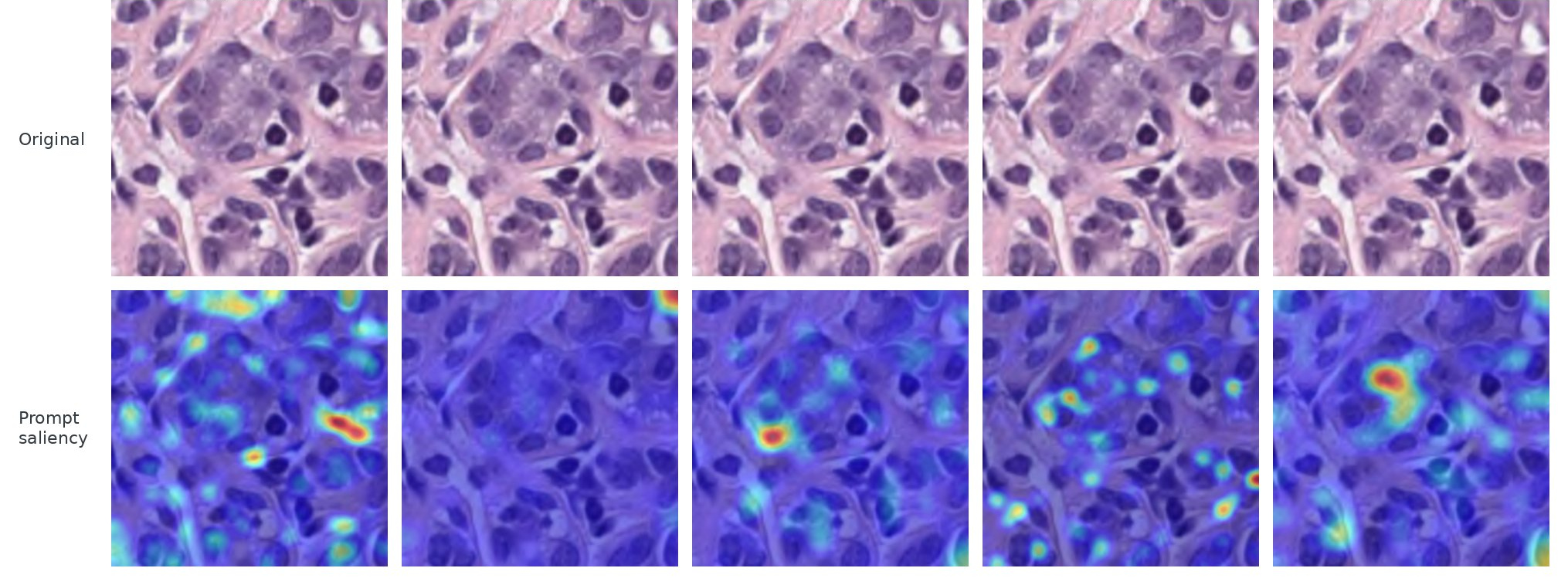}\\[4pt]
{\footnotesize (c)}\\[1pt]
\includegraphics[width=\linewidth]{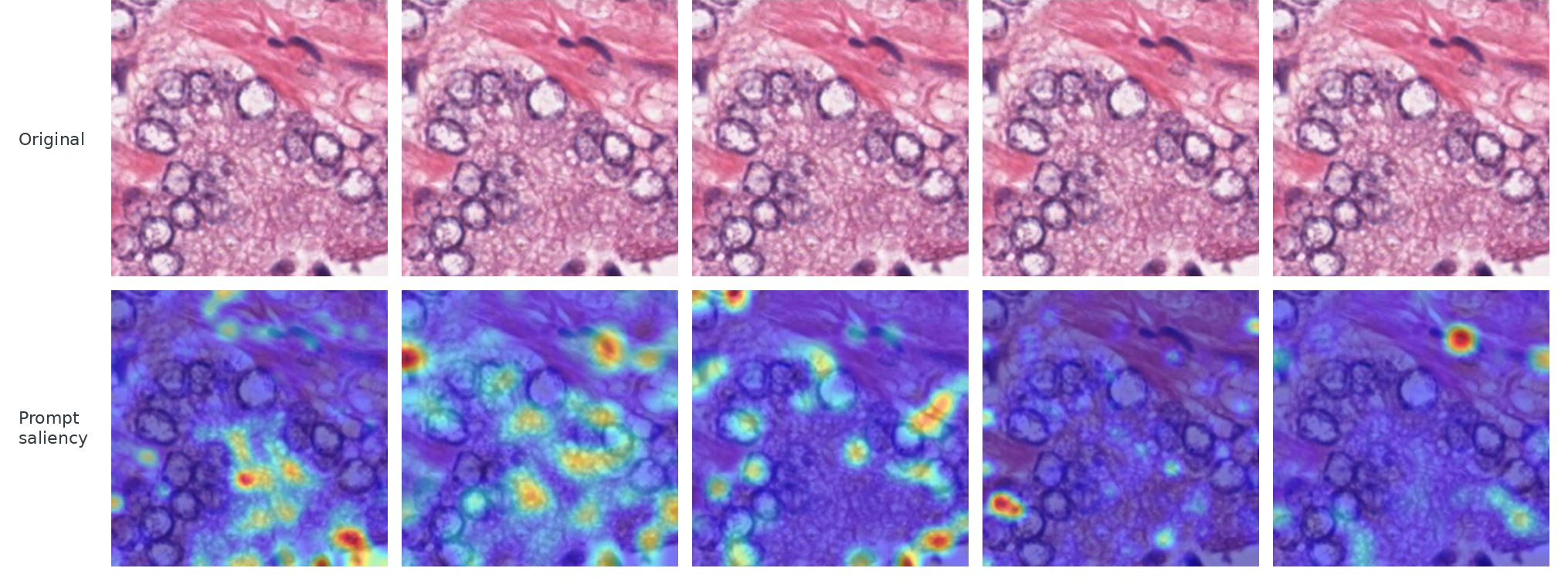}
\caption{\textbf{PanNuke zero-shot classification.} Examples: (a) thyroid, (b) breast, and (c) prostate. In panel (a), the proposed technique highlights several areas along the cellular band in the lower portion of the tissue image. BMC and BMCA also respond along this band, with different spatial extents, while ConceptCLIP and UniMed-CLIP emphasize other locations. These are responses to a tissue-class description, rather than nucleus-instance localization maps. Heatmaps use ground-truth-class descriptions. Saliency construction follows Section~\ref{app:comparative-saliency-method}.}
\label{fig:saliency-pannuke}
\end{minipage}
\end{figure}
\clearpage

\begin{figure}[p]
\centering\begin{minipage}{5.5in}
\centering
\saliencycolumns
\vspace{3pt}
{\footnotesize (a)}\\[1pt]
\includegraphics[width=\linewidth]{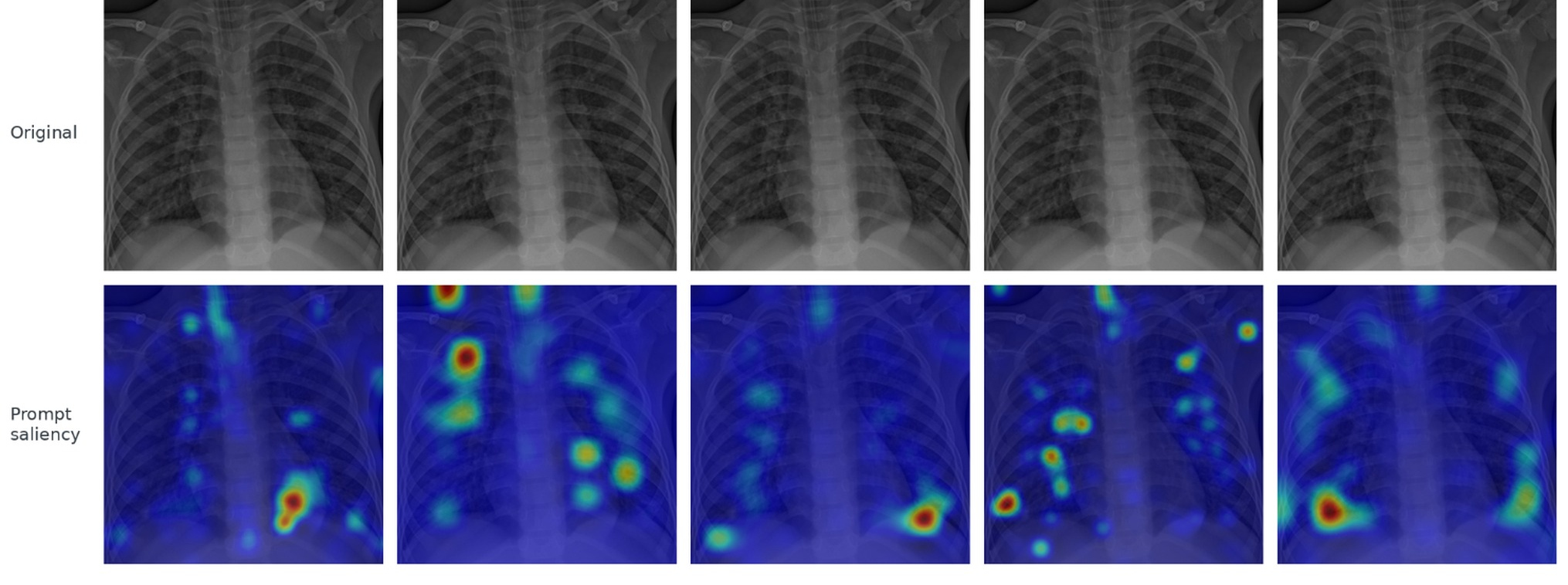}\\[4pt]
{\footnotesize (b)}\\[1pt]
\includegraphics[width=\linewidth]{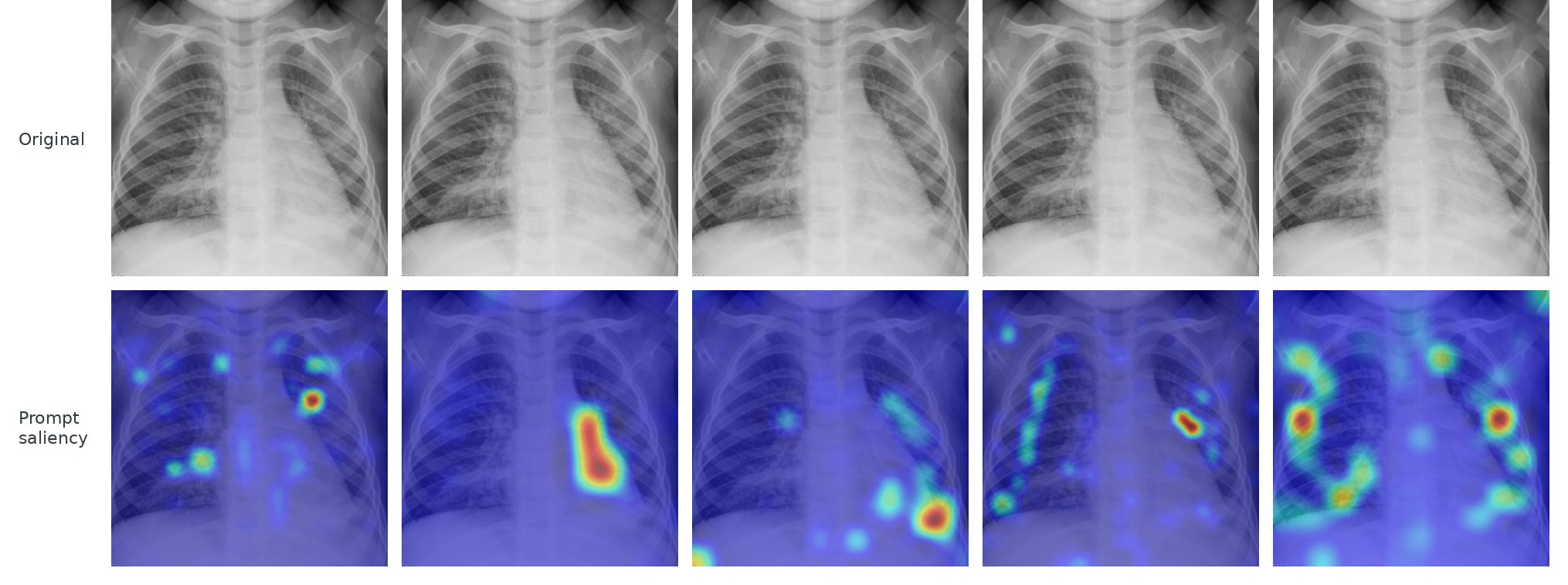}\\[4pt]
{\footnotesize (c)}\\[1pt]
\includegraphics[width=\linewidth]{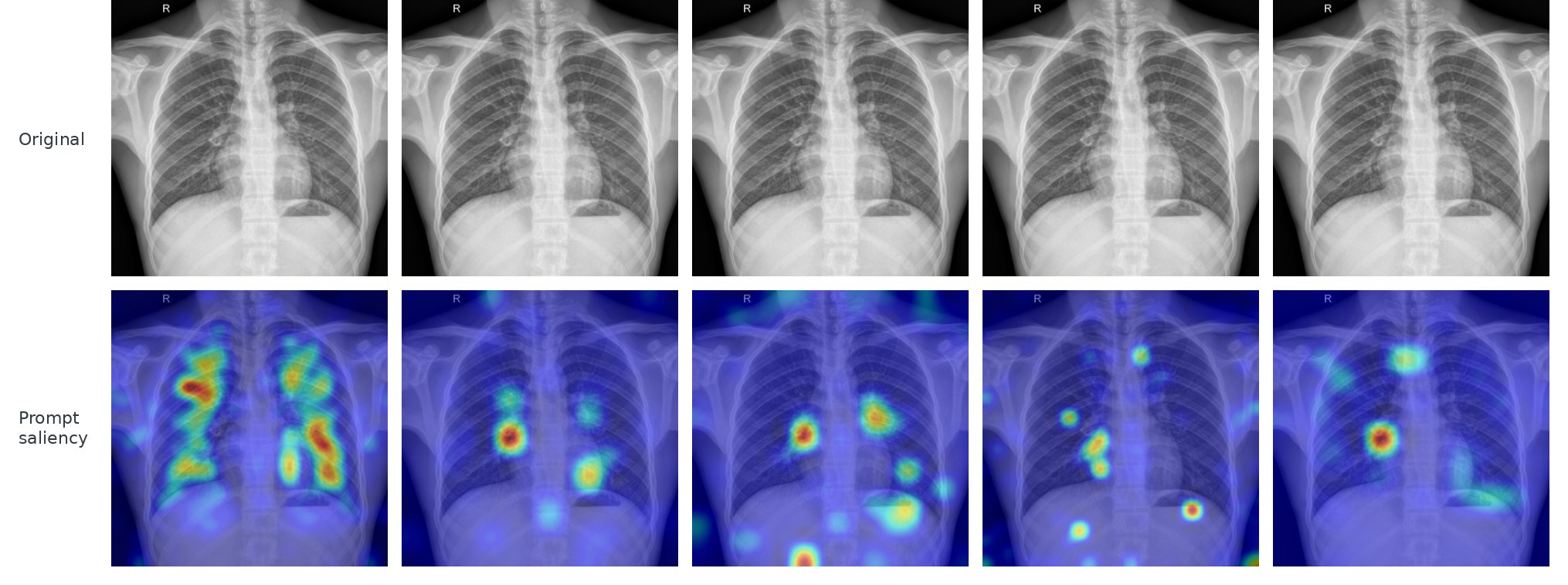}
\caption{\textbf{Pneumonia CXR zero-shot classification.} Examples: (a) pneumonia, (b) pneumonia, and (c) normal. In panel (a), the proposed technique includes a prominent focus in the lower portion of the thorax on the right side of the displayed image. BMC, BMCA, ConceptCLIP, and UniMed-CLIP distribute their foci differently across the lung fields and upper thorax. These differences concern the spatial response to the pneumonia prompt and do not measure localization accuracy. Heatmaps use ground-truth-class descriptions. Saliency construction follows Section~\ref{app:comparative-saliency-method}.}
\label{fig:saliency-pneumonia-cxr}
\end{minipage}
\end{figure}
\clearpage

\begin{figure}[p]
\centering\begin{minipage}{5.5in}
\centering
\saliencycolumns
\vspace{3pt}
{\footnotesize (a)}\\[1pt]
\includegraphics[width=\linewidth]{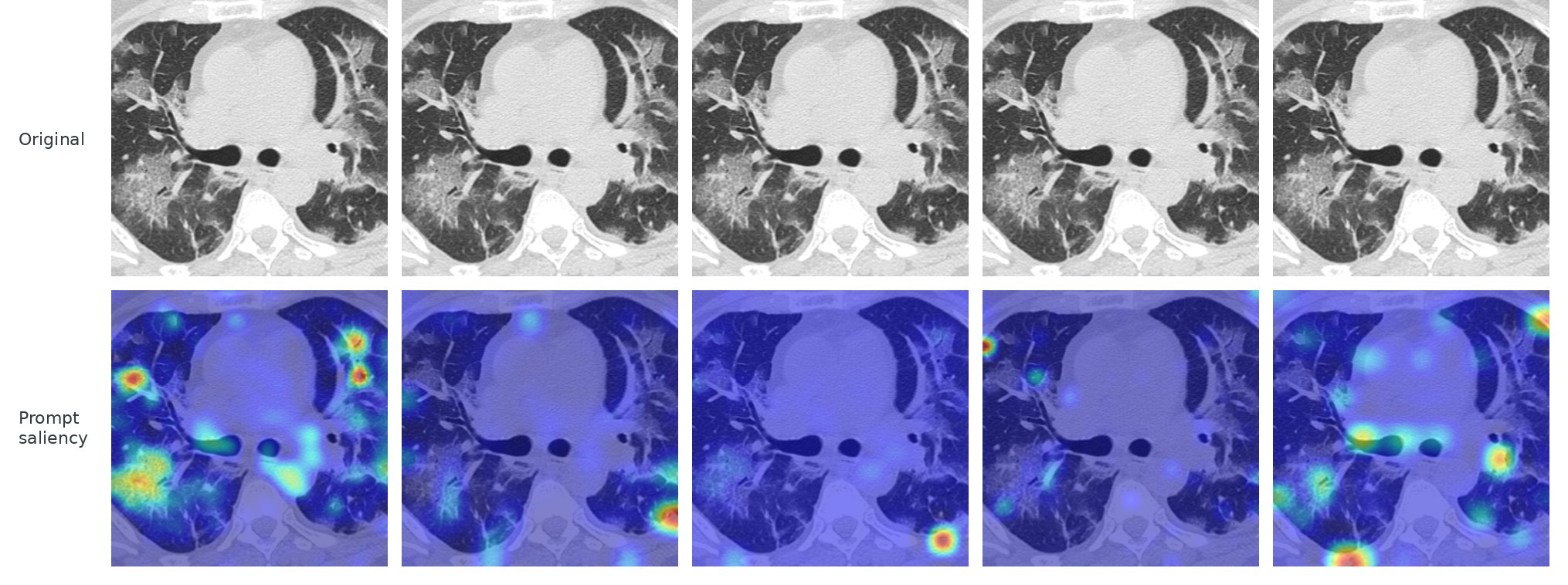}\\[4pt]
{\footnotesize (b)}\\[1pt]
\includegraphics[width=\linewidth]{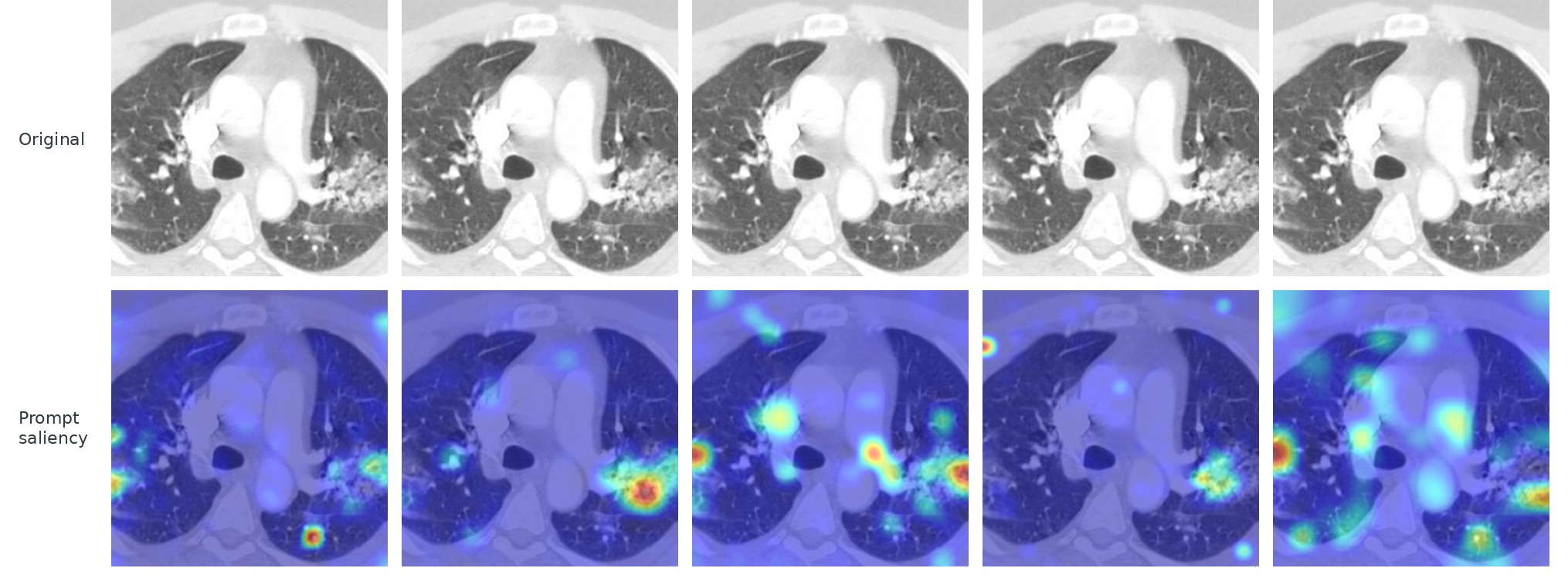}\\[4pt]
{\footnotesize (c)}\\[1pt]
\includegraphics[width=\linewidth]{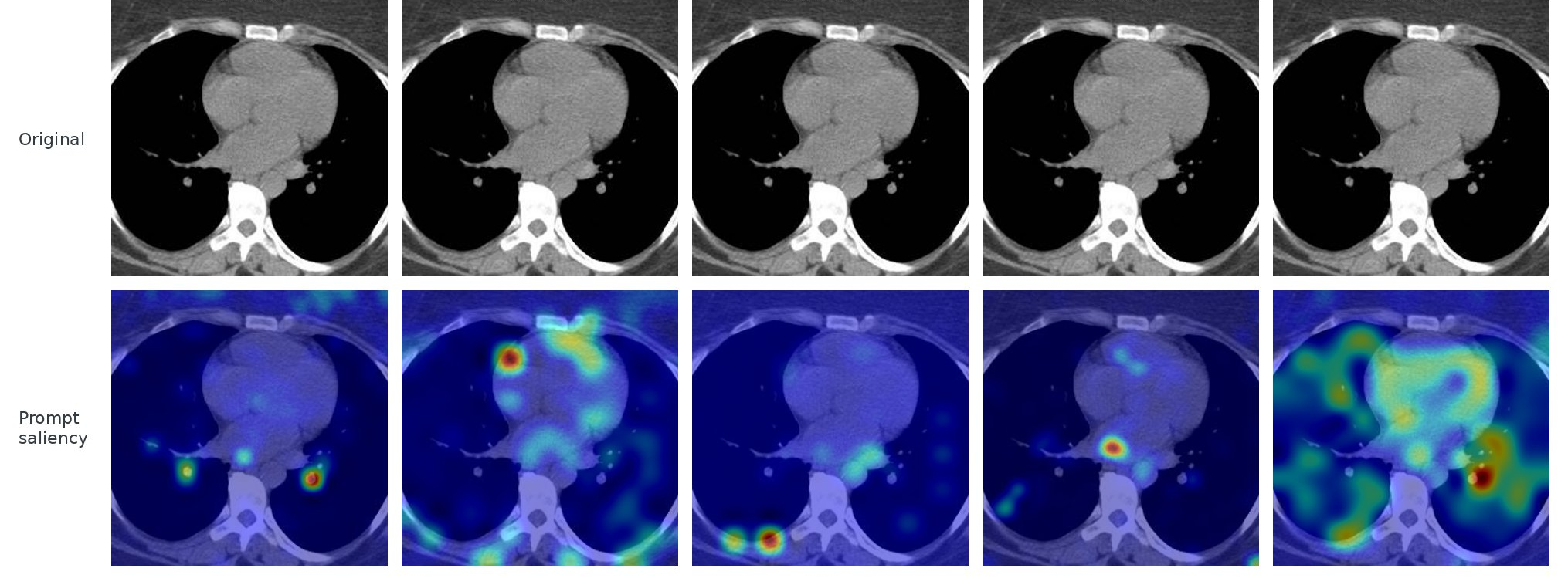}
\caption{\textbf{SARS-CoV-2 CT zero-shot classification.} Examples: (a) COVID-positive, (b) COVID-positive, and (c) non-COVID. In panel (a), the proposed technique produces responses at several locations within the lung fields. BMC and BMCA include peripheral foci, ConceptCLIP highlights a different set of sparse locations, and UniMed-CLIP also responds near central thoracic structures. The overlays show prompt-conditioned responses on a CT slice without marking a validated infection boundary. Heatmaps use ground-truth-class descriptions. Saliency construction follows Section~\ref{app:comparative-saliency-method}.}
\label{fig:saliency-sars-cov-2-ct}
\end{minipage}
\end{figure}
\clearpage

\begin{figure}[p]
\centering\begin{minipage}{5.5in}
\centering
\saliencycolumns
\vspace{3pt}
{\footnotesize (a)}\\[1pt]
\includegraphics[width=\linewidth]{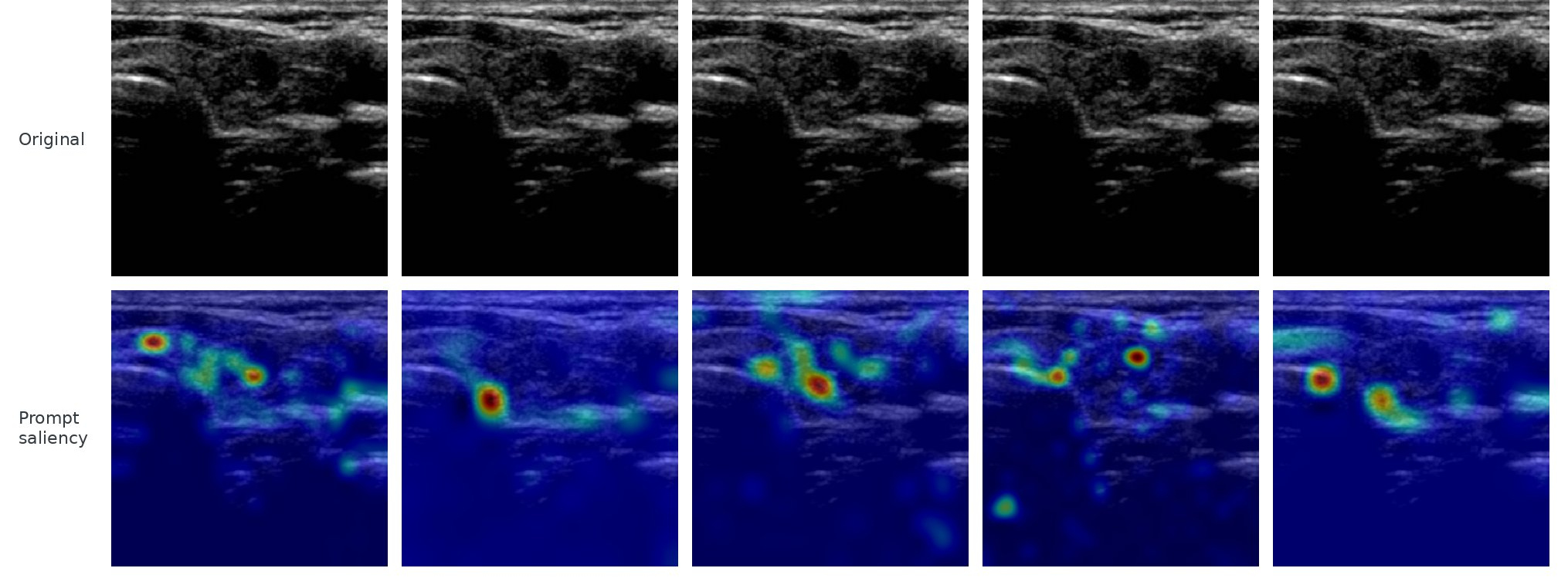}\\[4pt]
{\footnotesize (b)}\\[1pt]
\includegraphics[width=\linewidth]{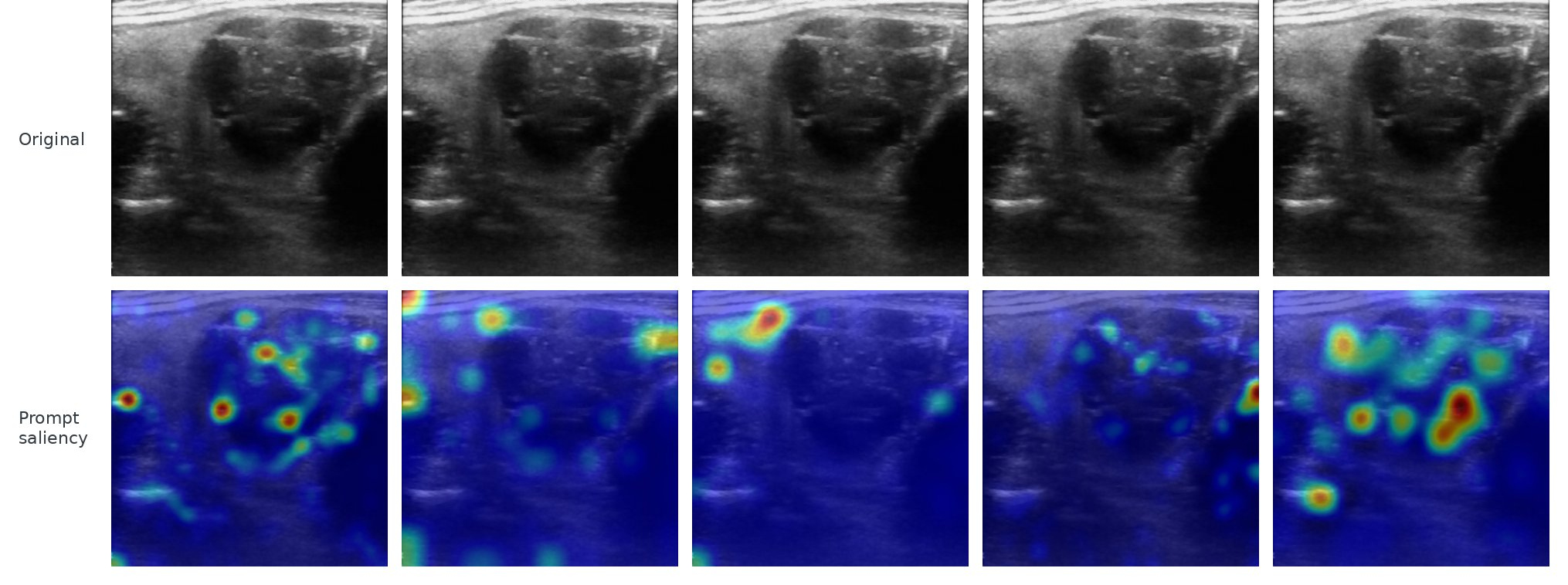}\\[4pt]
{\footnotesize (c)}\\[1pt]
\includegraphics[width=\linewidth]{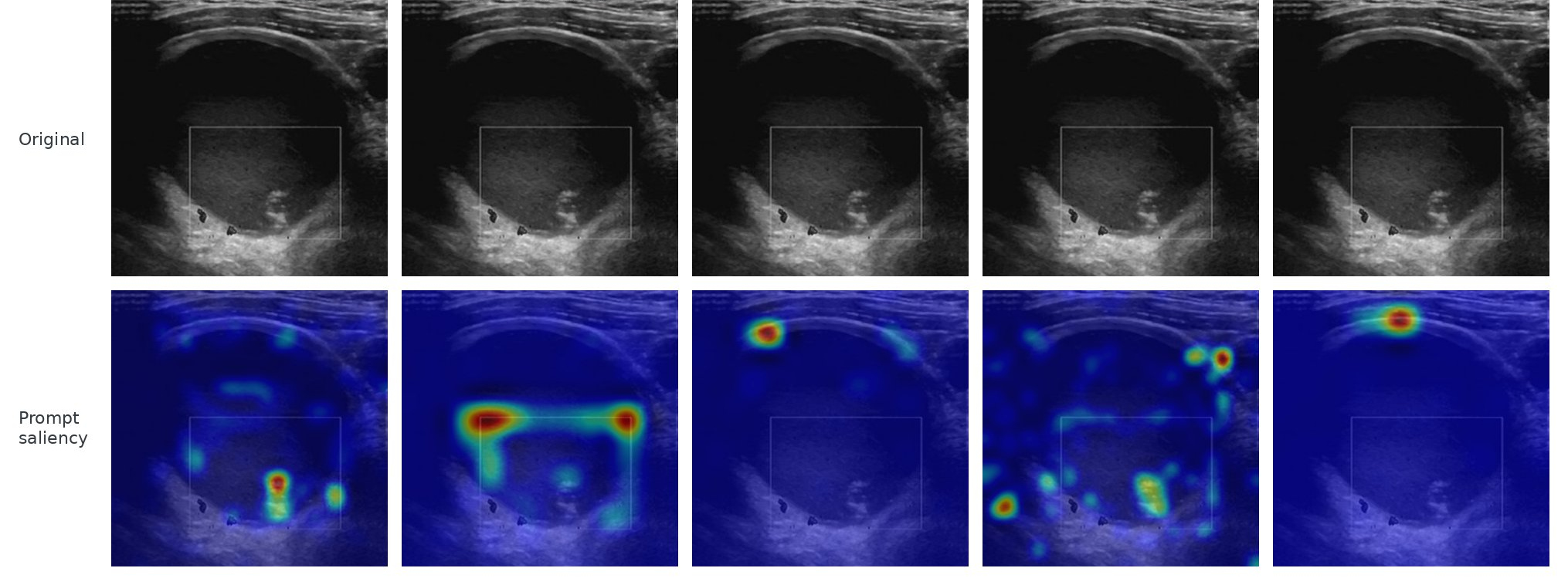}
\caption{\textbf{Thyroid US zero-shot classification.} Examples: (a) malignant thyroid nodule, (b) malignant thyroid nodule, and (c) benign thyroid nodule. In panel (a), the proposed technique, BMC, and BMCA emphasize different parts of the ultrasound tissue band. ConceptCLIP and UniMed-CLIP also produce localized foci, but at different positions. The comparison shows variation in responses to a thyroid-nodule description across the same ultrasound image; it does not establish which response delineates the nodule most accurately. Heatmaps use ground-truth-class descriptions. Saliency construction follows Section~\ref{app:comparative-saliency-method}.}
\label{fig:saliency-thyroid-us}
\end{minipage}
\end{figure}
\clearpage

\begin{figure}[p]
\centering\begin{minipage}{5.5in}
\centering
\saliencycolumns
\vspace{3pt}
{\footnotesize (a)}\\[1pt]
\includegraphics[width=\linewidth]{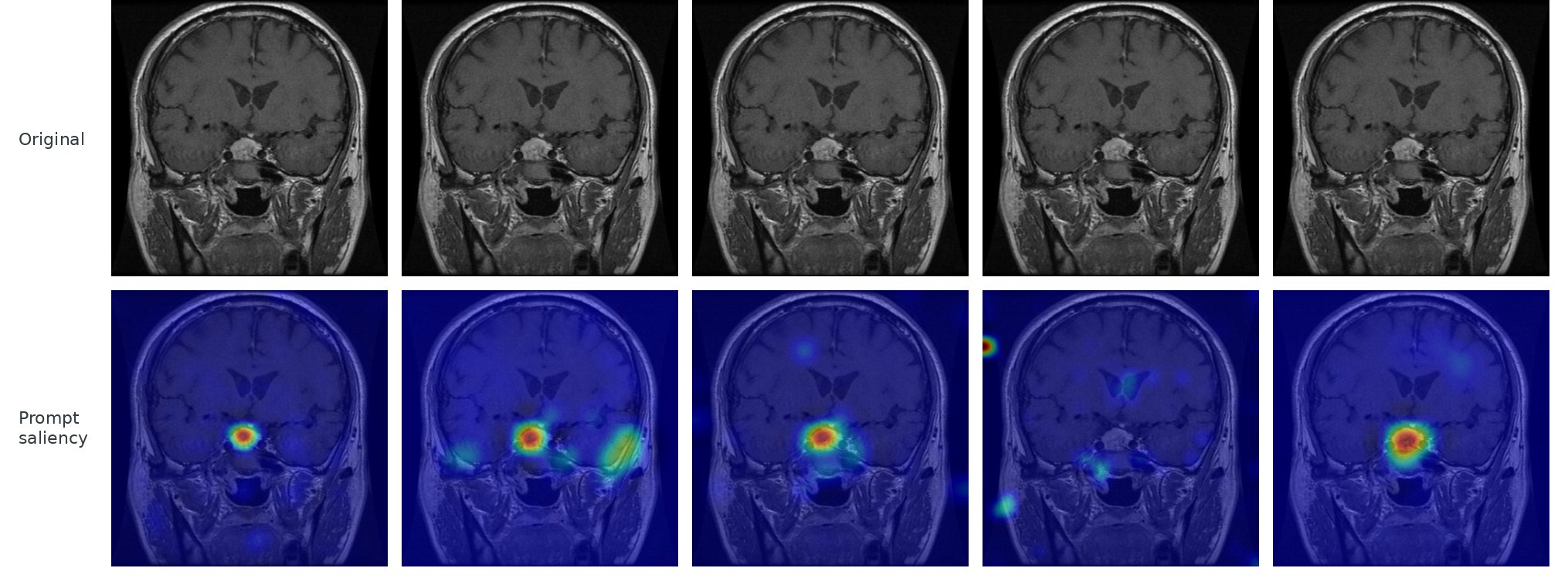}\\[4pt]
{\footnotesize (b)}\\[1pt]
\includegraphics[width=\linewidth]{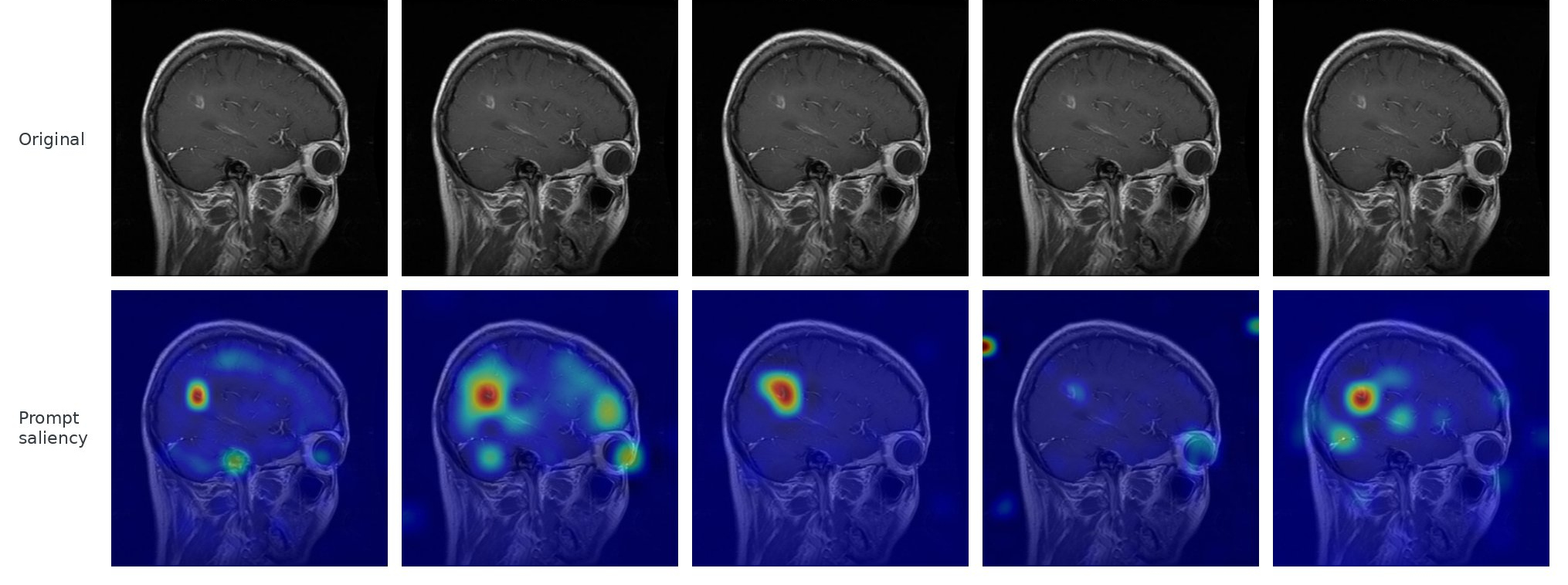}\\[4pt]
{\footnotesize (c)}\\[1pt]
\includegraphics[width=\linewidth]{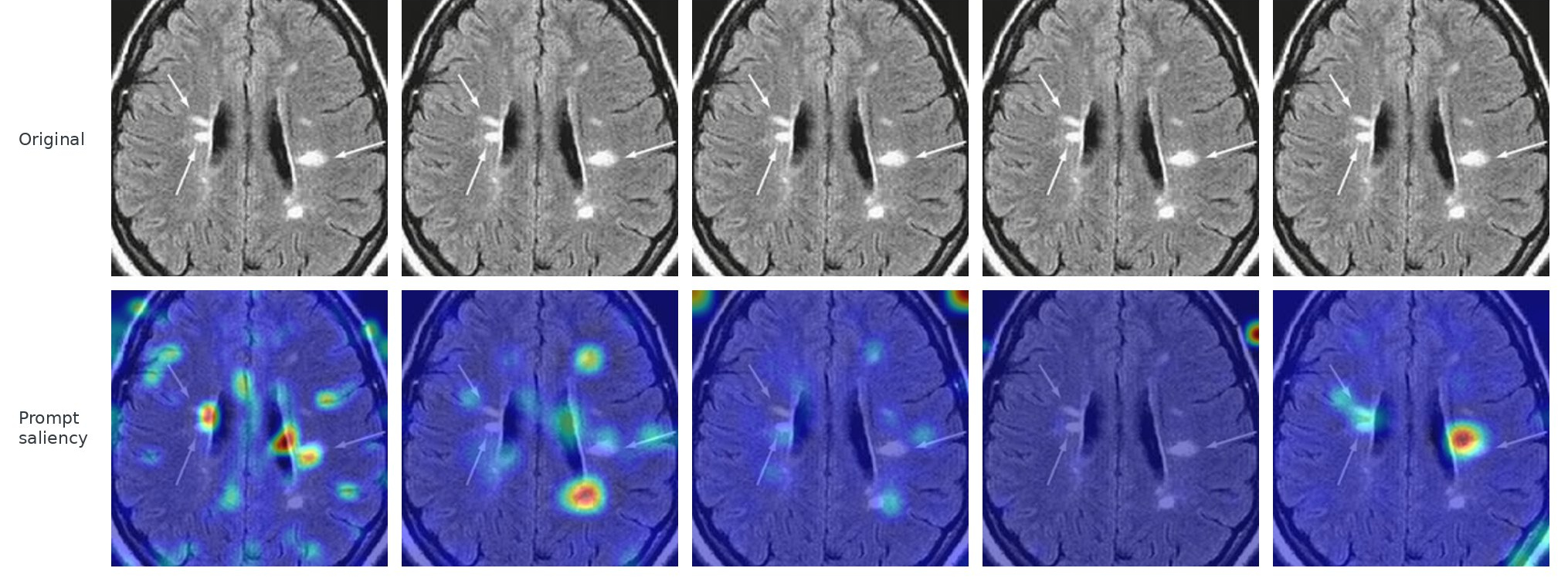}
\caption{\textbf{Brain Tumor MRI linear-probe classification.} Examples: (a) pituitary tumor, (b) glioma, and (c) no tumor. In panel (a), the proposed technique, BMC, BMCA, and UniMed-CLIP show a prominent response around the sellar region. BMC additionally highlights a lateral area, while ConceptCLIP shows a different spatial distribution. These maps explain similarity to the pituitary-related text prompt, independently of the image-only linear classifier. Heatmaps explain image--prompt similarity, not the linear-probe class score. Saliency construction follows Section~\ref{app:comparative-saliency-method}.}
\label{fig:saliency-brain-tumor-mri}
\end{minipage}
\end{figure}
\clearpage

\begin{figure}[p]
\centering\begin{minipage}{5.5in}
\centering
\saliencycolumns
\vspace{3pt}
{\footnotesize (a)}\\[1pt]
\includegraphics[width=\linewidth]{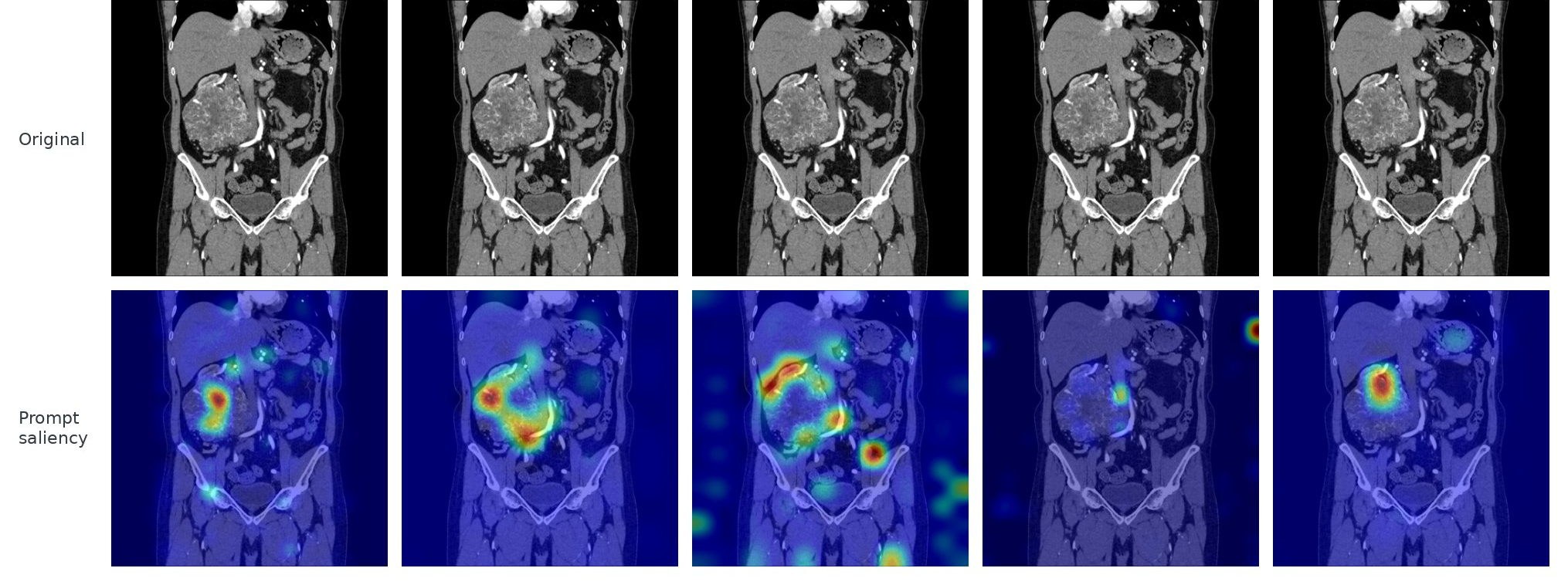}\\[4pt]
{\footnotesize (b)}\\[1pt]
\includegraphics[width=\linewidth]{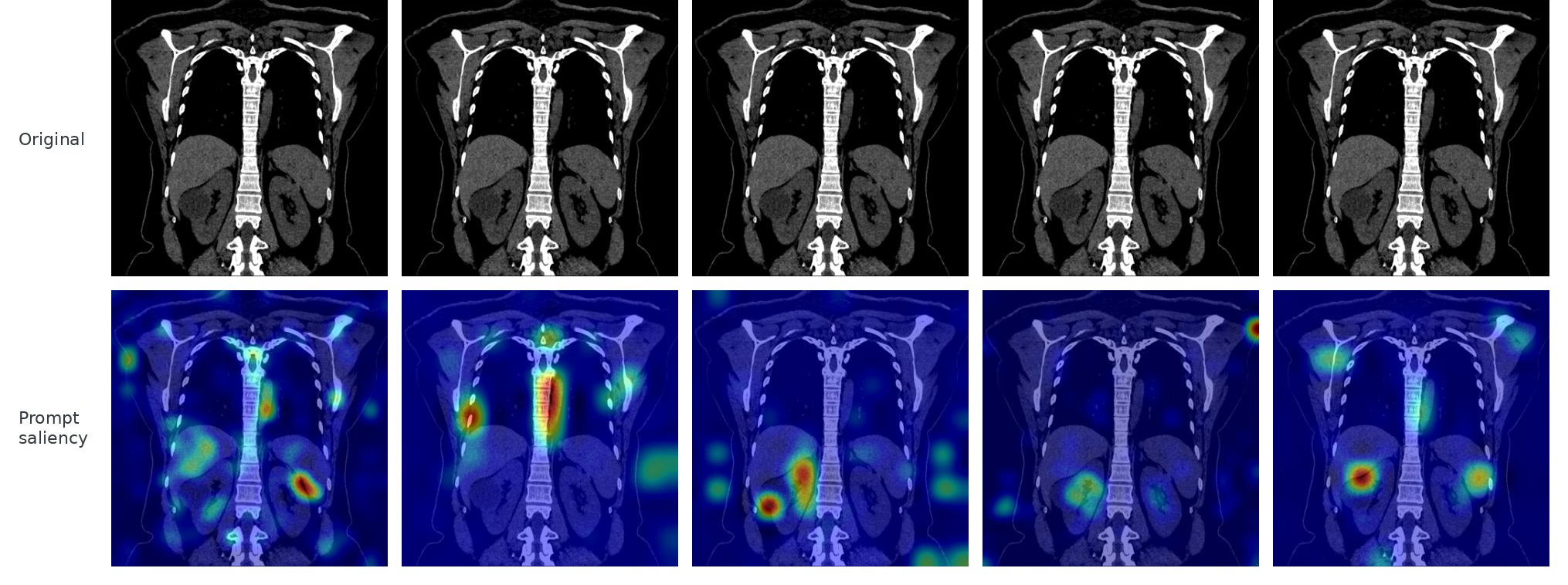}\\[4pt]
{\footnotesize (c)}\\[1pt]
\includegraphics[width=\linewidth]{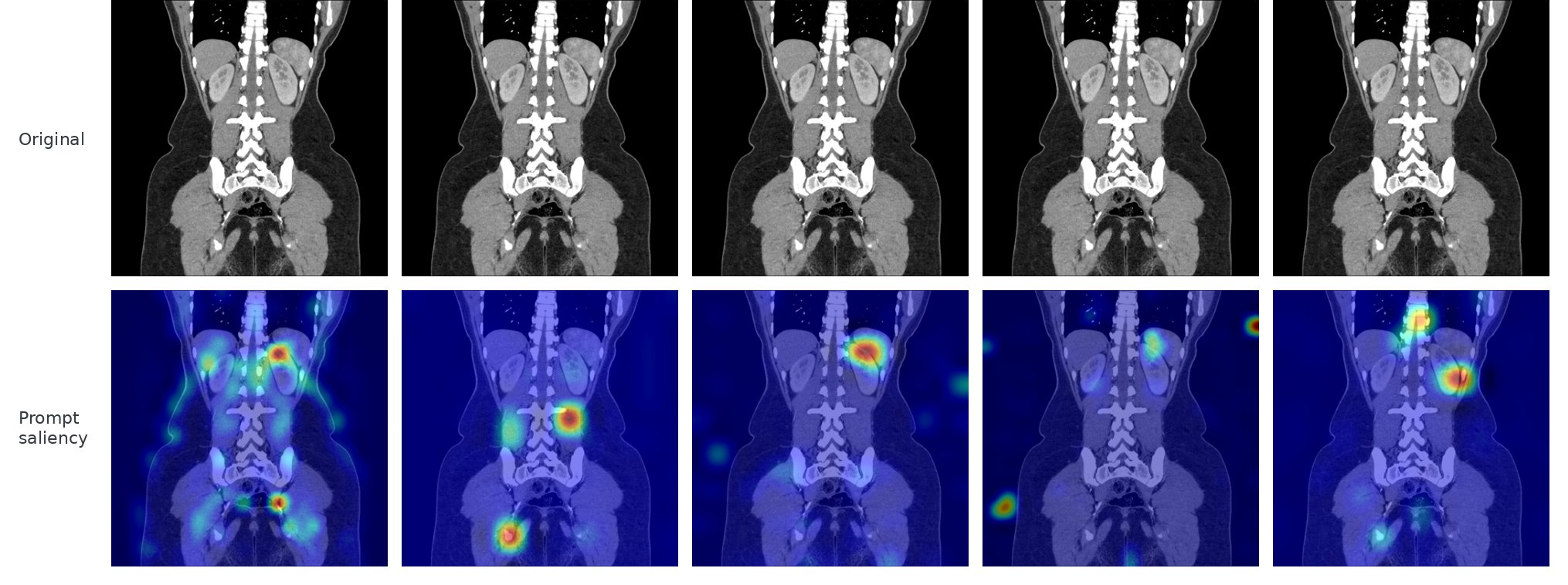}
\caption{\textbf{CT-Kidney linear-probe classification.} Examples: (a) tumor, (b) cyst, and (c) normal. In panel (a), the proposed technique and UniMed-CLIP show a localized response in the central abdominal region on the left side of the displayed image. BMC and BMCA show responses with a larger spatial extent, while ConceptCLIP emphasizes different locations. The maps describe similarity to the tumor-related text prompt rather than contributions to the linear-probe score. Heatmaps explain image--prompt similarity, not the linear-probe class score. Saliency construction follows Section~\ref{app:comparative-saliency-method}.}
\label{fig:saliency-ct-kidney}
\end{minipage}
\end{figure}
\clearpage

\begin{figure}[p]
\centering\begin{minipage}{5.5in}
\centering
\saliencycolumns
\vspace{3pt}
{\footnotesize (a)}\\[1pt]
\includegraphics[width=\linewidth]{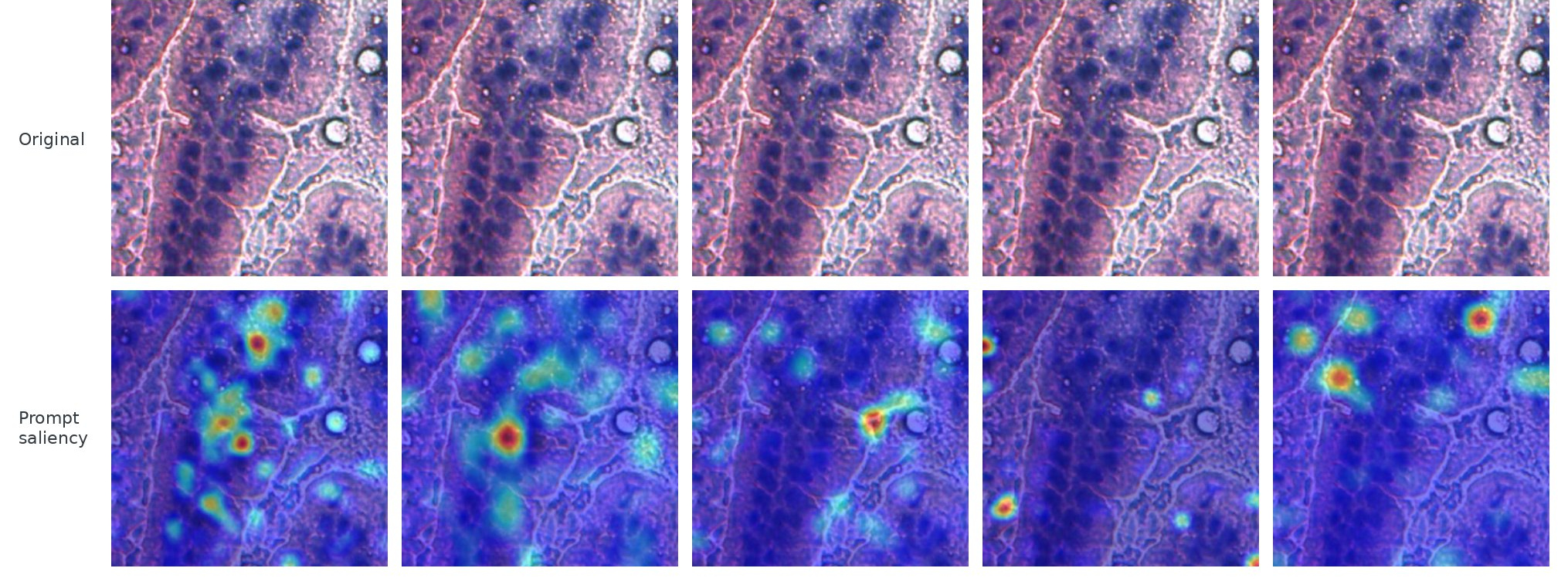}\\[4pt]
{\footnotesize (b)}\\[1pt]
\includegraphics[width=\linewidth]{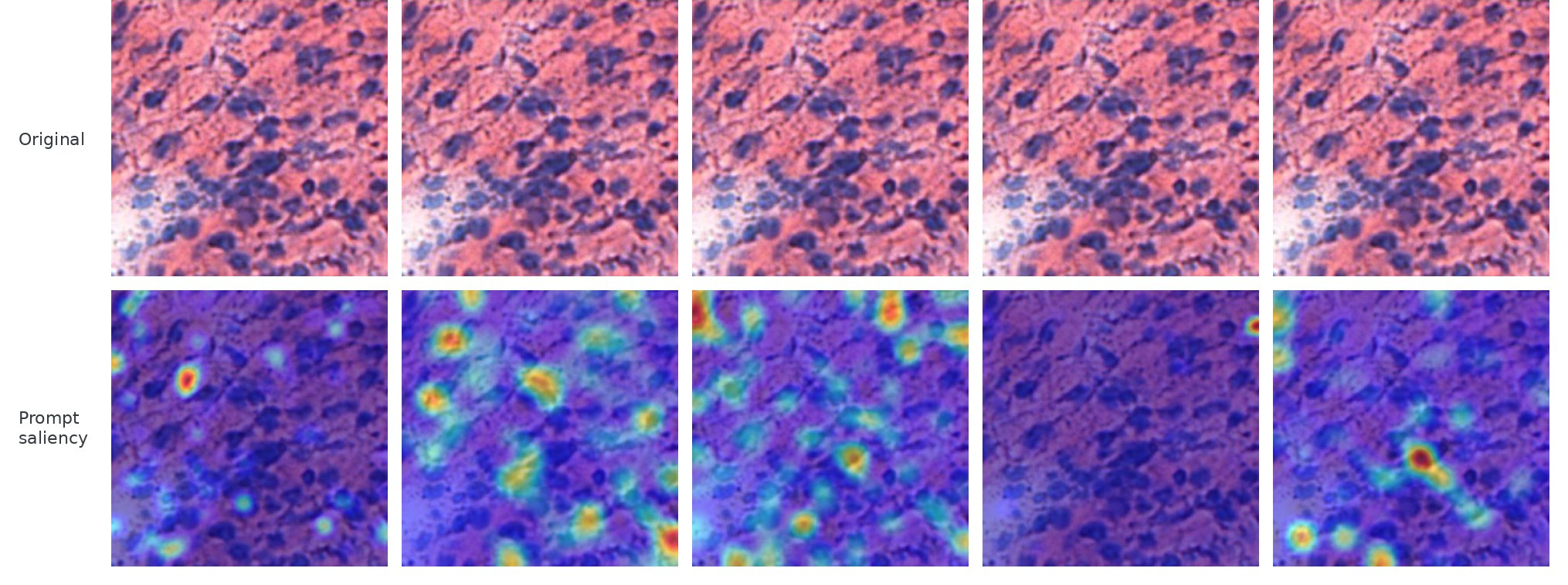}\\[4pt]
{\footnotesize (c)}\\[1pt]
\includegraphics[width=\linewidth]{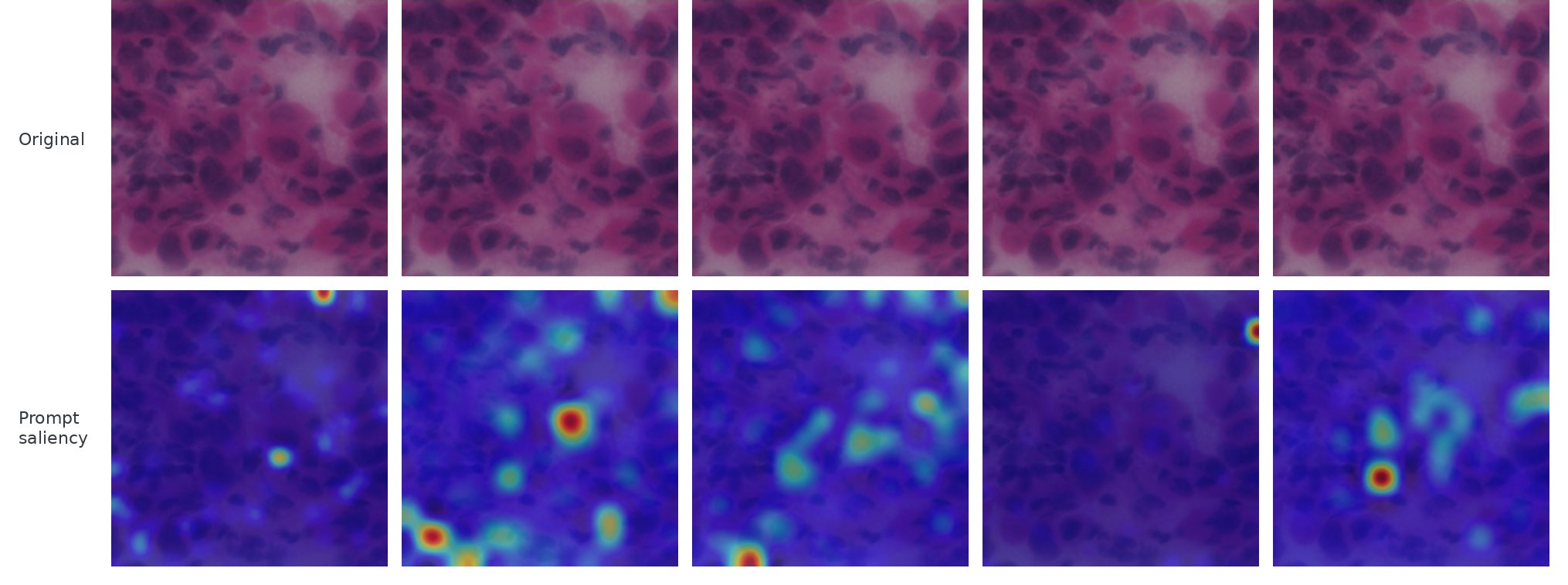}
\caption{\textbf{HEST-320 linear-probe classification.} Examples: (a) tissue class 1, (b) tissue class 4, and (c) tissue class 7. In panel (a), the proposed technique's responses include several locations within and around the darker cellular tissue. BMC, BMCA, and UniMed-CLIP emphasize different portions of the field, while ConceptCLIP shows fewer prominent foci. Because the target is a tissue-class description, these responses concern tissue context and should not be interpreted as cell-level segmentation. Heatmaps explain image--prompt similarity, not the linear-probe class score. Saliency construction follows Section~\ref{app:comparative-saliency-method}.}
\label{fig:saliency-hest-320}
\end{minipage}
\end{figure}
\clearpage

\begin{figure}[p]
\centering\begin{minipage}{5.5in}
\centering
\saliencycolumns
\vspace{3pt}
{\footnotesize (a)}\\[1pt]
\includegraphics[width=\linewidth]{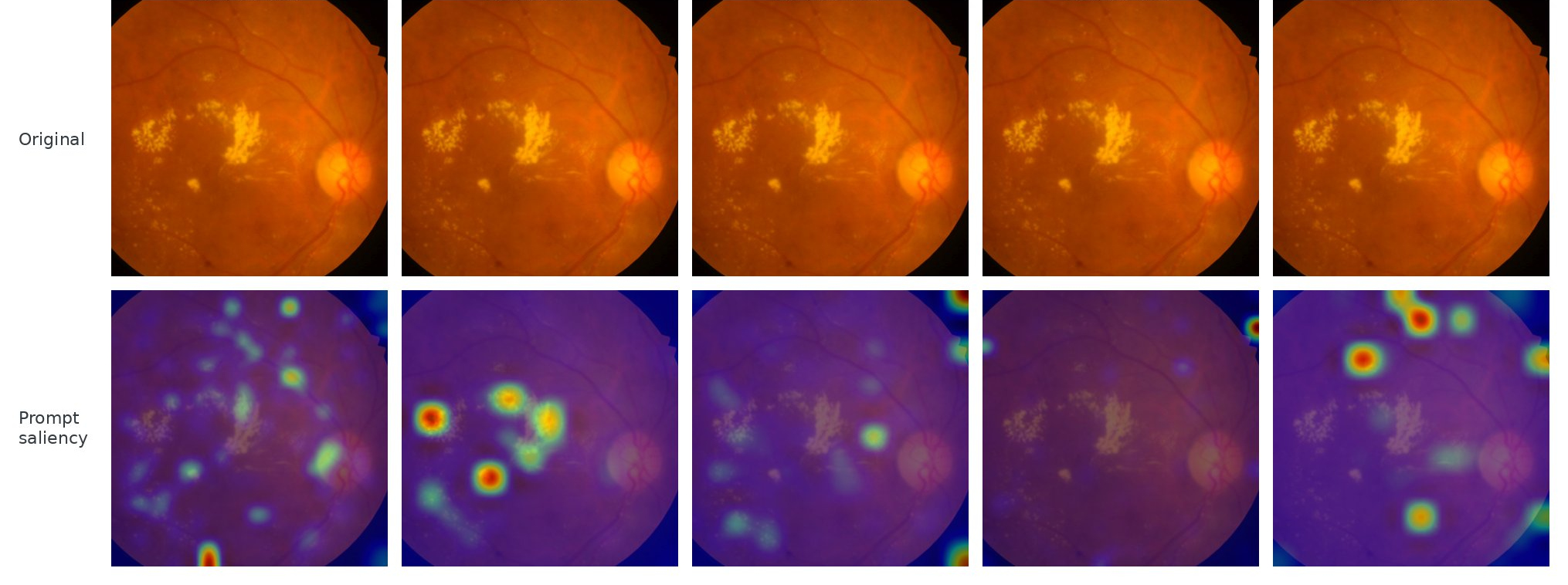}\\[4pt]
{\footnotesize (b)}\\[1pt]
\includegraphics[width=\linewidth]{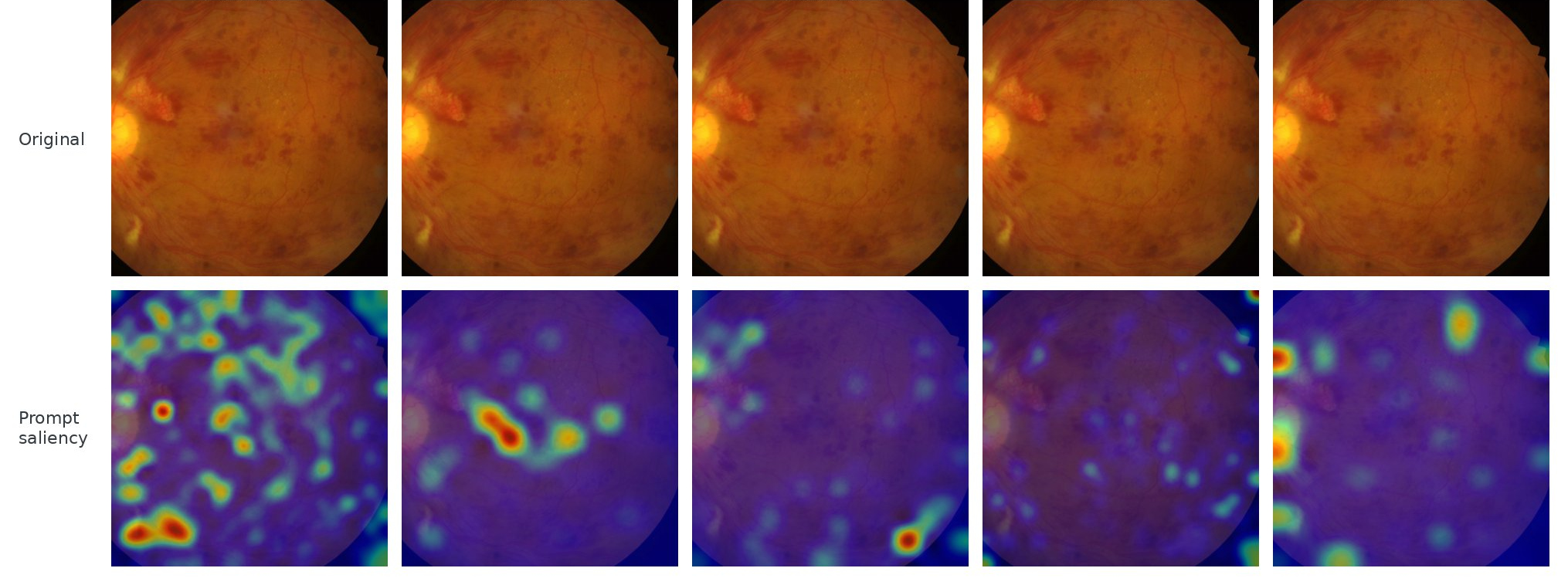}\\[4pt]
{\footnotesize (c)}\\[1pt]
\includegraphics[width=\linewidth]{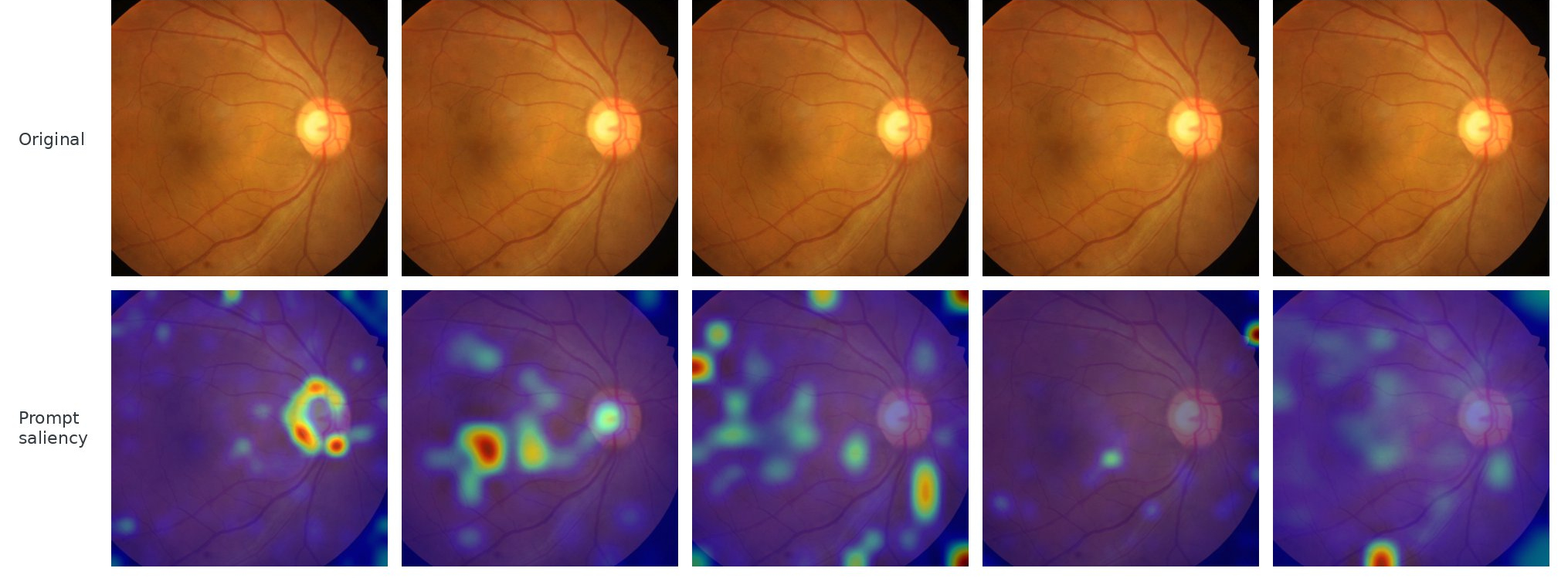}
\caption{\textbf{IDRiD linear-probe classification.} Examples: (a) moderate diabetic retinopathy, (b) proliferative diabetic retinopathy, and (c) no diabetic retinopathy. In panel (a), the proposed technique distributes responses across several retinal locations. BMC includes prominent foci around the central bright retinal markings, while BMCA, ConceptCLIP, and UniMed-CLIP produce different spatial patterns. The maps illustrate text-conditioned responses for a retinopathy example; the accompanying linear probe uses image features without a text prompt. Heatmaps explain image--prompt similarity, not the linear-probe class score. Saliency construction follows Section~\ref{app:comparative-saliency-method}.}
\label{fig:saliency-idrid}
\end{minipage}
\end{figure}
\clearpage

\begin{figure}[p]
\centering\begin{minipage}{5.5in}
\centering
\saliencycolumns
\vspace{3pt}
{\footnotesize (a)}\\[1pt]
\includegraphics[width=\linewidth]{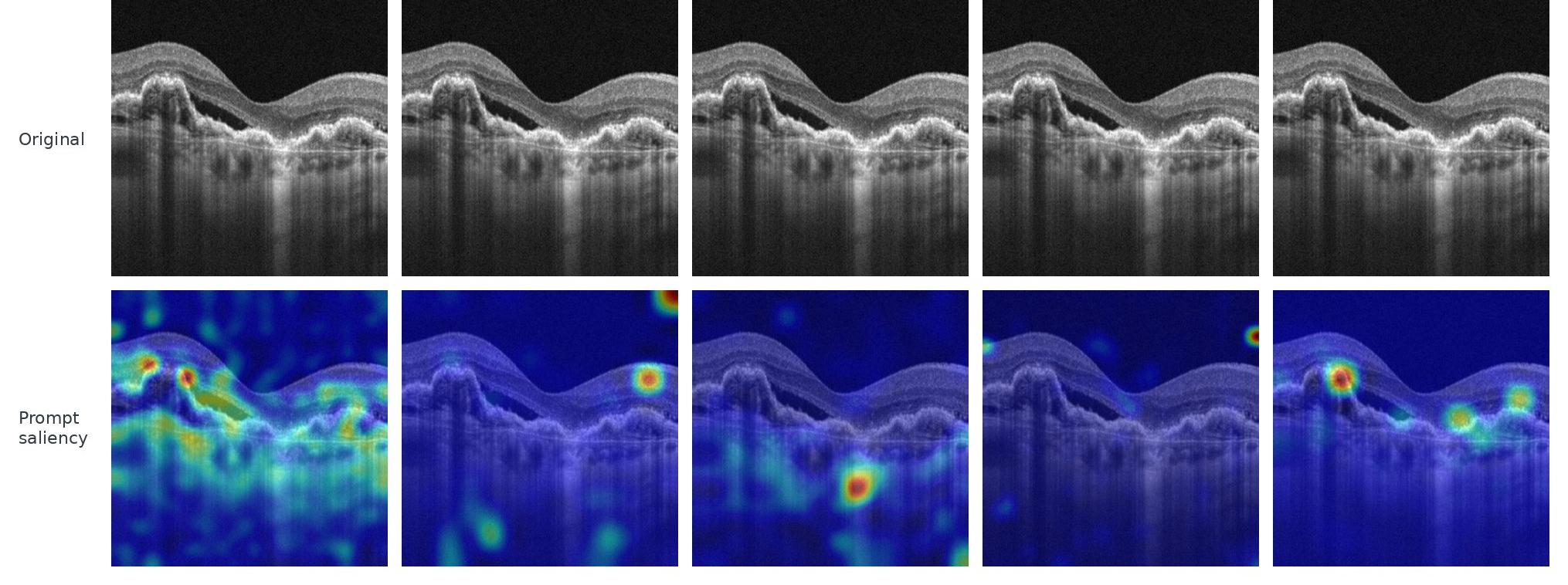}\\[4pt]
{\footnotesize (b)}\\[1pt]
\includegraphics[width=\linewidth]{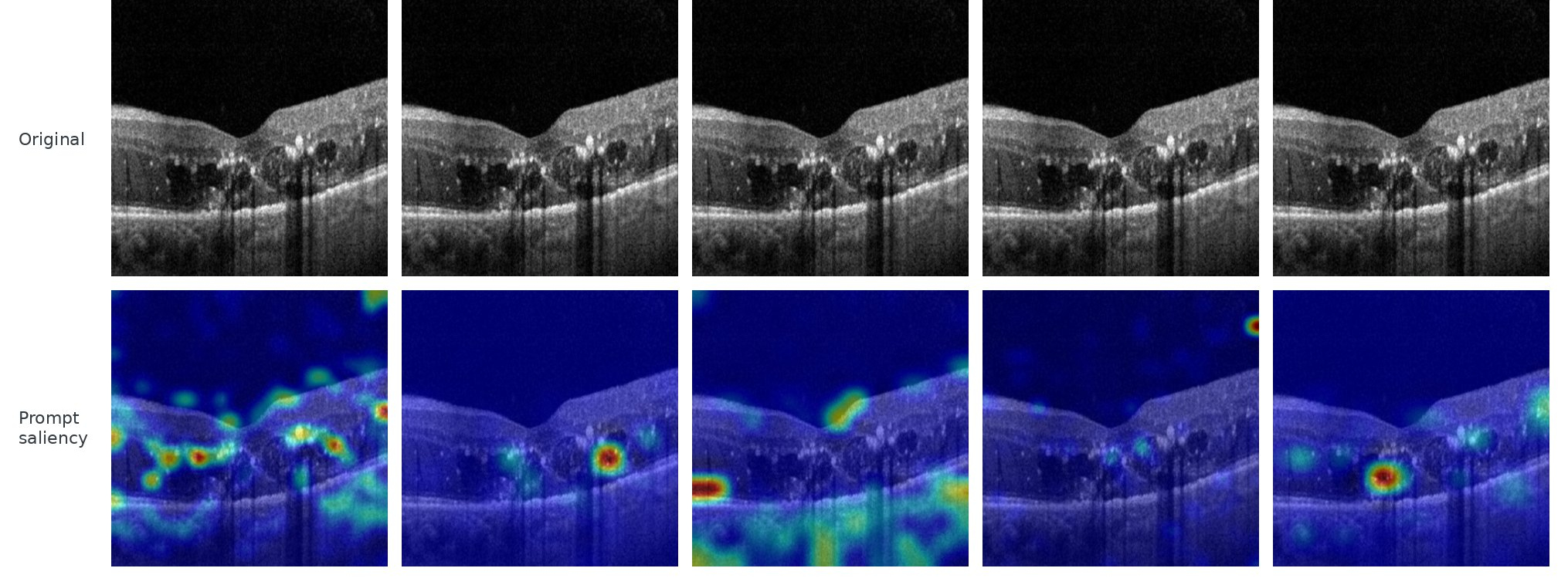}\\[4pt]
{\footnotesize (c)}\\[1pt]
\includegraphics[width=\linewidth]{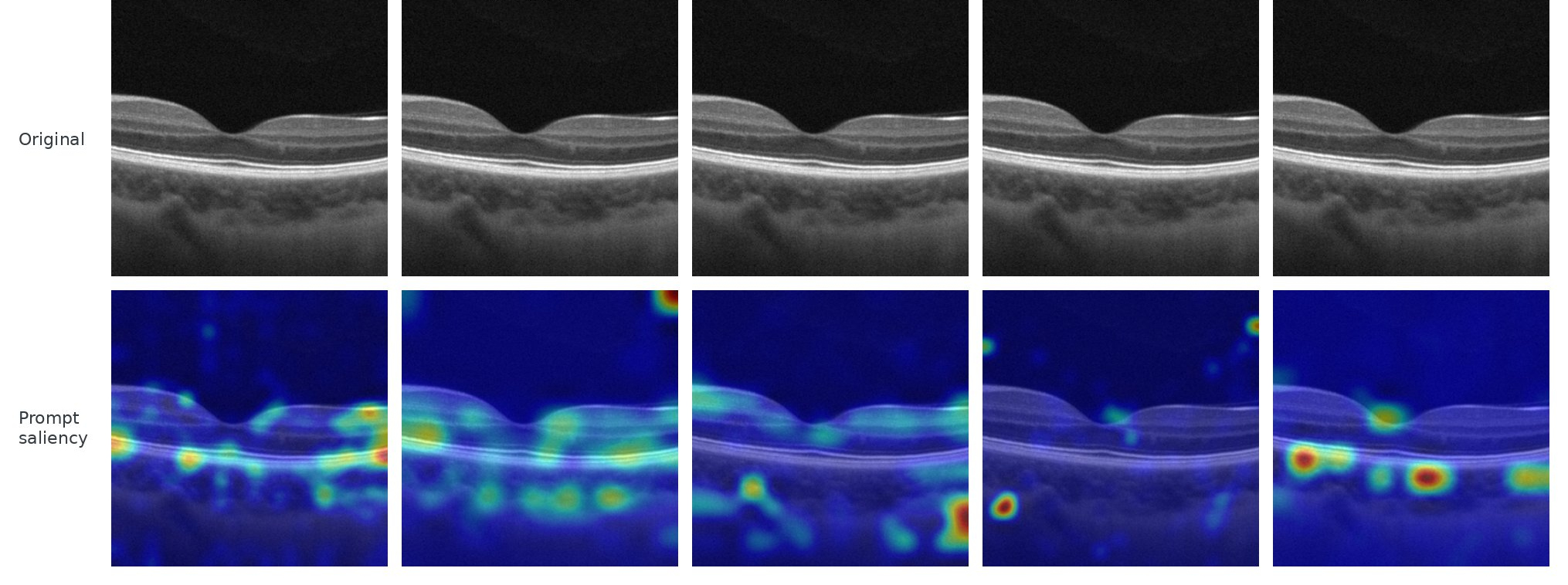}
\caption{\textbf{OCT-DL linear-probe classification.} Examples: (a) age-related macular degeneration, (b) diabetic macular edema, and (c) normal OCT. In panel (a), the proposed technique's response extends along portions of the visible retinal profile and into the region beneath it. BMC, BMCA, ConceptCLIP, and UniMed-CLIP emphasize different locations within the same cross-section. This comparison shows how the disease-related text prompt interacts with retinal structure, without supplying retinal-layer or lesion segmentation. Heatmaps explain image--prompt similarity, not the linear-probe class score. Saliency construction follows Section~\ref{app:comparative-saliency-method}.}
\label{fig:saliency-oct-dl}
\end{minipage}
\end{figure}
\clearpage

\begin{figure}[p]
\centering\begin{minipage}{5.5in}
\centering
\saliencycolumns
\vspace{3pt}
{\footnotesize (a)}\\[1pt]
\includegraphics[width=\linewidth]{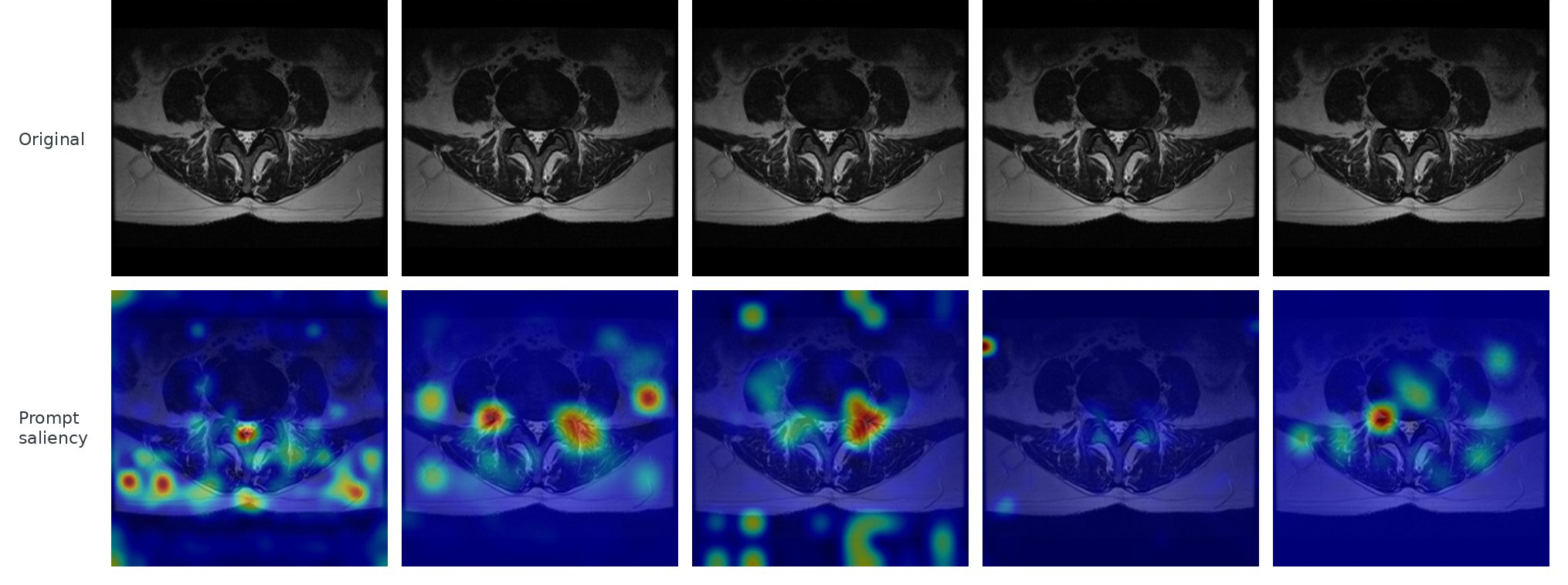}\\[4pt]
{\footnotesize (b)}\\[1pt]
\includegraphics[width=\linewidth]{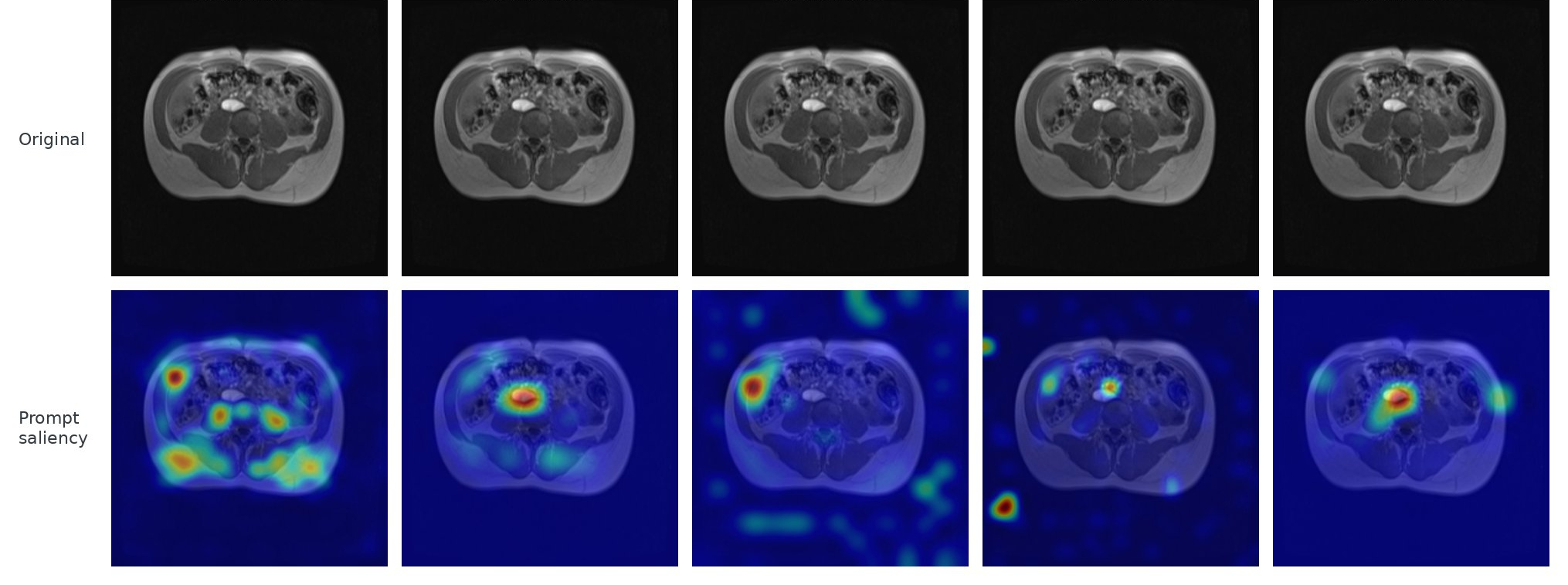}\\[4pt]
{\footnotesize (c)}\\[1pt]
\includegraphics[width=\linewidth]{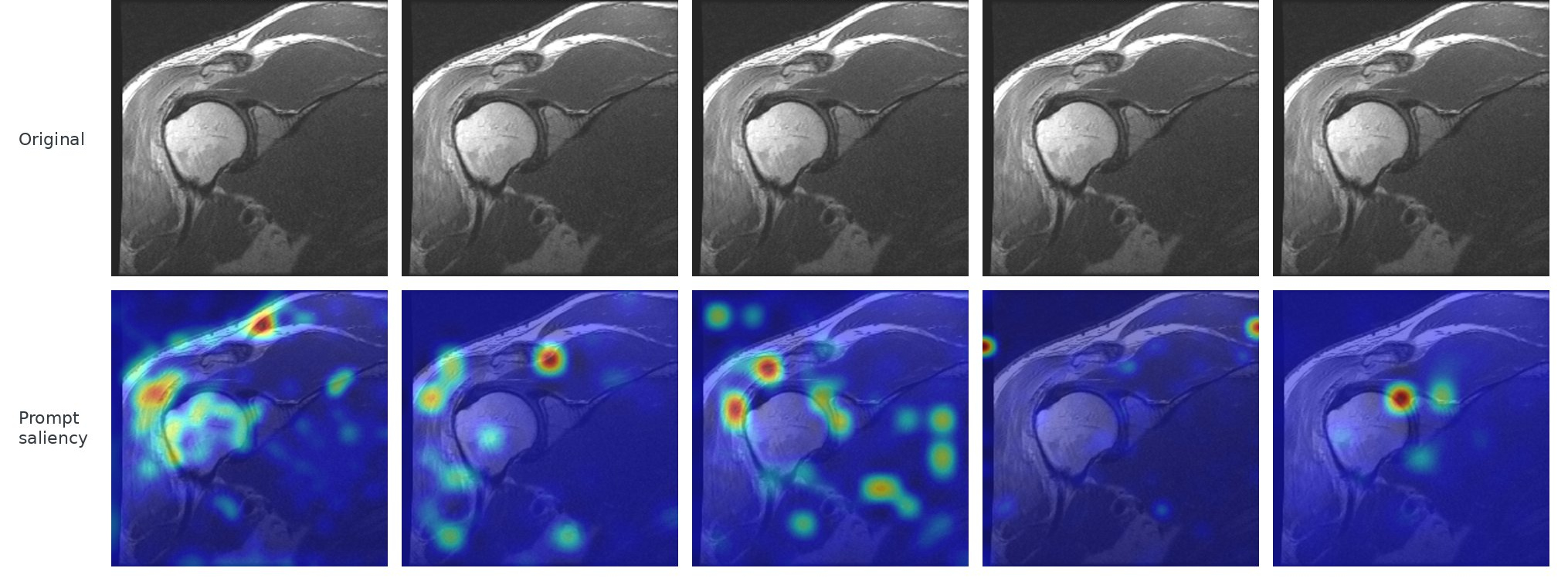}
\caption{\textbf{RadImageNet linear-probe classification.} Examples: (a) spine, (b) abdomen, and (c) shoulder. In panel (a), the proposed technique includes responses around the central spinal region and the lower image periphery. BMC, BMCA, and UniMed-CLIP emphasize different central and lateral locations, while ConceptCLIP shows a more limited pattern. The target describes an anatomical site, so these maps concern anatomy-related similarity rather than disease localization or the linear-probe decision. Heatmaps explain image--prompt similarity, not the linear-probe class score. Saliency construction follows Section~\ref{app:comparative-saliency-method}.}
\label{fig:saliency-radimagenet}
\end{minipage}
\end{figure}
\clearpage

\begin{figure}[p]
\centering\begin{minipage}{5.5in}
\centering
\saliencycolumns
\vspace{3pt}
{\footnotesize (a)}\\[1pt]
\includegraphics[width=\linewidth]{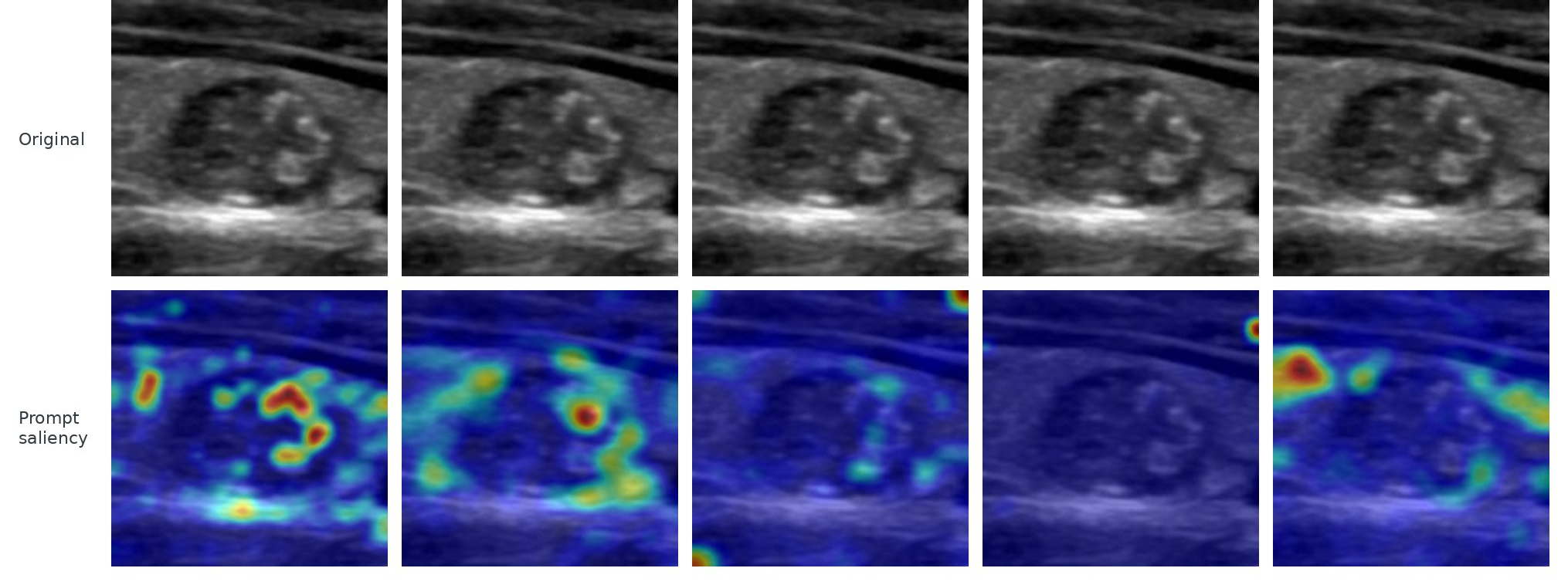}\\[4pt]
{\footnotesize (b)}\\[1pt]
\includegraphics[width=\linewidth]{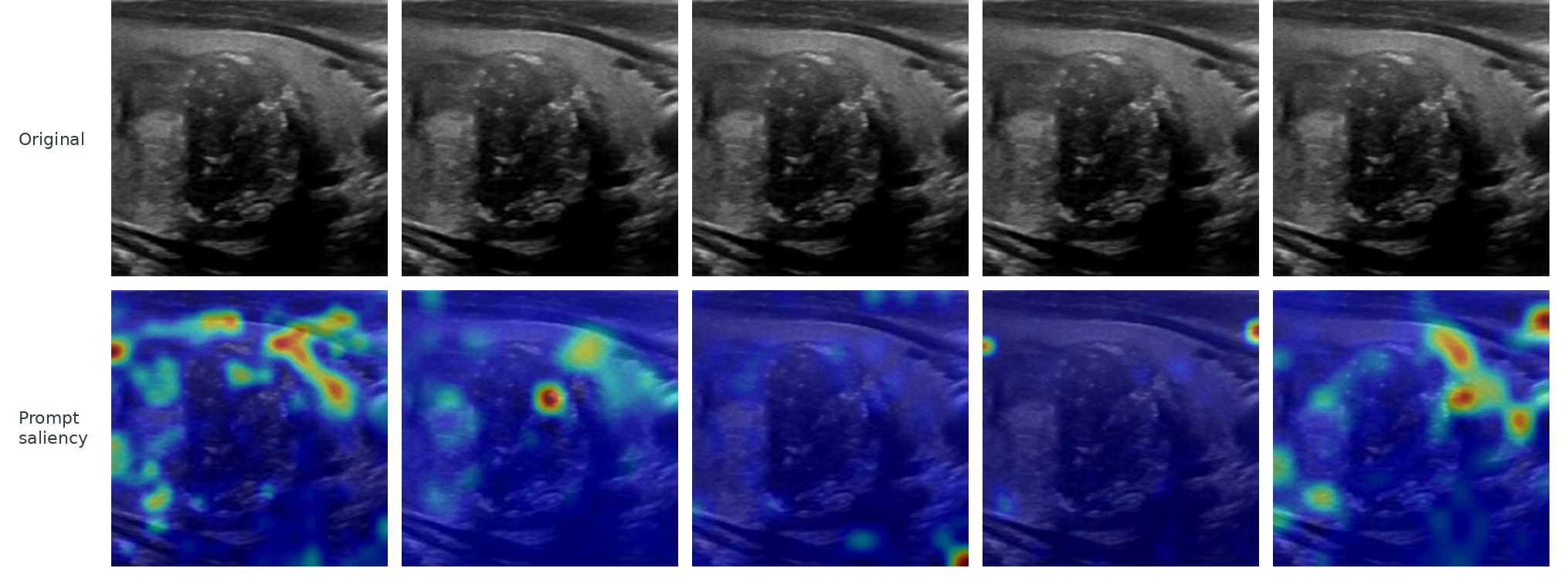}\\[4pt]
{\footnotesize (c)}\\[1pt]
\includegraphics[width=\linewidth]{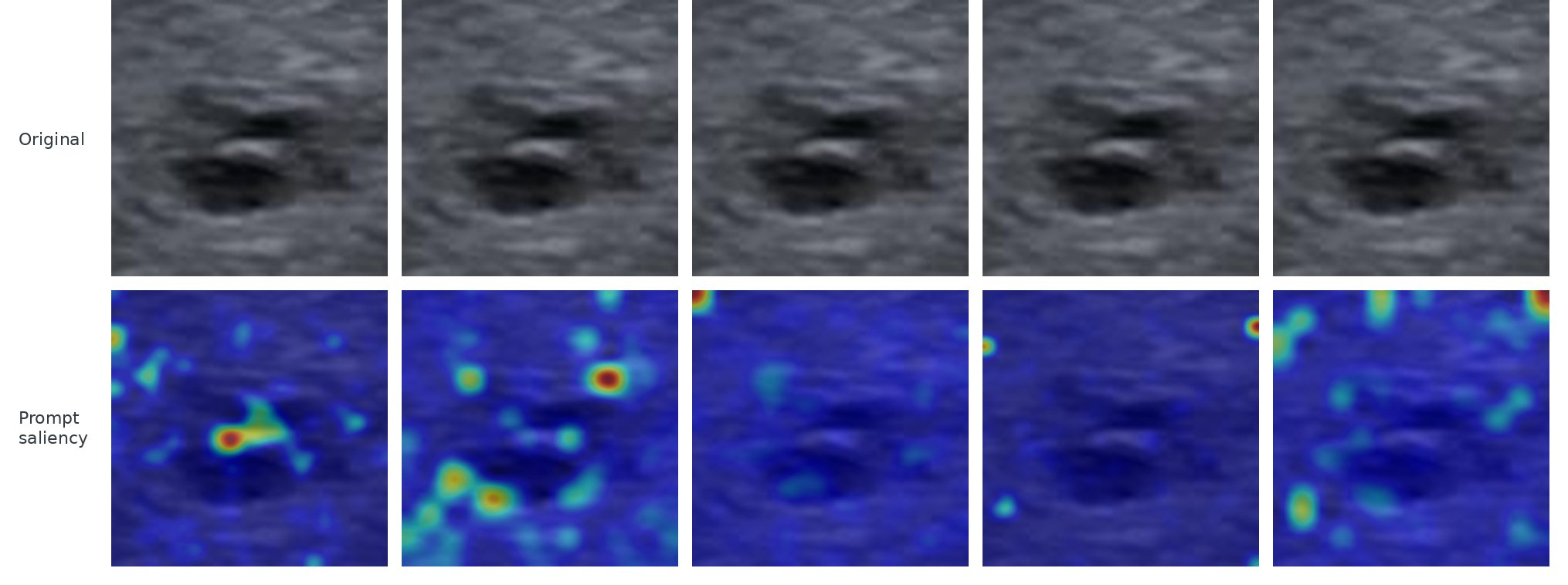}
\caption{\textbf{TN5000 linear-probe classification.} Examples: (a) malignant thyroid nodule, (b) malignant thyroid nodule, and (c) benign thyroid nodule. In panel (a), the proposed technique shows several responses within and around the central nodule appearance. BMC has a prominent central focus, while BMCA, ConceptCLIP, and UniMed-CLIP highlight different central or peripheral locations. The comparison explains similarity to the nodule-related prompt and remains separate from the image-only classifier's score. Heatmaps explain image--prompt similarity, not the linear-probe class score. Saliency construction follows Section~\ref{app:comparative-saliency-method}.}
\label{fig:saliency-tn5000}
\end{minipage}
\end{figure}
\clearpage

\begin{figure}[p]
\centering\begin{minipage}{5.5in}
\centering
\saliencycolumns
\vspace{3pt}
{\footnotesize (a)}\\[1pt]
\includegraphics[width=\linewidth]{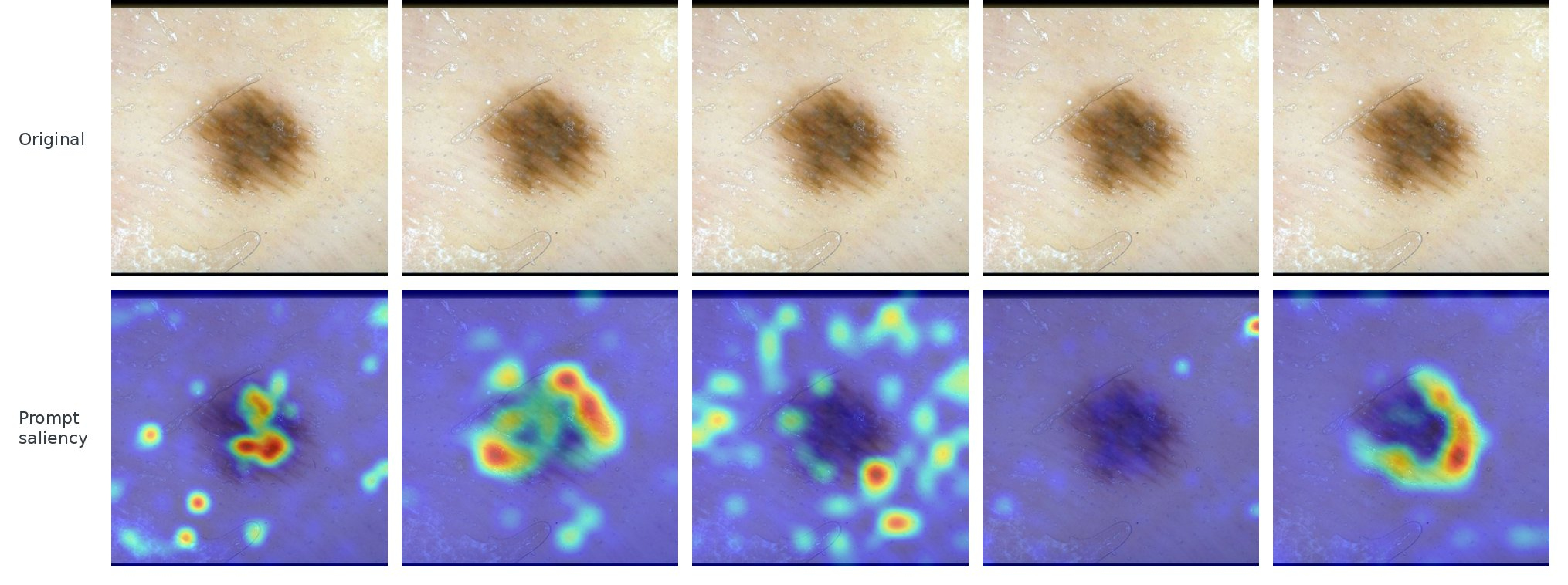}\\[4pt]
{\footnotesize (b)}\\[1pt]
\includegraphics[width=\linewidth]{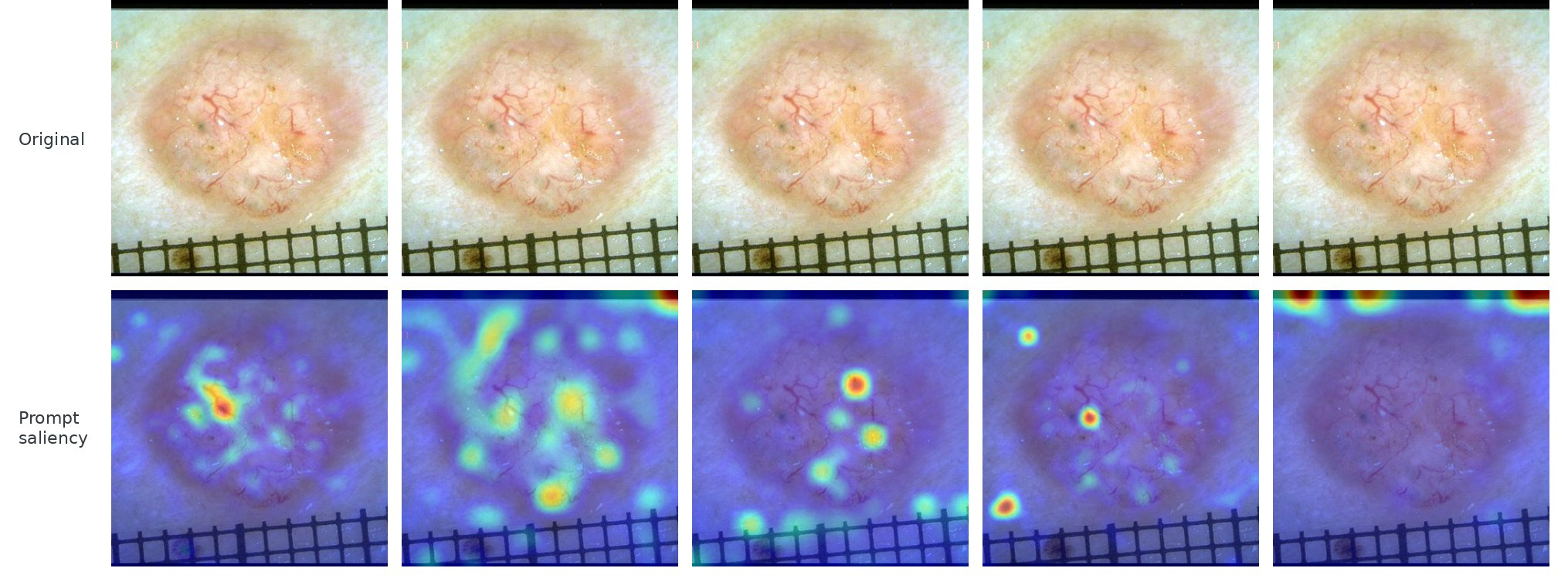}\\[4pt]
{\footnotesize (c)}\\[1pt]
\includegraphics[width=\linewidth]{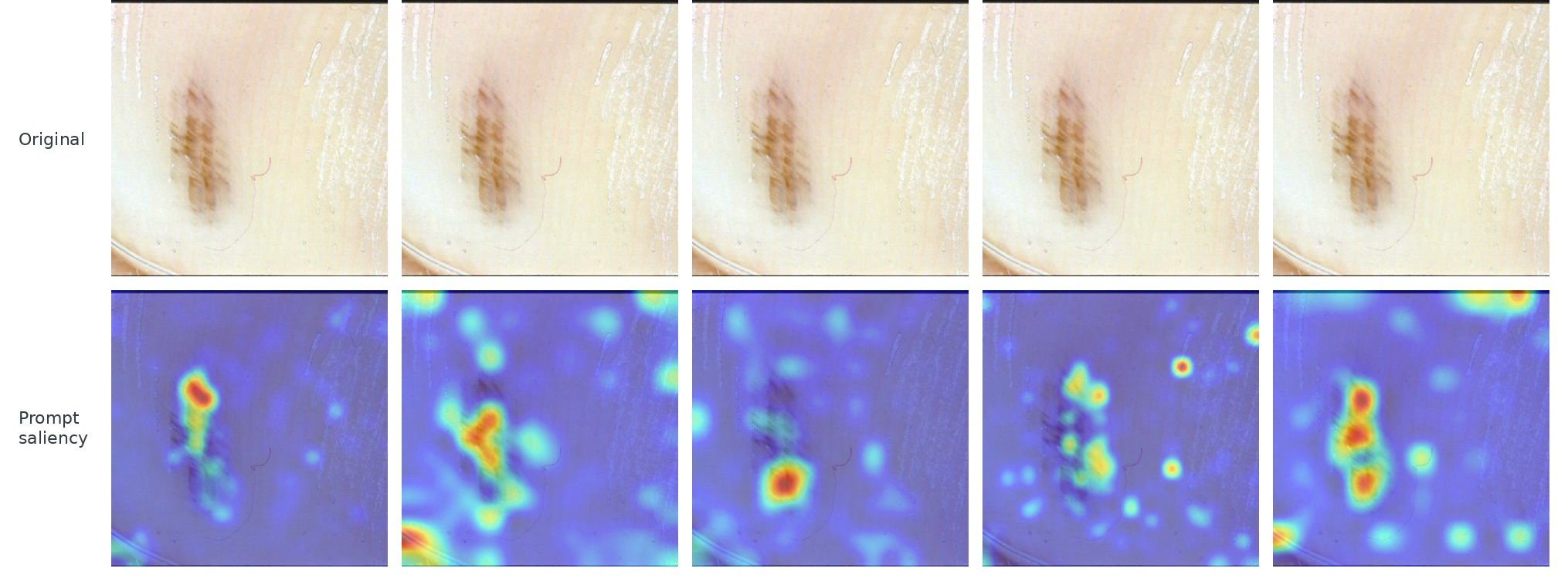}
\caption{\textbf{Derm7pt image-to-text retrieval.} Examples: (a) Clark nevus, (b) basal cell carcinoma, and (c) Clark nevus. In panel (a), the proposed technique, BMC, and UniMed-CLIP include responses within the pigmented lesion, with different spatial extents. BMCA produces several responses outside the lesion, while ConceptCLIP shows limited lesion-associated emphasis. These are responses to the paired text; their visual plausibility alone does not determine whether that text is ranked correctly in the retrieval gallery. Heatmaps use the paired caption or report. Saliency construction follows Section~\ref{app:comparative-saliency-method}.}
\label{fig:saliency-derm7pt}
\end{minipage}
\end{figure}
\clearpage

\begin{figure}[p]
\centering\begin{minipage}{5.5in}
\centering
\saliencycolumns
\vspace{3pt}
{\footnotesize (a)}\\[1pt]
\includegraphics[width=\linewidth]{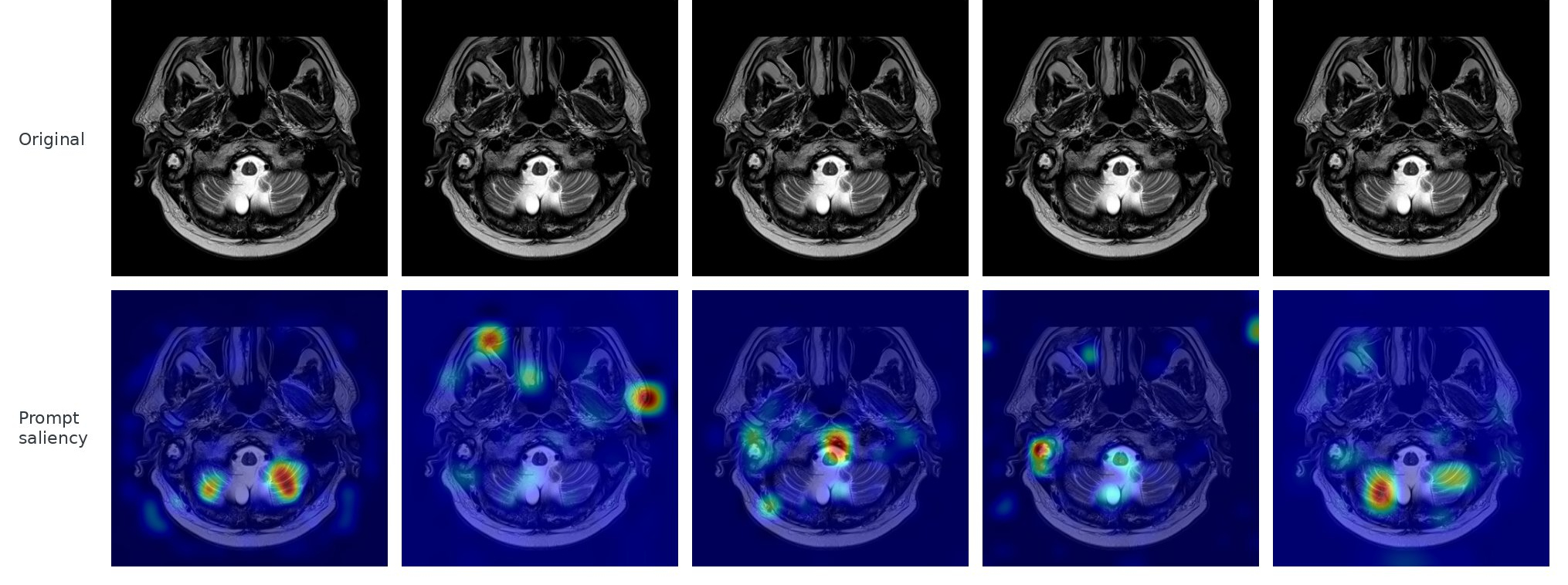}\\[4pt]
{\footnotesize (b)}\\[1pt]
\includegraphics[width=\linewidth]{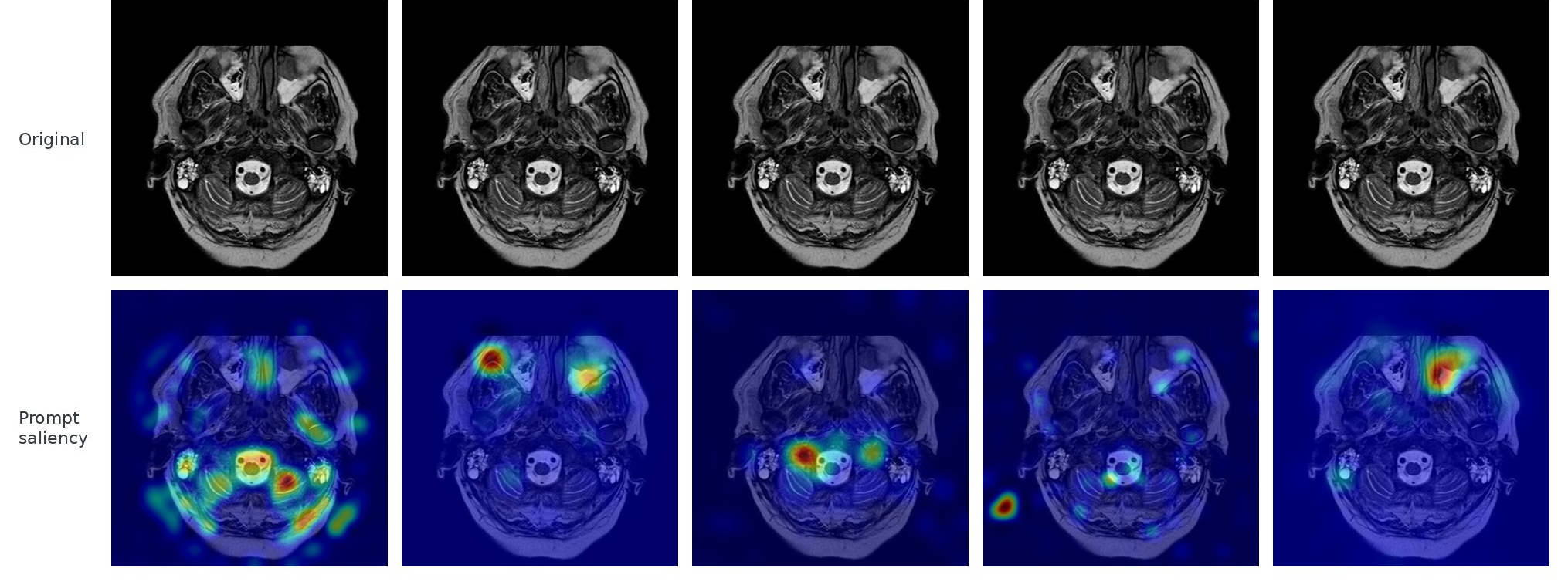}\\[4pt]
{\footnotesize (c)}\\[1pt]
\includegraphics[width=\linewidth]{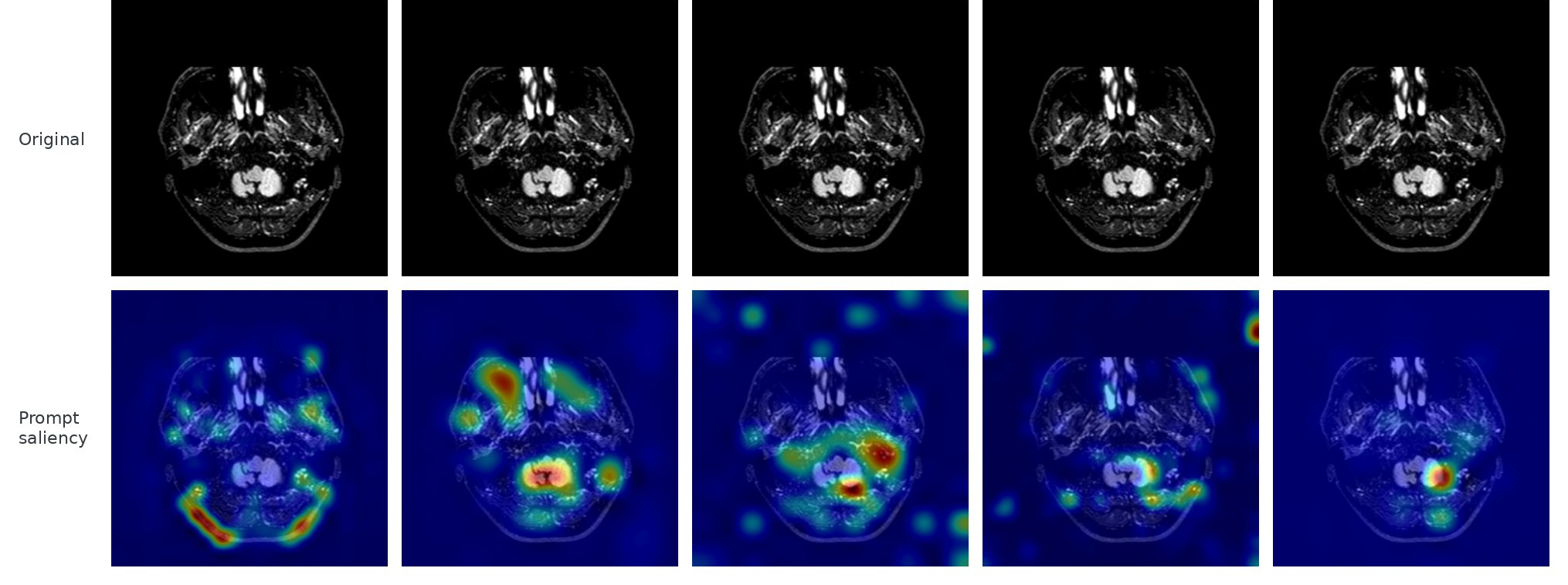}
\caption{\textbf{MR-RATE image-to-text retrieval.} Examples: (a) ischemic changes and atrophy, (b) bone/dural metastases, and (c) cranial MRI within normal limits. In panel (a), the proposed technique includes responses within the posterior portion of the displayed cranial image. BMC emphasizes more anterior and peripheral locations, while BMCA, ConceptCLIP, and UniMed-CLIP highlight different posterior or lateral areas. The target is the paired report, so the map should be read as a response to its full description rather than to one diagnostic term. Heatmaps use the paired caption or report. Saliency construction follows Section~\ref{app:comparative-saliency-method}.}
\label{fig:saliency-mr-rate}
\end{minipage}
\end{figure}
\clearpage

\begin{figure}[p]
\centering\begin{minipage}{5.5in}
\centering
\saliencycolumns
\vspace{3pt}
{\footnotesize (a)}\\[1pt]
\includegraphics[width=\linewidth]{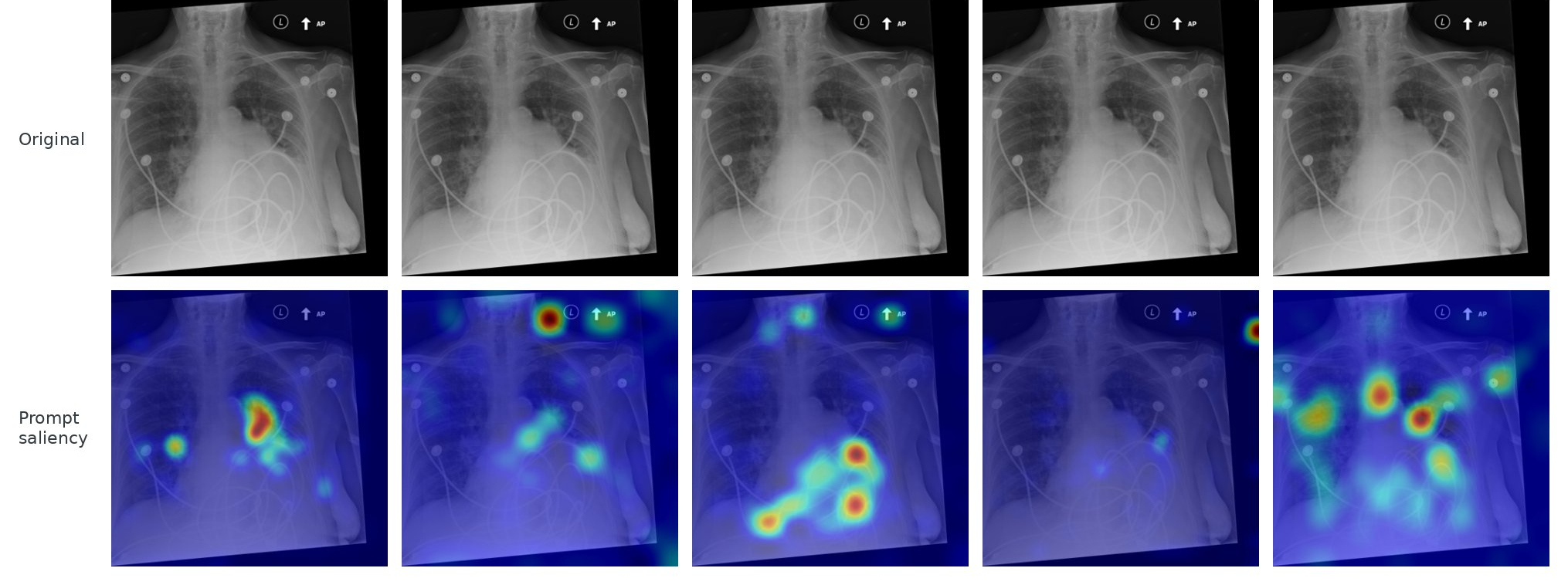}\\[4pt]
{\footnotesize (b)}\\[1pt]
\includegraphics[width=\linewidth]{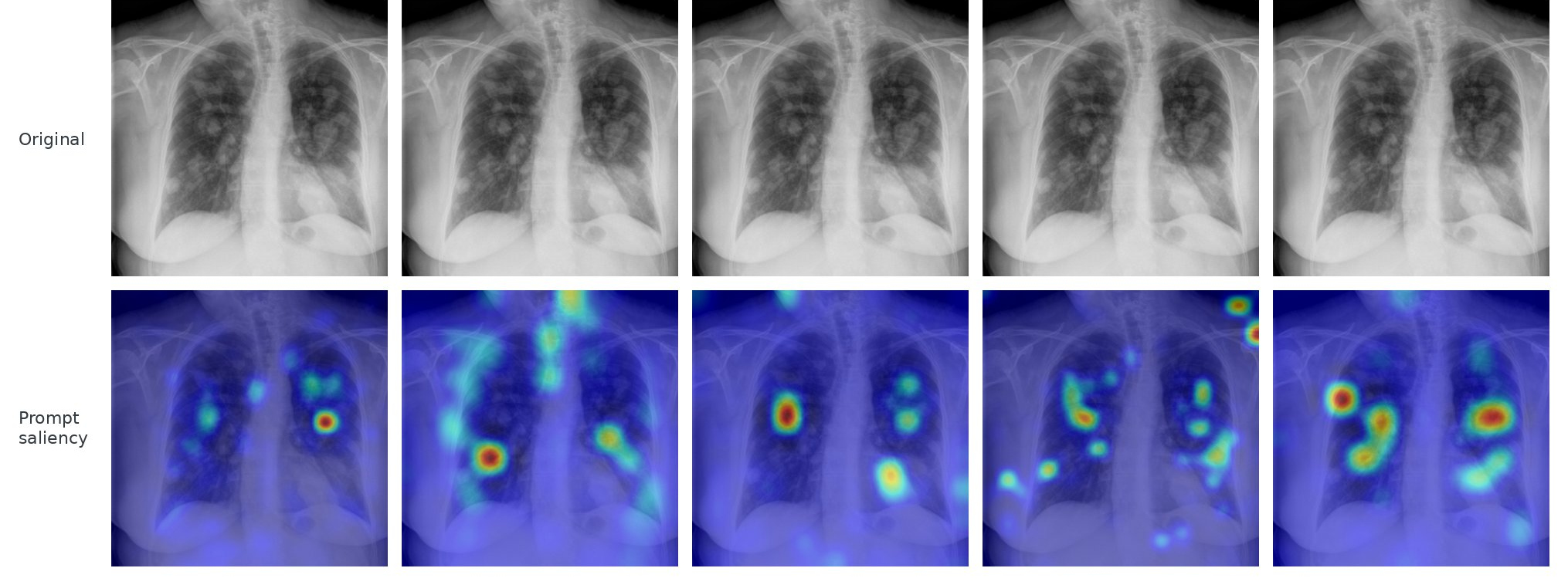}\\[4pt]
{\footnotesize (c)}\\[1pt]
\includegraphics[width=\linewidth]{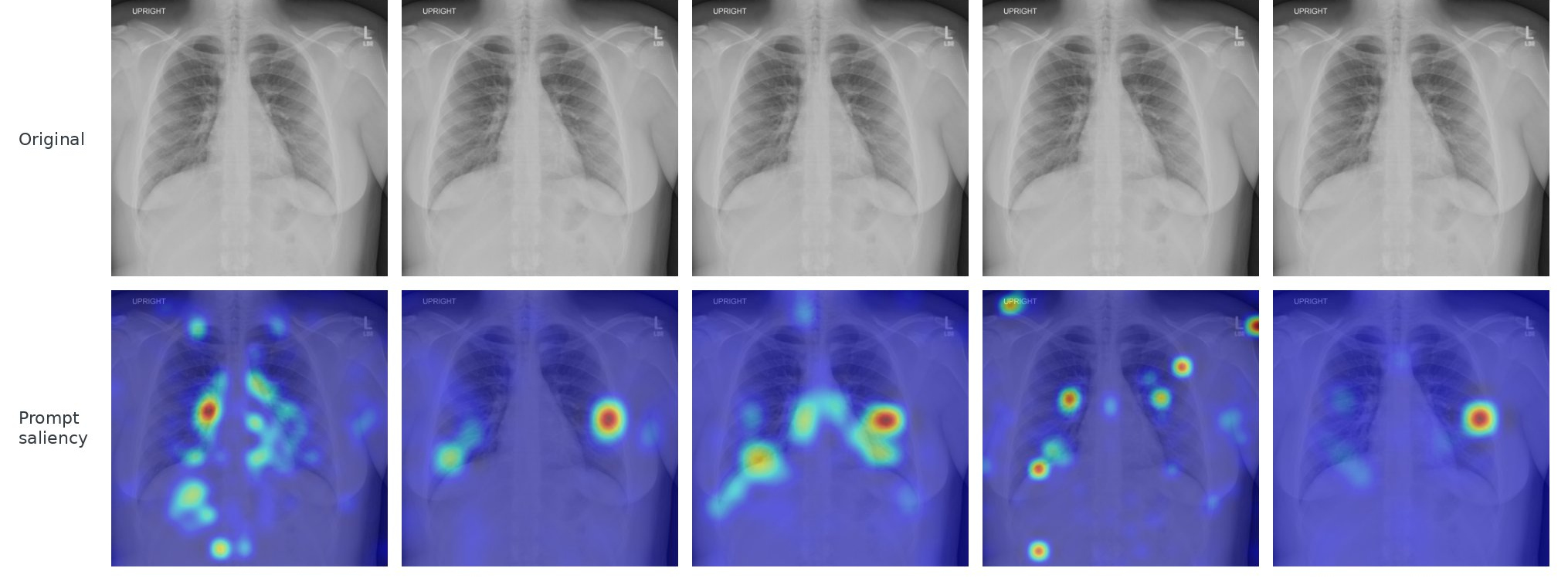}
\caption{\textbf{OpenI image-to-text retrieval.} Examples: (a) cardiomegaly and basilar opacity, (b) pulmonary nodules/mass, and (c) normal chest report. In panel (a), the proposed technique includes a prominent response near the central thorax. BMC emphasizes upper and central locations, BMCA includes responses lower in the thorax, ConceptCLIP shows a peripheral focus, and UniMed-CLIP distributes responses across several thoracic areas. The comparison illustrates different spatial responses to the same reference report without validating individual findings in that report. Heatmaps use the paired caption or report. Saliency construction follows Section~\ref{app:comparative-saliency-method}.}
\label{fig:saliency-openi}
\end{minipage}
\end{figure}
\clearpage

\begin{figure}[p]
\centering\begin{minipage}{5.5in}
\centering
\saliencycolumns
\vspace{3pt}
{\footnotesize (a)}\\[1pt]
\includegraphics[width=\linewidth]{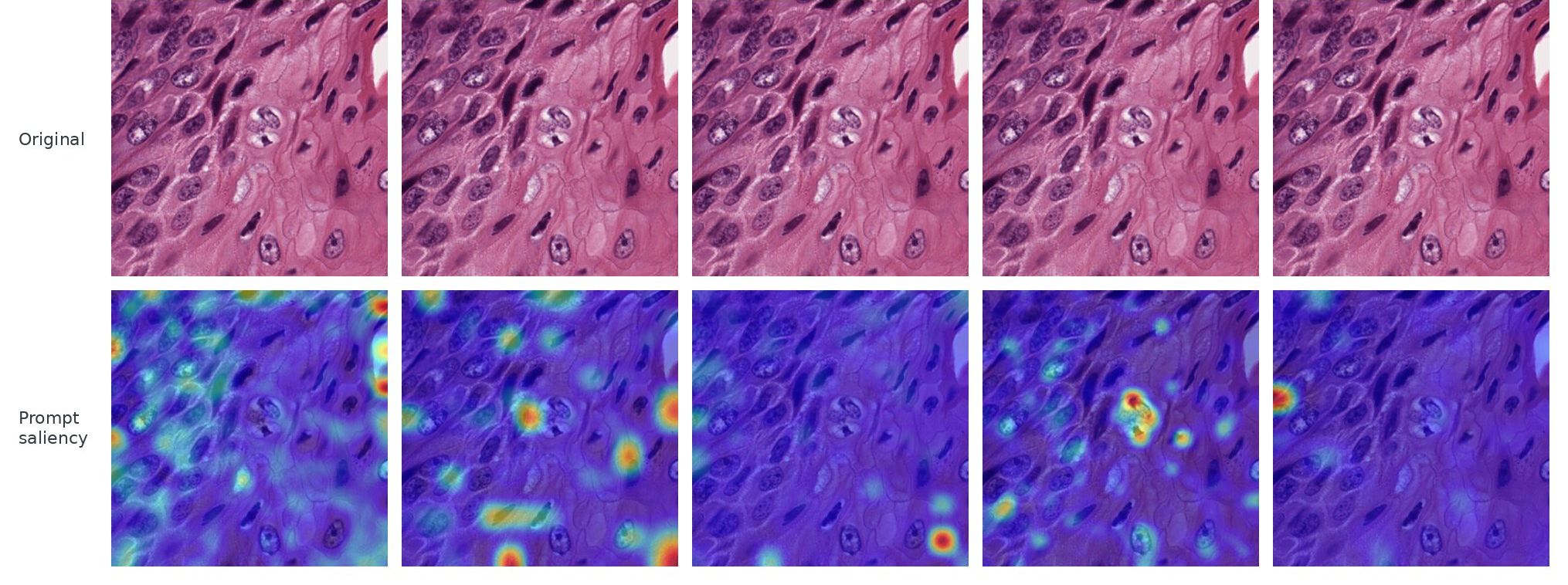}\\[4pt]
{\footnotesize (b)}\\[1pt]
\includegraphics[width=\linewidth]{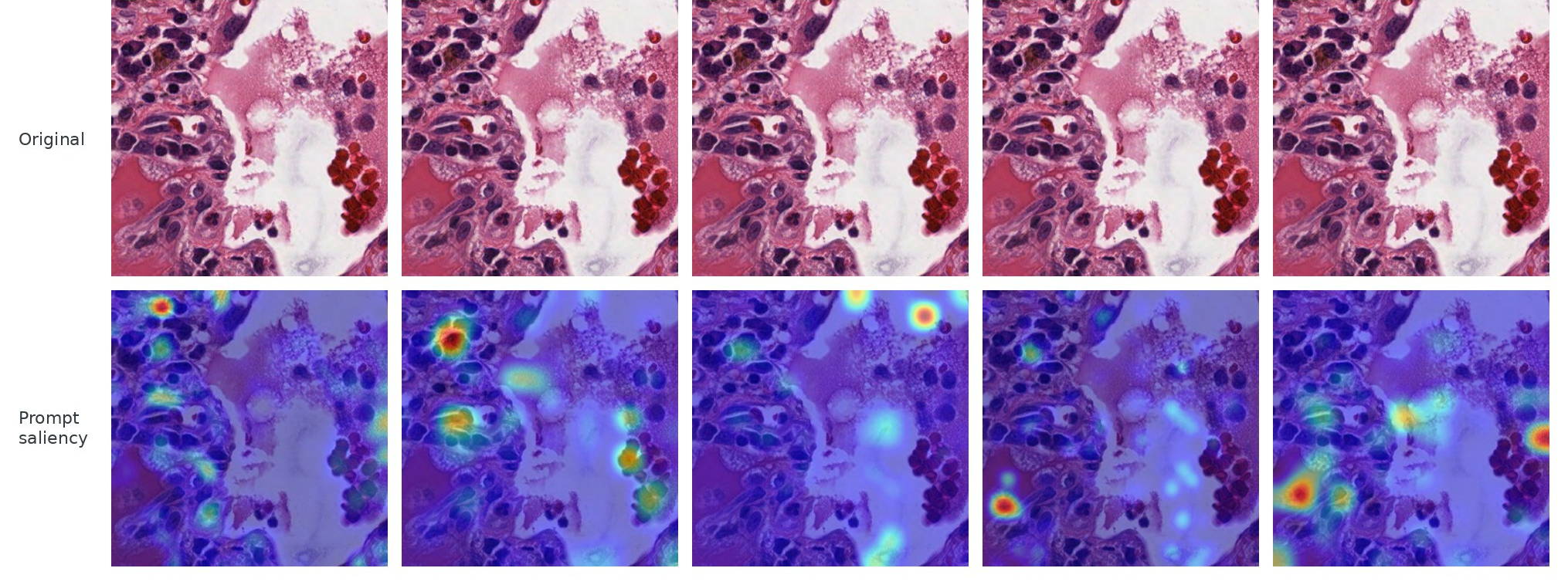}\\[4pt]
{\footnotesize (c)}\\[1pt]
\includegraphics[width=\linewidth]{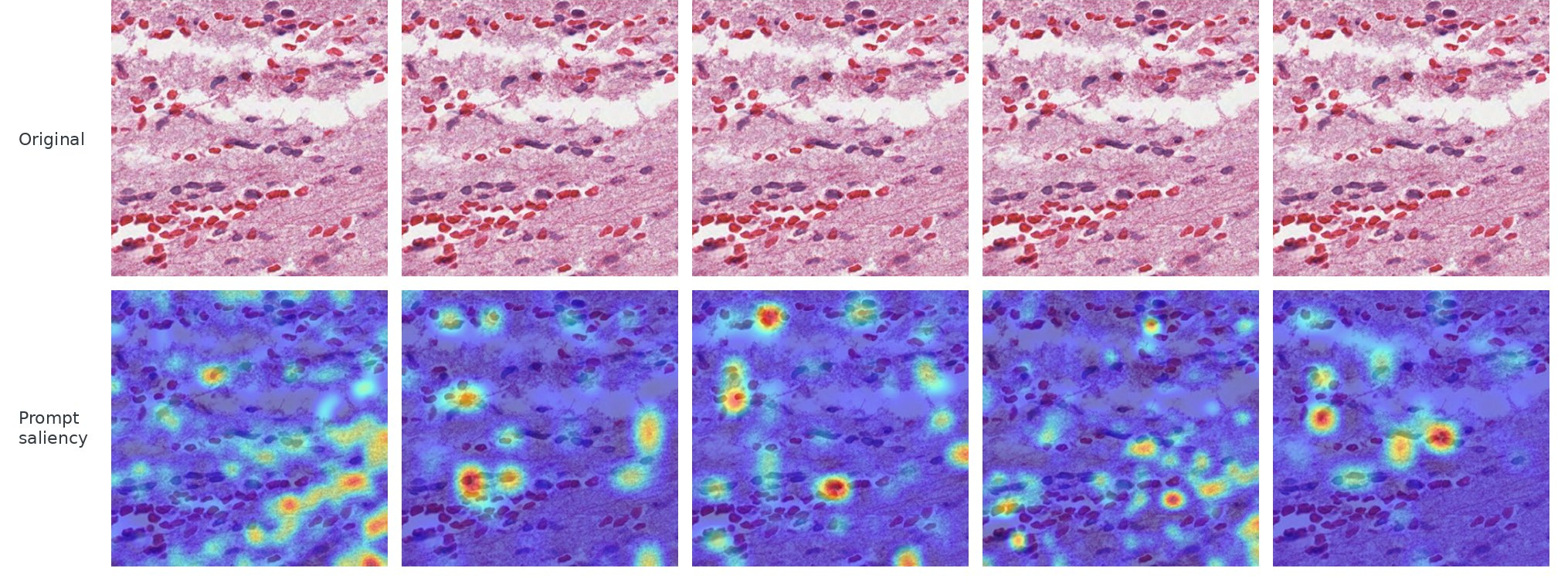}
\caption{\textbf{PathGen-600 image-to-text retrieval.} Examples: (a) papilloma / HPV-associated changes, (b) granulomatous inflammation, and (c) normal brain tissue. In panel (a), the proposed technique distributes responses among several cellular regions in the histology field. BMC and ConceptCLIP emphasize different groups of locations, while BMCA and UniMed-CLIP include prominent peripheral foci. These maps explain similarity to the paired pathology text and should not be interpreted as instance-level annotation of the reported cellular changes. Heatmaps use the paired caption or report. Saliency construction follows Section~\ref{app:comparative-saliency-method}.}
\label{fig:saliency-pathgen-600}
\end{minipage}
\end{figure}
\clearpage

\end{document}